\documentclass{article}

\PassOptionsToPackage{numbers, sort&compress}{natbib}

\usepackage{iclr2027/iclr2027_conference,times}

\usepackage{amsmath,amsfonts,bm}

\def\eqref#1{equation~\ref{#1}}

\def\1{\bm{1}}

\DeclareMathAlphabet{\mathsfit}{\encodingdefault}{\sfdefault}{m}{sl}
\SetMathAlphabet{\mathsfit}{bold}{\encodingdefault}{\sfdefault}{bx}{n}

\usepackage{etoolbox}
\newbool{published}
\setbool{published}{true} 

\usepackage{titletoc}

\definecolor{mygreen}{RGB}{0,120,70}
\definecolor{myblue}{RGB}{0,70,150}

\usepackage{calc}
\usepackage{multirow}

\newcommand{\mathcolorbox}[2]{%
  \mathchoice
    {\colorbox{#1}{$\displaystyle #2$}}%
    {\colorbox{#1}{$\textstyle #2$}}%
    {\colorbox{#1}{$\scriptstyle #2$}}%
    {\colorbox{#1}{$\scriptscriptstyle #2$}}%
}

\usepackage[utf8]{inputenc} 
\usepackage[T1]{fontenc}    
\usepackage{hyperref}       
\usepackage{url}            
\usepackage{booktabs}       
\usepackage{amsfonts}       
\usepackage{nicefrac}       
\usepackage{microtype}      
\usepackage{xcolor}         

\usepackage{braket}
\usepackage{tabularx,makecell,array}

\usepackage{upgreek}
\usepackage{bm}
\usepackage{subcaption}
\usepackage{comment} 

\def\vocab{\mathbb{V}}
\def\m{\mathbf{m}}
\def\n{\mathbf{n}}
\def\k{\mathbf{k}}
\def\l{\mathbf{l}}
\def\bpsi{\bm{\uppsi}}
\def\mc#1{\mathcal{#1}}
\def\mb#1{\mathbb{#1}}

\def\NumToken{N}
\def\tr{\text{Tr}}
\def\op#1{{#1}}

\usepackage{amsmath}
\usepackage{amssymb}
\usepackage{mathtools}
\usepackage{amsthm}

\usepackage[capitalize,noabbrev]{cleveref}

\theoremstyle{plain}

\theoremstyle{definition}

\theoremstyle{remark}

\usepackage[most]{tcolorbox}
\usepackage{amsmath, amsthm, amssymb}

\definecolor{proofblue}{HTML}{DEB887}
\definecolor{proofline}{HTML}{DEB887}

\tcbset{
    academicstyle/.style={
        enhanced,
        boxrule=0pt,
        frame hidden,     
        interior hidden,  
        left=3mm,
        right=0mm,
        top=0mm,
        bottom=0mm,
        sharp corners,
        borderline west={1.5pt}{0pt}{gray!80}, 
        breakable,
        before skip=12pt,
        after skip=12pt,
    }
}

\newtcbtheorem[number within=section]{mydefinition}{Definition}{
    academicstyle,
    fonttitle=\bfseries,
    coltitle=black,
    attach title to upper,
    after title={\hspace{0.5em}}
}{def}

\crefname{tcb@cnt@mydefinition}{Definition}{Definitions}

\newtcbtheorem[use counter from=mydefinition]{mytheorem}{Theorem}{
    academicstyle,
    colback=blue!3,
    borderline west={1.5pt}{0pt}{proofblue},
    fonttitle=\bfseries,
    coltitle=black,
    attach title to upper,
    after title={\hspace{0.5em}},
    label type=theorem 
}{thm}

\crefname{tcb@cnt@mytheorem}{Theorem}{Theorems}

\usepackage[most]{tcolorbox}

\newtcolorbox{myproofbox}{
    enhanced,
    boxrule=0pt,
    frame hidden,
    interior hidden,
    left=4mm,
    right=0mm,
    top=0mm,
    bottom=2mm,
    sharp corners,
    borderline west={0.2pt}{0pt}{proofline, dashed},
    breakable,
    before skip=2pt,
    after skip=12pt,
}

\renewenvironment{proof}[1][\proofname]{
    \begin{myproofbox}
    \tcbox[
        on line, 
        sharp corners, 
        boxrule=0.5pt, 
        colframe=proofline, 
        colback=white, 
        boxsep=0pt, 
        left=3pt, right=3pt, top=2pt, bottom=2pt,
        after skip=5pt
    ]{\textit{#1}}
}{
    \qed \end{myproofbox} 
}

\title{Diffusion Language Models as Open Quantum Systems}

\author{%
  Akinori Tanaka$^{1,2,3}$, Yukinao Akamatsu$^{4}$, Takuki Tomita$^{1,3}$, Yuji Hirono$^{5}$\\
  {\small $^{1}$RIKEN Center for Advanced Intelligence Project (AIP), Japan}\\
  {\small $^{2}$RIKEN Center for Interdisciplinary Theoretical and Mathematical Sciences (iTHEMS), Japan}\\
  {\small $^{3}$Department of Mathematics, Keio University, Japan}\\
  {\small $^{4}$National Institute of Technology, Matsue College, Japan}\\
  {\small $^{5}$Institute of Systems and Information Engineering, University of Tsukuba, Japan}\\
  {\small\texttt{akinori.tanaka@riken.jp}, \texttt{akamatsu.yukinao@matsue-ct.ac.jp}}\\
  {\small\texttt{takuki.tomita@riken.jp}, \texttt{hirono@iit.tsukuba.ac.jp}}
}

\iclrfinalcopy

\begin{document}

\maketitle
\lhead{Preprint}

\begin{abstract}

Diffusion language models generate text by reversing a classical diffusion process over token sequences.
We show that their continuous-time forward and reverse dynamics can be embedded in the framework of open quantum systems, which describes quantum systems interacting with their environment.
This connection provides a basis for extending diffusion sampling to superpositions of sentence states and for modifying the underlying generation dynamics.
We construct a $\tau$-leaping unraveling scheme that simulates stochastic trajectories of these states while updating multiple token positions at each step.
This enables generation conditioned on superposed prompts or thoughts, which we call Superposition of Prompts (SoP) and Superposition of Thoughts (SoT).
The same formulation yields Quantum-Inspired Remasking (QIR): perturbing the masked-diffusion dynamics produces a confidence-based rule for revising previously generated tokens.
Experiments with pretrained diffusion language models show that superposition-based generation can retain information from conditioning components not selected at the final measurement, while QIR improves LLM-assessed text quality when applied at an appropriate stage of generation.

\end{abstract}

\section{Introduction}
\label{sec:intro}

Diffusion models have achieved remarkable success in generative tasks such as image and video synthesis \citep{Sohl2015DeepUL, Ho2020DenoisingDP, Song2019GenerativeMB, Song2020ScoreBasedGM, Dhariwal2021DiffusionMB, Rombach2021HighResolutionIS, Ho2022VideoDM, Singer2022MakeAVideoTG}.
Diffusion-based approaches are also increasingly being explored for text generation as an alternative to autoregressive models \citep{Sun2022ScorebasedCD, Lou2023DiscreteDM, Xu2024EnergyBasedDL, Ou2024YourAD, Sahoo2025TheDD, Nie2025LargeLD, Zhao2025d1SR}, including recent commercial systems \citep{Khanna2025MercuryUL, Song2025SeedDA}.

From a theoretical point of view, diffusion models are deeply connected to stochastic processes in non-equilibrium statistical mechanics \citep{Sohl2015DeepUL}. 
This connection has motivated a growing body of work that analyzes diffusion models using concepts and formulations developed in physics \citep{Hirono2024UnderstandingDM, Yu2024NonequilbriumPO, Yaguchi2025TheGO, Ikeda2024SpeedaccuracyTF}. 
Conventional diffusion models can be viewed as analogs of \textit{classical} thermal systems, whose dynamics can be described within a more general \textit{quantum} framework.

In this paper, we formulate diffusion language models within the framework of \textit{open quantum systems}
\citep{Breuer2002, Lindblad1976, Gorini1976}.
Specifically, we show that their continuous-time dynamics can be embedded in the Lindblad equation, which describes the evolution of a quantum system interacting with its environment.
This formulation opens the way to extensions of diffusion sampling based on concepts from quantum physics.
One such concept is \textit{superposition}.
As illustrated in \cref{fig:concept_oqs}, distinct sentences can be represented as basis states, and a linear combination of these states defines a superposition state $|\bpsi_t\rangle$.
Superposition is fundamentally different from concatenation: the sentences remain distinct alternatives within a quantum state, rather than becoming successive parts of a longer text.
The corresponding density operator evolves according to the Lindblad equation.
This representation introduces a form of generation in which multiple sentence components and their relative weights evolve together, with the components influencing one another rather than evolving as independent samples.

\begin{figure}[htbp]
    \centering
    \begin{minipage}{0.5\textwidth}
        \includegraphics[width=\textwidth]{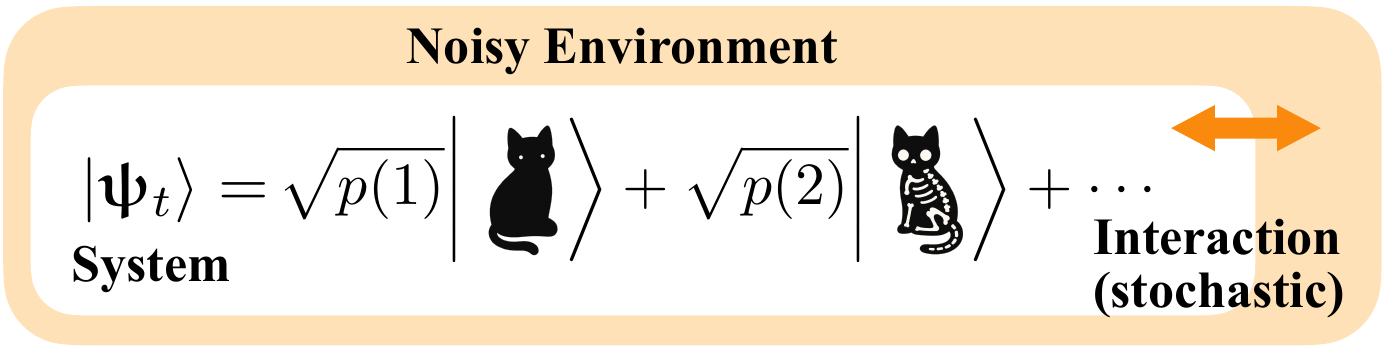}
    \end{minipage}
    \hspace{1em}
    \begin{minipage}{0.45\textwidth}
        \caption{A superposition of sentence states viewed as an open quantum system interacting with its environment.}
        \label{fig:concept_oqs}
    \end{minipage}
\end{figure}

We develop a sampling procedure based on $\tau$-leaping unraveling, which evolves superpositions of sentence states by updating multiple token positions in each step.
It builds on \textit{stochastic unraveling}, an established technique for representing Lindblad dynamics through stochastic pure-state trajectories \citep{Dalibard1992, Dum1992, Carmichael1993}.
The idea is analogous to simulating trajectories of a stochastic differential equation instead of directly solving the corresponding Fokker--Planck equation.
A final measurement selects an output sentence according to the Born rule.
The same formulation also leads to quantum-inspired remasking: a perturbation of the masked-diffusion dynamics yields a confidence-based rule that preferentially remasks tokens the model is less confident in reconstructing from their context.

Our main contributions are summarized as follows:
{
\setlength{\leftmargini}{2em}
\begin{itemize}
  \item We show that continuous-time discrete diffusion language models can be embedded in the Lindblad equation, with reverse diffusion recovered in the diagonal sector of the reverse-time dynamics. This embedding allows us to extend diffusion language models by modifying the dynamics or allowing superpositions of sentence states.

  \item By allowing superpositions in the same formulation, we introduce \textbf{Superposition of Prompts (SoP)} and its application to thought-conditioned generation, \textbf{Superposition of Thoughts (SoT)}.
  These methods generate a text through stochastic unraveling followed by measurement. Our experiments show that the output can retain information from conditioning components that are not selected in the final measurement.

  \item By perturbing the embedded masked diffusion dynamics, we derive \textbf{Quantum-Inspired Remasking (QIR)}.
  Our experiments show that appropriately timed QIR improves LLM-assessed text quality, whereas applying it too early increases repetition despite lowering perplexity.
\end{itemize}
}
These methods illustrate the broader design space opened by our formulation, which brings concepts from quantum physics into the design of language generation processes.
Exploring this space may lead to further sampling methods on classical hardware, as well as future implementations on quantum computers.

\subsection{Related Works}
\label{sec:related_work}

\textbf{Diffusion Language Models.}\quad
Discrete diffusion models were initially introduced for binary data \citep{Sohl2015DeepUL} and later extended to discrete state spaces.
Our mathematical foundation is the continuous-time Markov formulation of discrete diffusion \citep{Campbell2022ACT, Sun2022ScorebasedCD}, particularly that of \citet{Lou2023DiscreteDM}.
Two widely studied approaches are \textit{uniform diffusion} and \textit{masked diffusion}.

Uniform diffusion allows tokens to be revised throughout generation, providing a mechanism for correcting earlier predictions \citep{Hoogeboom2021ArgmaxFA}.
Although early models faced challenges in language modeling \citep{Austin2021StructuredDD, Lou2023DiscreteDM}, recent advances have improved training \citep{Sahoo2025TheDD} and led to practical open-weight models \citep{team2026diffusiongemma}.
Masked diffusion generates a text by progressively replacing mask tokens and has been widely adopted for its stable training \citep{Austin2021StructuredDD, Lou2023DiscreteDM}.
\citet{Ou2024YourAD} showed that the structure of masked diffusion permits time-independent modeling and reduces the number of function evaluations during sampling.
Masked diffusion also underlies large-scale language models such as LLaDA \citep{Nie2025LargeLD} and iLLaDA \citep{Nie2026iLLaDA}.

Standard masked diffusion cannot revise tokens once unmasked. 
\textit{Remasking} allows these tokens to be masked again and regenerated.
\citet{Nie2025LargeLD} retain a prescribed number of token predictions based on confidence at each step.
WINO \citep{hong2026wide} also revisits previously generated tokens, remasking those below a verification confidence threshold.
Our QIR instead derives stochastic remasking rates from perturbed Lindblad dynamics, assigning higher rates to tokens with lower reconstruction confidence.
Alternatively, \citet{wang2026remasking} reformulate sampling to incorporate remasking, drawing on DDIM \citep{song2021denoising}.

\textbf{Quantum Computation and Diffusion Models.}\quad
The connection between quantum computation and generative modeling has been explored in several directions. Early approaches include quantum generative models such as quantum Boltzmann machines and variational quantum circuits \citep{Amin2018QBM, Biamonte2017QML, Benedetti2019VQC}, where probability distributions are represented by quantum states and sampled via measurement.

\textbf{Diffusion Models and Physics.}\quad
Diffusion models have been studied from the perspectives of
non-equilibrium statistical mechanics and stochastic processes \citep{Sohl2015DeepUL, Hirono2024UnderstandingDM, Yu2024NonequilbriumPO, Yaguchi2025TheGO, Ikeda2024SpeedaccuracyTF}.
Connections between classical diffusion and quantum dynamics also arise through imaginary-time evolution and path-integral formulations \citep{FeynmanHibbs1965}.
\citet{Nasu2025QuantumReversibility} connect Lindblad dynamics to classical reverse diffusion through a semiclassical approximation.
Our work considers an exact embedding of discrete token dynamics and uses it to develop sampling procedures for diffusion language models.
\section{Background}
\label{sec:dlm_as_oqs}

We review discrete diffusion models and the Lindblad description of open quantum systems as the background for the formulation in the next section.
Our notation is summarized in \cref{sec:notation}.

\subsection{Discrete Diffusion Models}

\newcommand{\mathhighlight}[1]{
  \begingroup
  \setlength{\fboxsep}{1pt}
  \mathcolorbox{blue!10}{#1}
  \endgroup
}

In continuous-time discrete diffusion models, the forward process gradually corrupts token sequences.
The corresponding probability distribution evolves according to the master equation
\begin{align}
    \frac{d}{dt}p_t(\m)
    =
    \sum_{\n \in \vocab^\NumToken}
        Q_t(\m, \n) p_t(\n)
    , \quad 
    \left\{
      \begin{array}{ll}
        Q_t(\m ,\n) \geq 0 & (\m \neq \n)
        \\
        \sum_{\m \in \vocab^\NumToken} Q_t(\m , \n) = 0
      \end{array}
    \right.
    , \label{eq:diff_eq}
\end{align}
where $\m, \n \in \vocab^\NumToken$ are token sequences of length $\NumToken$, and $Q_t$ is the transition-rate matrix.
Generation proceeds by reversing this diffusion process.
The probability distribution then evolves according to
\begin{align}
    -
    \frac{d}{dt}p_t(\m)
    =
    \sum_{\n \in \vocab^\NumToken}
        \tilde{Q}_t(\m, \n) p_t(\n)
    , \quad 
    \left\{
      \begin{array}{ll}
        \tilde{Q}_t(\m ,\n) 
        = 
        Q_t(\n, \m) 
        \mathhighlight{
          \frac{p_t(\m)}{p_t(\n)} 
        }
          & (\m \neq \n)
        \\
        \sum_{\m \in \vocab^\NumToken} \tilde{Q}_t(\m , \n) = 0
      \end{array}
    \right.
    . \label{eq:inv_diff_eq}
\end{align}

Simulating the reverse process requires the probability ratios $\mathhighlight{p_t(\m)/p_t(\n)}$, known as concrete scores \cite{Lou2023DiscreteDM}, which the model is trained to estimate.
A common choice is to let each transition change only one token position.
The forward and reverse generators then take the form
\begin{align}
\hspace{-0.8em} Q_t(\m, \n)
  &=
  \sigma(t)
  \sum_{p=1}^{\NumToken}
    1_{\m_{-p} \n_{-p}}
    Q^\text{tok}(m_p, n_p)
  ,\label{eq:delta_p}
  \\
\hspace{-0.8em} \tilde{Q}_t(\m, \n)
  &=
  \sigma(t)
  \sum_{p=1}^{\NumToken}
    1_{\m_{-p} \n_{-p}}
    \tilde{Q}^{(p)\text{tok}}(m_p, \n)
  ,\label{eq:tdelta_p}
  \;
  \tilde{Q}^{(p)\text{tok}}(m_p, \n)
  =
  Q^\text{tok}(n_p, m_p)
  \smash{
    \overbrace{
      \mathhighlight{
        \frac{p_t(\n_{p \to m_p})}{p_t(\n)}
      }
    }^{
      \mathhighlight{s(\n,t)_{p \to m_p}}
    }
  }
  \vphantom{\frac{p_t(\n_{p \to m_p})}{p_t(\n)}}
  ,
\end{align}
where $\sigma(t)$ controls the time-dependent transition rate, and $p$ indexes token positions.
The indicator $1_{\m_{-p}\n_{-p}}$ equals one when $\m$ and $\n$ agree at all positions other than $p$, and zero otherwise.
The sequence $\n_{p\to m_p}$ is obtained from $\n$ by replacing its $p$-th token with $m_p$.
With this structure, the model only needs to estimate the scores $\mathhighlight{s(\n,t)_{p\to m_p}}$ associated with single-token changes, rather than probability ratios between arbitrary pairs of sequences.
For further details, see \cref{sec:dlm_intro}.

\subsection{Open Quantum Systems and the Lindblad Equation}

An open quantum system describes the effective dynamics of a quantum system interacting with its environment.
Its state is represented by a density operator $\op{\rho}_t$ on a Hilbert space $\mc{H}$.
Classical probability distributions are recovered as a special case.
Given the sentence basis $\{ \ket{\m} \}_{\m \in \vocab^\NumToken}$, a probability distribution $p_t(\m)$ over sequences can be embedded into a diagonal density operator,
\begin{align}
  \op{\rho}_t
  =
  \sum_{\m \in \vocab^\NumToken}
  p_t(\m) \ket{\m}\bra{\m}.
  \label{eq:classical_state_embedding}
\end{align}
Thus, a discrete diffusion process can be viewed as open-system dynamics restricted to the diagonal sector.
Going beyond this diagonal sector amounts to allowing coherent superpositions of token sequences, which will be exploited later in our sampling construction.

The Markovian dynamics of an open quantum system is governed by the Lindblad equation,
\begin{align}
  \frac{d}{dt}\op{\rho}_t
  &=
  - i [\op{H}_t, \op{\rho}_t] 
  + 
  \sum_\mu 
  \op{L}_t^\mu \op{\rho}_t \op{L}_t^{\mu\dagger} 
  - \frac{1}{2} \left\{ 
      \sum_\mu 
      \op{L}_t^{\mu\dagger} \op{L}_t^\mu, \op{\rho}_t
      \right\}.
  \label{eq:lindblad-main}
\end{align}
Here, $\op{H}_t$ is a Hermitian operator, called the Hamiltonian, and $\op{L}_t^\mu$ are jump operators.
The Hamiltonian term generates coherent evolution, while the jump terms describe dissipative transitions induced by the environment.
The Lindblad equation preserves positivity and unit trace of $\op{\rho}_t$, and is therefore the quantum analogue of a classical master equation (\ref{eq:diff_eq}).

The open-system formulation also admits a reverse-time dynamics, paralleling the reverse diffusion equation~(\ref{eq:inv_diff_eq}).
Given the forward Lindblad dynamics and its solution $\op{\rho}_t$, one can define time-reversed operators $\tilde{H}_t$ and $\tilde{L}_t^\mu$ such that
\begin{align}
  -\frac{d}{dt} \op{\rho}_{t}
  &=
  - i [\tilde{H}_t, \op{\rho}_{t}]
  + \sum_\mu \tilde{L}_t^\mu \op{\rho}_{t} \tilde{L}_t^{\mu\dagger}
  - \frac{1}{2} \left\{
      \sum_\mu \tilde{L}_t^{\mu\dagger}\tilde{L}_t^\mu, \op{\rho}_{t} 
      \right\}.
  \label{eq:lindblad_rev-main}
\end{align}
The explicit expressions for $\tilde{H}_t$ and $\tilde{L}_t^\mu$ are reviewed in \cref{thm:thm_time_rev}.
For diagonal embeddings of the form (\ref{eq:classical_state_embedding}), this reverse-time Lindblad equation reduces to the reverse-time dynamics of discrete diffusion models.
This connection will be used below to formulate inference as a stochastic unraveling of the reverse open-system dynamics.
For further details, see \cref{sec:quantum_intro}.
\section{A Quantum Formulation of Diffusion Language Models}
\label{sec:dlm_as_oqs}

\subsection{Embedding Diffusion Language Models into Open Quantum Systems}
\label{sec:emb}

We now show that diffusion language models can be embedded in the Lindblad framework for open quantum systems.
Specifically, the discrete diffusion dynamics can be reproduced by choosing jump operators that act on individual token positions, following the transition structure in \cref{eq:delta_p}.
The following theorems establish this correspondence.
The proofs are given in \cref{sec:pauli}.
\begin{mytheorem}{Embedding of Diffusion Processes}{dlm_as_oqs}
  Define the Hamiltonian and jump operators with $\mu=(p, m, n)$ acting on the sentence Hilbert space $\mc{H}$ as follows:
  \begin{align}
    &
    \op{H}_t
    =
    0,
    \quad
    \op{L}{_t^{\mu=(p, m, n)}}
    =
    {I \otimes \cdots \otimes}
    {\sqrt{\sigma(t)} \op{L}_{m, n}^{\text{$(p)$tok}}}
    {\otimes \cdots \otimes I},
    \label{eq:full_jump_operators}
  \end{align}
  where we define operators acting on the one-token Hilbert space $\mc{H}_\vocab$ by
  \begin{align}
    \op{L}{_{m, n}^\text{tok}}
    =
    \sqrt{
      Q^\text{tok}(m, n)
    }
    \ket{m}\bra{n}
    ,
    \quad
    (m \neq n)
    ,
    \label{eq:mLk-main}
  \end{align}  
    where $m, n \in \vocab$ are tokens before and after the jump.
  Then the Lindblad equation in \cref{eq:lindblad-main} reduces to be equivalent to the classical diffusion equation in \cref{eq:diff_eq} with \cref{eq:delta_p}.
\end{mytheorem}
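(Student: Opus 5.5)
The plan is to plug the diagonal ansatz $\rho_t=\sum_{\m} p_t(\m)\ket{\m}\bra{\m}$ from \cref{eq:classical_state_embedding} into the right-hand side of \cref{eq:lindblad-main} with $H_t=0$ and the jump operators of \cref{eq:full_jump_operators}. We then compute the sandwich term and the anticommutator term separately in the sentence basis. The goal is to show two things: the right-hand side is again diagonal, and its diagonal entries coincide with $\sum_{\n}Q_t(\m,\n)p_t(\n)$ for $Q_t$ given by \cref{eq:delta_p}. Together these show that the diagonal sector is invariant and that the Lindblad flow restricted to it is exactly the master equation \cref{eq:diff_eq}. For the converse direction of ``equivalent'', we also check that $\langle\m|\cdot|\m\rangle$ of the Lindblad generator depends only on the diagonal of $\rho$. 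This follows from the structure below, since each jump operator maps basis vectors to basis vectors.

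\emph{Sandwich term.} Write $L^{(p,m,n)}_t=\sqrt{\sigma(t)Q^{\rm tok}(m,n)}\,\ket{m}\bra{n}_p\otimes I_{-p}$. Acting on $\ket{\n'}\bra{\n'}$, it gives $\sigma(t)Q^{\rm tok}(m,n)\,\delta_{n'_p n}\ket{\n'_{p\to m}}\bra{\n'_{p\to m}}$, which is diagonal. Summing over $p,m,n$ and over $\n'$ weighted by $p_t(\n')$, the $\ket{\m}\bra{\m}$ coefficient is
\[
\sigma(t)\sum_p\sum_{n\neq m_p}Q^{\rm tok}(m_p,n)\,p_t(\m_{p\to n}).
\]
This is precisely the off-diagonal part $\sum_{\n\neq\m}Q_t(\m,\n)p_t(\n)$ of \cref{eq:delta_p}, because the indicator $1_{\m_{-p}\n_{-p}}$ forces $\n=\m_{p\to n_p}$.

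\emph{Anticommutator term.} Compute $\sum_\mu L^{\mu\dagger}_tL^\mu_t=\sigma(t)\sum_p\sum_{n}\bigl(\sum_{m\neq n}Q^{\rm tok}(m,n)\bigr)\ket{n}\bra{n}_p\otimes I_{-p}$. This operator is diagonal in the sentence basis, with eigenvalue $\sigma(t)\sum_p\sum_{m\neq m_p}Q^{\rm tok}(m,m_p)$ on $\ket{\m}$. Since it commutes with the diagonal $\rho_t$, the term $-\tfrac12\{\cdot,\rho_t\}$ contributes $-\sigma(t)\sum_p\sum_{m\neq m_p}Q^{\rm tok}(m,m_p)\,p_t(\m)$. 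By the column-sum-zero condition for $Q^{\rm tok}$, the sum $-\sum_{m\neq m_p}Q^{\rm tok}(m,m_p)$ equals $Q^{\rm tok}(m_p,m_p)$. So this term reproduces the diagonal entry $Q_t(\m,\m)p_t(\m)=\sigma(t)\sum_pQ^{\rm tok}(m_p,m_p)p_t(\m)$, where the indicator is trivially one for $\n=\m$ at every $p$.

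The main obstacle is bookkeeping rather than any deep step. Three points need care: the factor of one-half from the anticommutator combining with two identical diagonal factors to give exactly one copy; the diagonal $Q_t(\m,\m)$ collecting $N$ contributions, one per position, consistently with \cref{eq:delta_p}; and the requirement that $Q^{\rm tok}$ itself be a rate matrix with nonnegative off-diagonal entries and zero column sums, so that the square roots are real. That last requirement is implicit in \cref{eq:diff_eq} and needs to be made explicit. Adding the two contributions gives $\frac{d}{dt}p_t(\m)=\sum_{\n}Q_t(\m,\n)p_t(\n)$, which completes the argument.
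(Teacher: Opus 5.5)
Your proposal is correct and follows essentially the paper's route: substitute the diagonal ansatz, show that the sandwich and anticommutator terms create no off-diagonal entries, and read off the diagonal entries as the master equation with the generator of \cref{eq:delta_p}. The only difference is packaging. The paper first proves a general Pauli-master-equation lemma (\cref{thm:pauli}) under the vanishing conditions $\sum_\mu \langle\m|L^\mu_t|\k\rangle\langle\k|L^{\mu\dagger}_t|\n\rangle=0$ and their $L\leftrightarrow L^\dagger$ counterpart for $\m\neq\n$, checks these for the given jump operators, and fixes $Q_t(\m,\m)$ by the column-sum rule; you instead compute directly and make the zero-column-sum requirement on $Q^{\rm tok}$ explicit.
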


\begin{mytheorem}{Embedding of Diffusion Models}{tilde_L}
  Under our setting in \cref{thm:dlm_as_oqs}, we have the operators for the time-reversal equation in \cref{eq:lindblad_rev-main} as follows:
  \begin{align*}
    \tilde{H}_t = 0,
    \quad
    \tilde{L}{_t^{\mu=(p, m, n)}}
    &=
      \sum_{\k \in \vocab^\NumToken}
        \begin{array}{c}
          \ket{k_1}\bra{k_1}
            \otimes 
            \cdots 
            \otimes 
            \sqrt{\sigma(t)}
            \tilde{L}{_{m, n, t}^\text{$(p)$tok}}(\k)
            \otimes 
            \cdots 
            \otimes 
            \ket{k_\NumToken}\bra{k_\NumToken}
            ,
        \end{array}
  \end{align*}
  where we define operators acting on the one-token Hilbert space $\mc{H}_\vocab$ by
  \begin{align}
    \tilde{L}{_{m, n, t}^\text{$(p)$tok}}(\k)
        =
        \sqrt{
          \tilde{Q}^\text{$(p)$tok}(n, \k)
          }
            1_{m k_p}
          \ket{n}\bra{k_p}
        =
        \sqrt{
          Q^\text{tok}(k_p, n)
          \mathhighlight{
            \frac{
              p_t(\k_{p \to n})
            }{
              p_t(\k)
            }
          }
          }
            1_{m k_p}
          \ket{n}\bra{k_p}
        .
  \end{align}
With these operators, the time-reversed Lindblad equation (\ref{eq:lindblad_rev-main}) reproduces the reverse diffusion dynamics in \cref{eq:inv_diff_eq}, with the generator given by \cref{eq:tdelta_p}.
\end{mytheorem}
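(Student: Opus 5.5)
The plan is to compute the time-reversed operators directly from the general construction recalled in \cref{thm:thm_time_rev}, specialized to the forward data of \cref{thm:dlm_as_oqs}, and then to verify the claimed reduction on diagonal states. I expect the general construction to express $\tilde{H}_t$ and $\tilde{L}_t^\mu$ through $\op{\rho}_t^{1/2}$ (or $\op{\rho}_t$ and its inverse) acting on the forward jump operators, in a Petz-type form $\tilde{L}_t^\mu \propto \op{\rho}_t^{1/2}\op{L}_t^{\mu\dagger}\op{\rho}_t^{-1/2}$. The Hamiltonian part should be a correction that vanishes when $\op{H}_t=0$ and $\op{\rho}_t$ commutes with the relevant dissipative structure.

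\textbf{Step 1: diagonality of the forward solution.} Since $\op{H}_t=0$ and each $\op{L}_t^\mu$ maps basis states to basis states, the proof of \cref{thm:dlm_as_oqs} shows that a diagonal initial state stays diagonal. The anticommutator term is diagonal because $\sum_\mu \op{L}_t^{\mu\dagger}\op{L}_t^\mu = \sigma(t)\sum_{p}\sum_{m\neq n}Q^{\text{tok}}(m,n)\,\ket{n}\bra{n}_p$ is diagonal. Hence $\op{\rho}_t=\sum_\k p_t(\k)\ket{\k}\bra{\k}$, and $\op{\rho}_t^{\pm1/2}$ are diagonal with entries $p_t(\k)^{\pm1/2}$. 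I assume $p_t>0$ on the support, which is the regime where the concrete score is defined.

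\textbf{Step 2: compute $\tilde{L}_t^\mu$.} The adjoint $\op{L}_t^{(p,m,n)\dagger}$ is $\sqrt{\sigma(t)Q^{\text{tok}}(m,n)}\,\ket{n}\bra{m}$ at position $p$. Conjugating by $\op{\rho}_t^{1/2}$ and $\op{\rho}_t^{-1/2}$, I insert the resolution $\sum_\k\ket{\k}\bra{\k}$ and read off the matrix element sending $\ket{\k}$ with $k_p=m$ to $\ket{\k_{p\to n}}$. This element is $\sqrt{\sigma(t)Q^{\text{tok}}(m,n)}\,\sqrt{p_t(\k_{p\to n})/p_t(\k)}$.

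Writing this position by position gives the projectors $\ket{k_q}\bra{k_q}$ on $q\neq p$, the factor $1_{mk_p}$, and $\ket{n}\bra{k_p}$ at position $p$. Since $1_{mk_p}$ lets us replace $Q^{\text{tok}}(m,n)$ by $Q^{\text{tok}}(k_p,n)$, the result is exactly the stated $\tilde{L}^{(p)\text{tok}}_{m,n,t}(\k)$, matching $\tilde{Q}^{(p)\text{tok}}(n,\k)$ from \cref{eq:tdelta_p}. The Hamiltonian correction involves commutators of diagonal operators, or terms proportional to $\op{H}_t=0$, so it vanishes and $\tilde{H}_t=0$.

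\textbf{Step 3: reduction to \cref{eq:inv_diff_eq}.} I substitute the operators into \cref{eq:lindblad_rev-main} with diagonal $\op{\rho}_t$ and take the $\ket{\m}\bra{\m}$ matrix element.

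\emph{Gain term.} The term $\sum_\mu\tilde{L}\op{\rho}_t\tilde{L}^\dagger$ is diagonal. It contributes $\sigma(t)\sum_p\sum_{\n:\,\n_{-p}=\m_{-p},\,n_p\neq m_p}Q^{\text{tok}}(n_p,m_p)\frac{p_t(\m)}{p_t(\n)}p_t(\n)$, which is $\sum_{\n\neq\m}\tilde{Q}_t(\m,\n)p_t(\n)$.

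\emph{Loss term.} $\sum_\mu\tilde{L}^\dagger\tilde{L}$ is diagonal with entry $\sum_{\n\neq\m}\tilde{Q}_t(\n,\m)$ at $\m$. The anticommutator then gives $-\sum_{\n\neq\m}\tilde{Q}_t(\n,\m)\,p_t(\m)$, which equals $\tilde{Q}_t(\m,\m)p_t(\m)$ by the zero-column-sum condition.

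Adding the two contributions yields \cref{eq:inv_diff_eq} with generator \cref{eq:tdelta_p}. Off-diagonal elements vanish on both sides.

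The main obstacle is Step 2, specifically matching the precise form of the reversal map in \cref{thm:thm_time_rev}, including its Hamiltonian correction and its treatment of non-invertible $\op{\rho}_t$. If that theorem is stated instead via a detailed-balance or adjoint condition, I would verify directly that the proposed $\tilde{L}_t^\mu$ satisfies that condition and rely on Step 3 as the decisive check.
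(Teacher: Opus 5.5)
Your proposal is correct and follows essentially the paper's argument. The reversal formula in \cref{thm:thm_time_rev} is exactly the Petz form $\tilde{L}_t^\mu=\sqrt{\rho_t}\,L_t^{\mu\dagger}\rho_t^{-1/2}$ that you anticipate. With $H_t=0$ it gives $\tilde{H}_t=-\tfrac{i}{2}(A-A^\dagger)$ for $A=\tfrac{d\sqrt{\rho_t}}{dt}\rho_t^{-1/2}+\tfrac12\sqrt{\rho_t}\bigl(\sum_\mu L_t^{\mu\dagger}L_t^\mu\bigr)\rho_t^{-1/2}$, which is real diagonal by your Step~1, so $\tilde{H}_t=0$ exactly as the paper argues.

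The only difference is in the reduction step. The paper shows that the off-diagonal vanishing conditions of \cref{thm:pauli} transfer from $L_t^\mu$ to $\tilde{L}_t^\mu$, since the matrix elements only pick up factors $\sqrt{p_t(\mathbf{m})p_t(\mathbf{n})}/p_t(\mathbf{k})$ or their inverse, and then invokes that lemma. Your Step~3 instead computes the gain and loss terms directly, which also makes explicit the identification with $\tilde{Q}_t$ that the paper leaves implicit.
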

This embedding provides a starting point for extending diffusion sampling to superposition states and perturbed jump dynamics, which lead to the sampling schemes
introduced below.
Below, we show how the well known diffusion model settings are embedded in the Lindblad equation.

\textbf{Uniform Diffusion.}\quad
If we consider $(m,n) \in \vocab^2$ where $m \neq n$ and consider jump operators 
\begin{align}
  L{_{{m}, {n}}^\text{tok}}  =  \ket{{m}}\bra{{n}}
  ,
  \
  \Big(
    Q^\text{tok}(m, n) = 1
  \Big)
  \quad
  \Rightarrow
  \quad
  \tilde{L}{_{{m}, {n},t}^\text{$(p)$tok}}(\k)  
  =  
  \sqrt{
    \mathhighlight{
      \frac{
        p_t(\k_{p \to n})
      }{
        p_t(\k_{p \to m})
      }
    }
  }
    1_{m k_p}
  \ket{{n}}\bra{m}.
  \label{eq:jump_uniform}
\end{align}

\textbf{Masked Diffusion.}\quad
If we consider $i \in \vocab$ where $i \neq 0$ and consider jump operators 
\begin{align}
  L{_{0, i}^\text{tok}}  =  \ket{0}\bra{i}
  ,
  \Big(
    Q^\text{tok}(0, i) = 1
  \Big)
  \quad
  \Rightarrow
  \quad
  \tilde{L}{_{0, i,t}^\text{$(p)$tok}}(\k)  
  =  
  \sqrt{
    \mathhighlight{
      \frac{
        p_t(\k_{p \to i})
      }{
        p_t(\k_{p \to 0})
      }
    }
  }
    1_{0 k_p}
  \ket{i}\bra{0}.
\end{align}
\noindent
\begin{minipage}[c]{0.73\textwidth}
For uniform diffusion defined by \cref{eq:jump_uniform}, transitions between token states occur at equal rates in both directions.
This can be viewed as thermal dynamics among states of equal energy.

In masked diffusion, transitions occur only from non-mask states $\ket{i}$ to the absorbing mask state $\ket{0}$.
A thermal interpretation is obtained by assigning a lower energy to $\ket{0}$: when the energy gap is large compared with the temperature, upward transitions are suppressed relative to downward transitions.
Masked diffusion corresponds to the limiting case in which upward transitions are absent.
\end{minipage}%
\hfill
\begin{minipage}[c]{0.25\textwidth} 
    \includegraphics[
        bb=0 0 100 100,
        trim=10 0 10 0,
        clip,
        width=\columnwidth
    ]{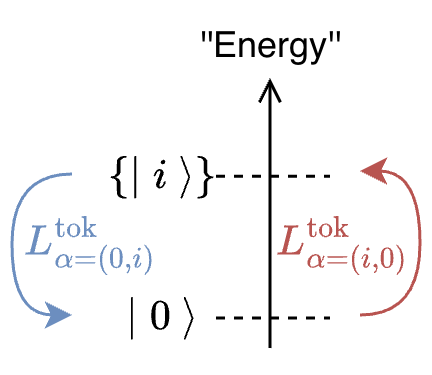}
\end{minipage}

\textbf{Perturbed Masked Diffusion.}\quad
To mitigate this energy gap, one approach would be to introduce corresponding jump operators operating in the opposite direction into the system, i.e., with $0\leq \gamma \leq 1$:
\begin{align}
    L{_{i, 0}^\text{tok}}  = \sqrt{\gamma} \ket{i}\bra{0}
    \quad
    \Rightarrow
    \quad
    \tilde{L}{_{i, 0,t}^\text{$(p)$tok}}(\k)  
    =  
    \sqrt{\gamma}
    \sqrt{
      {
        \frac{
          p_t(\k_{p \to 0})
        }{
          p_t(\k_{p \to i})
        }
      }
    }
      1_{i k_p}
    \ket{0}\bra{i},
  \label{eq:remask_jump}
\end{align}
where $\gamma$ regulates the energy gap, with the gap completely closing when $\gamma = 1$.
When $\gamma \approx 0$, it can be physically interpreted as a normal Masked diffusion process with added perturbations.
Leveraging this latter property, the second half of this paper proposes a Quantum-Inspired Remasking (QIR) method based on \cref{eq:remask_jump} and demonstrates its effectiveness.

\subsection{Simulations: Stochastic Unraveling}
\label{sec:unraveling-main}

The Lindblad formulation also provides a trajectory-level description of the dynamics.
In ordinary diffusion language models, sampling is performed by simulating a Markov process over token sequences.
The open-system counterpart is a stochastic process over wave functions, known as \textit{stochastic unraveling}.
It represents the same density-operator dynamics through random pure-state trajectories.
In this language, a stochastic time-evolution operator $\op{U}_{t - dt | t}^\text{eff}$ updates the state vector as
\begin{align}
        \ket{\bpsi_{t- dt}}
        =
        \tilde{U}_{t - dt | t}^{\text{eff}}
        \ket{\bpsi_t}
        &\propto
        (1 -i \tilde{H}{^\text{eff}_t}dt)\ket{\bpsi_t}
        + \sum_\mu (\tilde{L}{_t^\mu} - 1) dN_t^\mu \ket{\bpsi_t}
        ,
\end{align}
where $\tilde{H}{^\text{eff}_t} = -\frac{i}{2}\sum_\mu \tilde{L}{_t^{\mu\dagger}}\tilde{L}{_t^\mu}$ and $dN_t^\mu$ are multiple jump processes.
For explicit expressions, see \cref{sec:unraveling}.
A direct application of these infinitesimal unravelings is inefficient for the same reason as the naive simulation of discrete diffusion: in a single step, they affect only one token position.
Motivated by the $\tau$-leaping sampler for discrete diffusion, we introduce an unraveling method that allows multiple token positions to be updated within one step.
$
  \ket{\bpsi_{t- \Delta t}}
  =
  \tilde{U}_{t- \Delta t|t}^{\text{$\tau$-leap}}
  \ket{\bpsi_{t}}
$.
We refer to this method as \textit{$\tau$-leaping unraveling}.
This expression is the unraveling analogue of the $\tau$-leaping sampler in \cref{eq:tau-leaping}\footnote{
  Notably, the latter $\tilde{U}_{t - \Delta t | t}^\text{$\tau$-leap}$ becomes essentially equivalent to conventional $\tau$-leaping with $\ket{\bpsi} = \ket{\m}.$
}.
It reproduces the solution of the reverse-time Lindblad equation \cref{eq:lindblad_rev-main} up to first order in $\Delta t$; see \cref{thm:tau-qjump}.

A central feature of stochastic unraveling is that an ensemble of stochastic pure-state trajectories reproduces the solution of the Lindblad equation at the level of the density matrix.
In the sampling procedure, we need to perform a ``projective measurement,'' since the wave function $\ket{\bpsi_t}$ is, in general, a superposition of multiple sentences. 
According to the Born rule, the probability of measuring a particular sentence is given by the squared magnitude of its coefficient.
Starting from $\ket{\n}$, evolving from $t$ to $s$ via
stochastic unraveling, and then measuring yields $\m$
with the same probability as the corresponding classical
diffusion process.
This holds for both forward- and reverse-time dynamics.
For details, see \cref{seq:prf_sampling}.

\begin{figure}[tbp]
    \includegraphics[
        bb=0 0 100 100,
        trim=0cm 0 0cm 0,
        clip,
        width=\columnwidth]{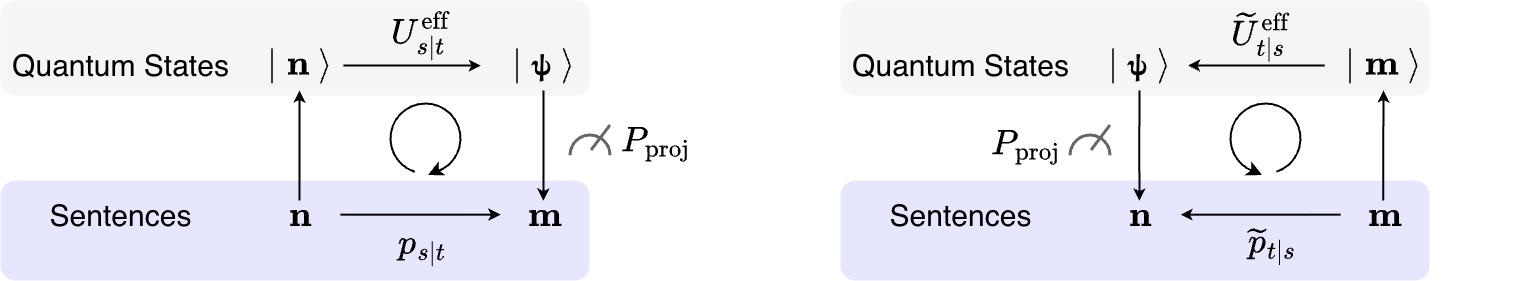}
  \caption{
    Sampling by stochastic unraveling followed by projective measurement.
    The ensemble of pure-state trajectories reproduces the corresponding density-operator dynamics, while the final measurement yields a discrete token sequence.
}
  \label{fig:unraveling_projection}
\end{figure}

\subsection{Quantum-Inspired Samplings}
\label{sec:qis}

Embedding diffusion language models into the Lindblad framework opens up new possibilities for extending both their states and dynamics.
We explore two such extensions here.
Superpositions of sentence states lead to Superposition of Prompts (SoP) and its thought-conditioned variant, Superposition of Thoughts (SoT), while perturbations of the jump operators yield Quantum-Inspired Remasking (QIR).
Implementation details can be found in \cref{sec:implementation-details}. 

\textbf{Superposition of Prompts (SoP).}\quad 
SoP applies stochastic unraveling to a superposition of sentence states conditioned on different prompts.
In the context of score-based diffusion models, it is common to generate based on conditional probability $\m = [\texttt{Prompt} \cdots] \sim p(\cdots|\texttt{Prompt})$.
This procedure is a counterpart of sampling $\m = [\texttt{Prompt} \cdots]$, namely, we generate a reverse-time dynamics of
\begin{align}
  \ket{\bpsi_{t}}
  =
  \sqrt{p_t(1)} \Big|
    \underbrace{
        \texttt{
          Prompt 1
        }
    }_{\text{fixed}}
    \underbrace{
    \cdots_t
    }_{
    \text{generated}
    }
  \Big\rangle
  +
  \sqrt{p_t(2)} \Big|
    \underbrace{
        \texttt{
          Prompt 2
        }
    }_{\text{fixed}}
    \underbrace{
    \cdots_t
    }_{
    \text{generated}
    }
  \Big\rangle
  .
\end{align}
The part $[\cdots_{t=1}]$ is initialized by sampling from the terminal probability $p_{t=1}$.
After running reverse-time unraveling procedure defined in \cref{thm:w-super} from $t=1$ to $t \approx 0$, we apply the projective observation process to $\ket{\bpsi_{t \approx 0}}$ and make a single sentence $\m \sim [p_{t \approx 0}(1), p_{t \approx 0}(2)]$.
Afterwards, we apply usual denoising process optionally\footnote{
    If we use masked diffusion, the mask token \texttt{[m]} sometimes left behind.
    To clear it, we apply final denoizing step that jumps \texttt{[m]} to certain meaningful tokens.
}.

\textbf{Superposition of Thoughts (SoT).}\quad
We consider Superposition of Thoughts (SoT), an enhanced version of SoP, in which the prompt portion of SoP is replaced with reasoning tokens.
As an application of the unraveling with text superposition states, we can construct superposition of reasoning/thought tokens, i.e.,
\begin{align*}
  \texttt{Prompt}
  \to
  \sqrt{p_t(1)} \Bigg|
    \underbrace{
    \begin{array}{ll}
    \texttt{
      <think>
    }\\
    \texttt{
      Thought 1
    }\\
    \texttt{
      </think>
    }
    \end{array}
    }_{\text{fixed}}
    \underbrace{
    \cdots_t
    }_{
    \text{generated}
    }
  \Bigg\rangle
  +
  \sqrt{p_t(2)} \Bigg|
    \underbrace{
    \begin{array}{ll}
    \texttt{
      <think>
    }\\
    \texttt{
      Thought 2
    }\\
    \texttt{
      </think>
    }
    \end{array}
    }_{\text{fixed}}
    \underbrace{
    \cdots_t
    }_{
    \text{generated}
    }
  \Bigg\rangle
  .
\end{align*}
The sampling procedure is completely same as SoP, but it is expected to generate ``mixed thought'' outputs if the model supports reasoning mode.

\textbf{Quantum-Inspired Remasking (QIR).}\quad 
Remasking is considered a technique for improving the quality of generated sentences in mask-based diffusion language models.
Reconsidering this from the perspective of open quantum systems, "remasking" can be viewed as introducing a form of temperature when the mask token \texttt{[m]} labelled by $0$, is treated as the ground state; specifically, this corresponds to adding, in addition to the transition $0 \neq i \to 0$, a small-scale jump operator defined in \cref{eq:remask_jump} that makes $0 \to i \neq 0$ parameterized by $0 < \gamma \ll 1$ to the forward process.

From \cref{eq:remask_jump}, for the unraveling that occurs when such a "perturbation" is applied, it is necessary to calculate the following values.
\begin{align}
  \frac{p_t(\m_{p\to 0})}{p_t(\m_{p\to i})}
  =
  \left(
    \mathcolorbox{blue!10}{
      \frac{p_t(\m_{p\to i})}{p_t(\m_{p\to 0})}
    }
  \right)^{-1}.
  \label{eq:remask_score}
\end{align}
The highlighted ratio measures how strongly the model favors restoring the current token $i$ when position $p$ is masked.
We estimate this score by applying the pretrained model to $\m_{p\to 0}$ and use its reciprocal to determine the remasking rate.
This approximation neglects the change in the score induced by the perturbation, which we assume to be small for sufficiently small $\gamma$.
Thus, tokens that the model is less confident in reconstructing from their current context are more likely to be remasked.
The perturbation therefore yields a confidence-based remasking mechanism.
\section{Experiments}
\label{sec:experiments}

\subsection{Experiment 1: Sentence Generation with Text Superposition}
\label{sec:experiment_superposition}

In this section, we examine whether the superposition of prompts (SoP) and thoughts (SoT) manifests as a semantic blending in the generated text.
The goal of these experiments is not to outperform existing conditional generation methods, but rather to validate the fundamental property of the proposed quantum-inspired sampling: that a superposition of input states (prompts or thoughts) can indeed lead to the generation of a 'mixed' output state that is not simply one or the other, but a probabilistic blend of both.
Details on our experiments with SoP/SoT can be found in \cref{sec:superposition_experiments}.

\textbf{Setup.}\quad
We use masked diffusion for the experiments in this section\footnote{
    In our experiments, uniform diffusion exhibits faster concentration of superposition weights onto a single component than masked diffusion, suggesting that superpositions are less stable under uniform diffusion.
    See \cref{sec:superposition_weight_dynamics} for details. 
}.
The input prompts are qualitatively designed to reflect their respective superposition targets: \textbf{For SoP}, we employ descriptive prompts that define environmental settings to observe the superposition of semantic contexts. \textbf{For SoT}, we utilize prescriptive prompts that impose stylistic and instructional constraints to investigate the superposition of reasoning-driven generation strategies.
In this section, we show results on two sets of \textbf{\color{mygreen}forest}-\textbf{\color{myblue}snow} prompt pairs.
See \texttt{story\_\_forest\_snow} in \cref{tab:sop_prompts} for SoP and \texttt{story\_reasoning\_\_forest\_snow} in \cref{tab:sot_prompts} for SoT for details.

\textbf{Implementation.}\quad 
In the superposition-based unraveling process, each sentence undergoes not only denoising but also updates to the weight coefficients $p(i)$.
The temporal evolution of these coefficients can occur rapidly depending on the output of the trained model. In such cases, the effect of superposition may disappear early in the denoising process. (See Appendix~\ref{sec:superposition_weight_dynamics} for details.)
To mitigate this effect, we implement a regularization parametrized by $\beta \geq 0$ that controls degrees of mixtures.

\textbf{Results.}\quad 
\Cref{fig:forest_snow_similarity} provides a visual overview of the generated sentences' distribution. The scatter plots indicate that the \textbf{SoP}-generated sentences tend to occupy a region intermediate to the single-prompt outputs. This qualitative observation is quantitatively supported by the average similarity values presented in \Cref{tab:align}, which show a statistically significant increase in similarity to the non-selected condition compared to single-prompt generation.
To qualitatively assess the effect of \textbf{SoT}, we present a representative generated sample in \Cref{tab:generated-samples}. The generated text effectively synthesally fuses the divergent stylistic constraints of both "\textbf{\color{mygreen}forest}" and "\textbf{\color{myblue}snow}" prompts into a single narrative. This supports our hypothesis that the superposition of thought tokens allows the model to navigate between conflicting reasoning paths during the denoising process. See \cref{sec:superposition_prompts} for more details.

\begin{figure}[tbp]
    \centering
    \centering
    \includegraphics[
        bb=0 0 100 100,
        trim=0cm 0 0cm 0,
        clip,
        width=.9\columnwidth]{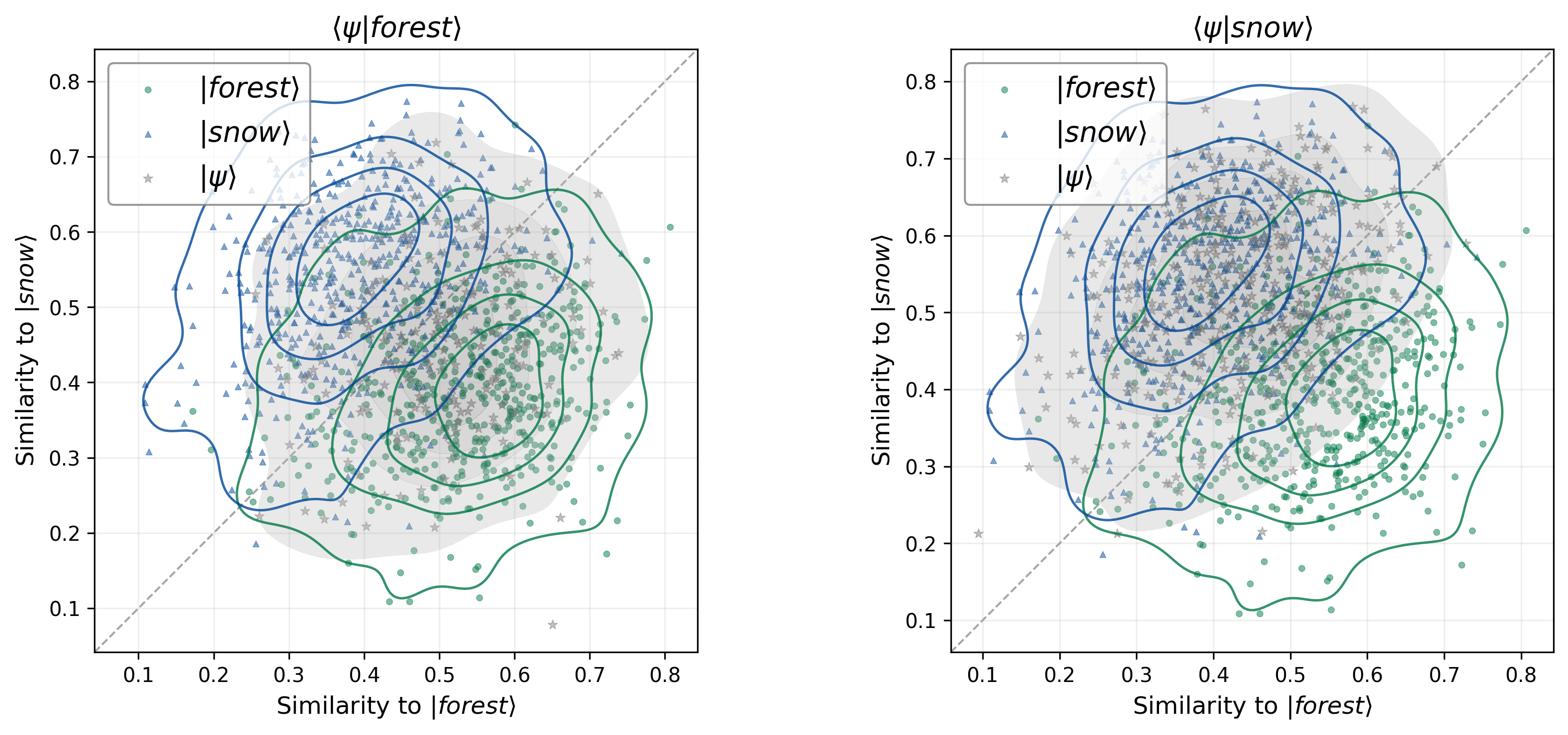}
    \caption{Cosine similarity of SEDD-small outputs to {\color{mygreen}forest-only (horizontal)} and {\color{myblue}snow-only (vertical)} reference texts, computed using MiniLM embeddings.
    {\color{mygreen}Green} and {\color{myblue}blue} show single-condition baselines; $\smash{\colorbox{black!10}{\text{gray}}}$ shows \textbf{SoP} outputs with {\color{mygreen}forest (left)} or {\color{myblue}snow (right)} selected at final measurement.
    Superposition increases average similarity to the unselected condition.
    }
\label{fig:forest_snow_similarity}
\end{figure}

\begin{table}[t]
\centering
\begin{tabular}{p{0.1\linewidth} p{0.82\linewidth}}
\hline
Prompt & 
Write a short story.
\\
\hline
Thoughts &
\begin{tabular}{p{0.39\textwidth}|p{0.39\textwidth}}
\noindent
\textbf{\color{mygreen}[forest]} The narrative must take place in a {\color{mygreen}dense, vibrant summer forest}. [...] The tone should feel {\color{mygreen}alive, earthy, and teeming with natural energy}.
&
\noindent
\textbf{\color{myblue}[snow]} The narrative must take place in a {\color{myblue}desolate, frozen winter tundra}. [...] The tone should feel {\color{myblue}silent, harsh, and starkly isolated}.
\end{tabular}
\\
\hline
Generated $\braket{\bpsi|\textbf{\color{mygreen}forest}}$ &
As the overcliff dawn drifts from \textbf{\color{mygreen}sprawling forest of trees north} – I stand on the edge of the \textbf{\color{myblue}frozen world}, adrift. [...] The air itself toys him away, like an elemental furnace or \textbf{\color{mygreen}pine-scented forest earth}. \textbf{\color{myblue}Snow} piling up over the world, carrying \textbf{\color{myblue}liquid silver}, and where the sun's frame unfurls a \textbf{\color{myblue}chill}; dimming.
\\
\hline
\end{tabular}
\caption{Representative generated samples by iLLaDA-8B-Instruct (\cite{Nie2026iLLaDA}) with \textbf{SoT}. Coloring in the generated texts is generated by gemma-4-26B-A4B (\cite{team2026gemma}).
}
\label{tab:generated-samples}
\end{table}

\subsection{Experiment 2: Quantum-Inspired Remasking}
\label{sec:experiment_remasking}

We evaluate the quantum-inspired remasking (QIR) method introduced in the previous section. We apply this method to pretrained diffusion language models without updating their weights. By varying when remasking begins during generation, we examine how its timing affects the quality of the generated text using several evaluation metrics.
See \cref{sec:remasking_experiment} for details of our experiments.

\textbf{Setup.}\quad 
We evaluate SEDD-medium \citep{Lou2023DiscreteDM}, RADD-medium \citep{Ou2024YourAD}, and iLLaDA-8B-Base \citep{Nie2026iLLaDA}. 
We generate 256 unconditional sequences per condition for SEDD-medium and RADD-medium, and 128 for iLLaDA-8B-Base, with 1024 tokens, 1024 sampling steps, and $\gamma=0.02$.
To reduce the cost of evaluating the reciprocal scores in \cref{eq:remask_score}, we uniformly sample up to $K=8$ non-mask positions per remasking calculation. Remasking begins after approximately 10\%, 50\%, or 90\% of sampling (\texttt{start}, \texttt{middle}, and \texttt{end}) and remains active thereafter. The baseline, \texttt{none}, uses the same stochastic-unraveling decoder without remasking. We verified that, without remasking, our implementation yields results consistent with those of the original SEDD Euler sampler under the same sampling settings.

\textbf{Metrics.}\quad 
We report generative perplexity (GenPPL), computed by evaluating the generated texts with GPT-2-large \citep{radford2019language}, token-frequency entropy, and a CoLA acceptability score \citep{morris2020textattack}. 
Entropy monitors excessive loss of token diversity; it is not a quantity to minimize.
Repetition is measured by $\Delta R_n$, the same-seed difference from no remasking in repeated token $n$-gram fractions, averaged over 128-token windows and expressed in percentage points (pp). 
We evaluate $n=2,\ldots,8$ and report $n=7$ as a representative example in Table~\ref{tab:results}. 
Qwen3-32B compares each sample with its same-checkpoint, same-seed baseline for fluency, coherence, progression, and repetition, without knowing the generation method. 
Wins, ties, and losses score 1, $1/2$, and 0, averaged over both presentation orders; preference above 50\% favors remasking. Definitions and evaluation details are given in Appendices~\ref{sec:metrics} and~\ref{sec:judge}.

\textbf{Results.}\quad 
Enabling QIR from the middle of generation yields Qwen preference scores of $62.0\%$ for SEDD-medium, $62.4\%$ for RADD-medium, and $61.8\%$ for iLLaDA-8B-Base against their no-remasking baselines (Table~\ref{tab:results}).
This schedule achieves the highest mean preference among the evaluated schedules for both models, demonstrating improved judge-assessed text quality without retraining.
Starting remasking earlier further lowers perplexity and raises CoLA scores, but reduces judge preference and token diversity while increasing repetition.
The increase in repetition under \texttt{start} is also observed for the other evaluated $n$-gram lengths. 
These results show that lower generative perplexity and higher grammatical acceptability do not necessarily translate into better overall text quality. Supplementary repetition results are provided in Appendix~\ref{sec:statistics}.

\begin{table}[tbp]\centering\scriptsize
    \resizebox{\linewidth}{!}{
    \begin{tabular}{llrrrr}\toprule Model & Metric & none & end & middle & start\\\midrule
    \multirow{5}{*}{\makecell{SEDD(medium) \\ \cite{Lou2023DiscreteDM}}} & GenPPL & $33.12 _{\pm 0.58}$ & $32.73 _{\pm 0.57}$ & $27.52 _{\pm 0.43}$ & $12.24 _{\pm 0.26}$\\
     & Entropy & $7.57 _{\pm 0.02}$ & $7.57 _{\pm 0.02}$ & $7.52 _{\pm 0.03}$ & $6.42 _{\pm 0.05}$\\
     & CoLA & $0.285 _{\pm 0.006}$ & $0.285 _{\pm 0.006}$ & $0.335 _{\pm 0.006}$ & $0.336 _{\pm 0.007}$\\
     & Qwen (\%) & -- & $51.5 _{\pm 0.8}$ & $\mathbf{62.0} _{\pm 1.3}$ & $53.3 _{\pm 1.9}$\\
     & $\Delta R_7$ (pp) & $0.000 _{\pm 0.000}$ & $-0.005 _{\pm 0.004}$ & $0.025 _{\pm 0.013}$ & $1.405 _{\pm 0.210}$\\
    \midrule
    \multirow{5}{*}{\makecell{RADD(medium) \\ \cite{Ou2024YourAD}}} & GenPPL  & $33.66 _{\pm 0.60}$ & $33.34 _{\pm 0.59}$ & $28.08 _{\pm 0.44}$ & $13.71 _{\pm 0.28}$\\
     & Entropy  & $7.59 _{\pm 0.02}$ & $7.58 _{\pm 0.02}$ & $7.53 _{\pm 0.02}$ & $6.55 _{\pm 0.05}$\\
     & CoLA  & $0.281 _{\pm 0.005}$ & $0.282 _{\pm 0.005}$ & $0.316 _{\pm 0.006}$ & $0.328 _{\pm 0.007}$\\
     & Qwen (\%)  & -- & $49.5 _{\pm 0.8}$ & $\mathbf{62.4} _{\pm 1.3}$ & $58.0 _{\pm 1.8}$\\
     & $\Delta R_7$ (pp) & $0.000 _{\pm 0.000}$ & $0.002 _{\pm 0.007}$ & $0.007 _{\pm 0.010}$ & $1.258 _{\pm 0.333}$\\
     \midrule
     \multirow{5}{*}{\makecell{iLLaDA-8B-Base \\ \cite{Nie2026iLLaDA}}} & GenPPL & $21.14 _{\pm 1.56}$ & $21.25 _{\pm 1.57}$ & $19.83 _{\pm 1.48}$ & $18.19 _{\pm 6.58}$\\
     & Entropy & $6.46 _{\pm 0.08}$ & $6.46 _{\pm 0.08}$ & $6.43 _{\pm 0.07}$ & $5.70 _{\pm 0.09}$\\
     & CoLA & $0.460 _{\pm 0.019}$ & $0.466 _{\pm 0.019}$ & $0.487 _{\pm 0.018}$ & $0.563 _{\pm 0.019}$\\
     & Qwen (\%) & -- & $50.0 _{\pm 1.4}$ & $\mathbf{61.8} _{\pm 1.9}$ & $55.7 _{\pm 2.5}$\\
     & $\Delta R_7$ (pp) & $0.000 _{\pm 0.000}$ & $-0.072 _{\pm 0.054}$ & $-0.081 _{\pm 0.113}$ & $4.042 _{\pm 0.531}$\\
    \bottomrule
    \end{tabular}
    }
    \caption{Results at $\gamma=0.02$. 
    Each condition contains 256 sequences for SEDD and RADD
    and 128 for iLLaDA.
    The no-remasking column is the shared control. All entries give mean $\pm$ standard error; Qwen preference scores are expressed as percentages, and $\Delta R_7$ in percentage points (pp). For $\Delta R_7$, the no-remasking baseline is zero by definition.
    }\label{tab:results}
\end{table}

\section{Conclusion and Outlook}
\label{sec:conclusion}

We embedded continuous-time diffusion language models into Lindblad dynamics, enabling sampling from superposed conditioning states. In the forest/snow experiment, superposition outputs show higher similarity to the unselected condition than single-condition baselines, suggesting that its influence persists after final measurement.

Within this framework, we introduced quantum-inspired remasking through perturbations of the jump operators. Experiments using pretrained models show that the timing of remasking affects generated text quality: earlier remasking lowers perplexity but can increase repetition, whereas an intermediate onset achieves the highest mean judge preference across the tested checkpoints. These results highlight the importance of evaluating text quality beyond perplexity alone.

Future work includes reducing the computational cost of remasking and extending its evaluation to conditional generation. More broadly, this framework opens a path toward quantum-native language models in which superposition and interference play an active role in generation. Such models could evolve multiple candidate sequences coherently, with interference shaping the probabilities of the resulting text. Exploring how these dynamics can be designed and learned may lead to new generative mechanisms beyond classical diffusion.

\section*{Acknowledgments}
We thank K. Fukushima, K. Hagihara, and K. Shiozaki for helpful discussions.
Some of the results presented in this paper were obtained using the ABCI-Q system provided by G-QuAT at AIST.
We also sincerely thank RIKEN AIP and iTHEMS for providing the computer systems.
A.T. acknowledges support from JSPS KAKENHI Grant Numbers
JP22H05111 and JP22H05116.
Y.A. acknowledges support from JSPS KAKENHI Grant Numbers
JP23H01174 and JP23K25870.
Y.H. acknowledges support from JSPS KAKENHI Grant Numbers
JP22H05111, JP22H05118, and JP24K23186, and JST PRESTO Grant Number
JPMJPR24K8.

\section*{AI Use Statement}
Generative AI tools were used for brainstorming, improving the
readability and organization of the manuscript, and assisting with
software implementation. Suggestions produced by these tools were
independently examined and verified by the authors. Generative AI tools
were not used as an autonomous source of scientific claims, proofs, or
result interpretation. The authors take full responsibility for the
final content of this work.

\bibliography{ref}

@string{icml2015 = "Proceedings of the 32nd International Conference on Machine Learning"}

@string{icml2024 = "Proceedings of the 41st International Conference on Machine Learning"}

@string{icml2025 = "Proceedings of the 42nd International Conference on Machine Learning"}

@string{neurips2019 = "Advances in Neural Information Processing Systems 32"}

@string{neurips2020 = "Advances in Neural Information Processing Systems 33"}

@string{neurips2021 = "Advances in Neural Information Processing Systems 34"}

@string{neurips2022 = "Advances in Neural Information Processing Systems 35"}

@string{neurips2025 = "Advances in Neural Information Processing Systems 38"}

@string{iclr2021 = "Proceedings of the 9th International Conference on Learning Representations"}

@string{iclr2023 = "Proceedings of the 11th International Conference on Learning Representations"}

@string{iclr2025 = "Proceedings of the 13th International Conference on Learning Representations"}

@string{cvpr2021 = "Proceedings of the IEEE/CVF Conference on Computer Vision and Pattern Recognition (CVPR)"}

@string{tmlr = "Transactions on Machine Learning Research"}

@string{prx = "Physical Review X"}

@string{pre = "Physical Review E"}

@inproceedings{Sohl2015DeepUL,
  author    = {Sohl-Dickstein, Jascha and Weiss, Eric and Maheswaranathan, Niru and Ganguli, Surya},
  title     = {Deep Unsupervised Learning using Nonequilibrium Thermodynamics},
  booktitle = icml2015,  
  pages     = {2256--2265},
  year      = {2015},
  note      = {\url{https://proceedings.mlr.press/v37/sohl-dickstein15.html}}
}

@inproceedings{Ho2020DenoisingDP,
 author = {Ho, Jonathan and Jain, Ajay and Abbeel, Pieter},
 title     = {Deep Unsupervised Learning using Nonequilibrium Thermodynamics},
 booktitle = neurips2020,
 pages = {6840--6851},
 year = {2020},
 note      = {\url{https://proceedings.neurips.cc/paper/2020/hash/4c5bcfec8584af0d967f1ab10179ca4b-Abstract.html}}
}

@inproceedings{Song2019GenerativeMB,
  title={Generative Modeling by Estimating Gradients of the Data Distribution},
  author={Yang Song and Stefano Ermon},
  booktitle = neurips2019,
  year={2019},
  note = {\url{https://papers.nips.cc/paper_files/paper/2019/hash/3001ef257407d5a371a96dcd947c7d93-Abstract.html}}
}

@inproceedings{Song2020ScoreBasedGM,
  author={Yang Song and Jascha Narain Sohl-Dickstein and Diederik P. Kingma and Abhishek Kumar and Stefano Ermon and Ben Poole},
  title={Score-Based Generative Modeling through Stochastic Differential Equations},
  booktitle = iclr2021,
  year = {2021},
  note      = {\url{https://openreview.net/forum?id=PxTIG12RRHS}}
}

@inproceedings{Dhariwal2021DiffusionMB,
  title={Diffusion Models Beat GANs on Image Synthesis},
  author={Prafulla Dhariwal and Alex Nichol},
  booktitle = neurips2021,
  pages = {8780--8794},
  year={2021},
  note = {\url{https://papers.nips.cc/paper/2021/hash/49ad23d1ec9fa4bd8d77d02681df5cfa-Abstract.html}}
}

@inproceedings{Rombach2021HighResolutionIS,
  title={High-Resolution Image Synthesis with Latent Diffusion Models},
  author={Robin Rombach and A. Blattmann and Dominik Lorenz and Patrick Esser and Bj{\"o}rn Ommer},
  booktitle = cvpr2021,
  pages     = {10684-10695},
  year={2022},
  note = {\url{https://openaccess.thecvf.com/content/CVPR2022/html/Rombach_High-Resolution_Image_Synthesis_With_Latent_Diffusion_Models_CVPR_2022_paper.html}}
}

@inproceedings{Ho2022VideoDM,
  title={Video Diffusion Models},
  author={Jonathan Ho and Tim Salimans and Alexey Gritsenko and William Chan and Mohammad Norouzi and David J. Fleet},
  booktitle = neurips2022,
  pages = {8633--8646},
  year={2022},
  note = {\url{https://proceedings.neurips.cc/paper_files/paper/2022/hash/39235c56aef13fb05a6adc95eb9d8d66-Abstract-Conference.html}}
}

@inproceedings{Singer2022MakeAVideoTG,
  title={Make-A-Video: Text-to-Video Generation without Text-Video Data},
  author={Uriel Singer and Adam Polyak and Thomas Hayes and Xiaoyue Yin and Jie An and Songyang Zhang and Qiyuan Hu and Harry Yang and Oron Ashual and Oran Gafni and Devi Parikh and Sonal Gupta and Yaniv Taigman},
  booktitle = iclr2023,
  year={2023},
  note = {\url{https://openreview.net/forum?id=nJfylDvgzlq}}
}

@inproceedings{Lou2023DiscreteDM,
  author={Aaron Lou and Chenlin Meng and Stefano Ermon},
  title={Discrete Diffusion Modeling by Estimating the Ratios of the Data Distribution},
  booktitle = icml2024,
  pages     = {32819--32848},
  year={2024},
  note      = {\url{https://proceedings.mlr.press/v235/lou24a.html}}
}

@inproceedings{Sun2022ScorebasedCD,
  author={Haoran Sun and Lijun Yu and Bo Dai and Dale Schuurmans and Hanjun Dai},
  title={Score-based Continuous-time Discrete Diffusion Models},
  booktitle = iclr2023,
  year={2023},
  note      = {\url{https://openreview.net/forum?id=BYWWwSY2G5s}}
}

@inproceedings{Sahoo2025TheDD,
  title = 	 {The Diffusion Duality},
  author =       {Sahoo, Subham Sekhar and Deschenaux, Justin and Gokaslan, Aaron and Wang, Guanghan and Chiu, Justin T and Kuleshov, Volodymyr},
  booktitle = icml2025,
  pages={52584--52619},
  year={2025},
  note      = {\url{https://proceedings.mlr.press/v267/sahoo25a.html}}
}

@inproceedings{Xu2024EnergyBasedDL,
  title={Energy-Based Diffusion Language Models for Text Generation},
  author={Minkai Xu and Tomas Geffner and Karsten Kreis and Weili Nie and Yilun Xu and Jure Leskovec and Stefano Ermon and Arash Vahdat},
  booktitle = iclr2025,
  year={2025},
  note      = {\url{https://openreview.net/forum?id=sL2F9YCMXf}}
}

@inproceedings{Ou2024YourAD,
  title={Your Absorbing Discrete Diffusion Secretly Models the Conditional Distributions of Clean Data},
  author={Jingyang Ou and Shen Nie and Kaiwen Xue and Fengqi Zhu and Jiacheng Sun and Zhenguo Li and Chongxuan Li},
  booktitle = iclr2025,
  year={2025},
  note      = {\url{https://openreview.net/forum?id=sMyXP8Tanm}}
}

@inproceedings{Nie2025LargeLD,
  title={Large Language Diffusion Models},
  author={Shen Nie and Fengqi Zhu and Zebin You and Xiaolu Zhang and Jingyang Ou and Jun Hu and Jun Zhou and Yankai Lin and Jirong Wen and Chongxuan Li},
  booktitle = neurips2025,
  year={2025},
  note      = {\url{https://openreview.net/forum?id=KnqiC0znVF}}
}

@inproceedings{Zhao2025d1SR,
  title={d1: Scaling Reasoning in Diffusion Large Language Models via Reinforcement Learning},
  author={Siyan Zhao and Devaansh Gupta and Qinqing Zheng and Aditya Grover},
  booktitle = neurips2025,
  year={2025},
  note      = {\url{https://openreview.net/forum?id=7ZVRlBFuEv}}
}

@inproceedings{Campbell2022ACT,
  title={A Continuous Time Framework for Discrete Denoising Models},
  author={Andrew Campbell and Joe Benton and Valentin De Bortoli and Tom Rainforth and George Deligiannidis and A. Doucet},
  booktitle = neurips2022,
  pages = {28266--28279},
  year={2022},
  note      = {\url{https://proceedings.neurips.cc/paper_files/paper/2022/hash/b5b528767aa35f5b1a60fe0aaeca0563-Abstract-Conference.html}}
}

@inproceedings{Hirono2024UnderstandingDM,
  title={Understanding Diffusion Models by Feynman's Path Integral},
  author={Yuji Hirono and Akinori Tanaka and Kenji Fukushima},
  booktitle = icml2024,
  pages = 	 {18324--18351},
  year={2024},
  note      = {\url{https://proceedings.mlr.press/v235/hirono24a.html}}
}

@article{Yu2024NonequilbriumPO,
  title={Nonequilbrium physics of generative diffusion models},
  author={Zhendong Yu and Haiping Huang},
  journal=pre,
  year={2024},
  note      = {\url{https://doi.org/10.1103/PhysRevE.111.014111}}
}

@article{Yaguchi2025TheGO,
  title={The Geometry of Phase Transitions in Diffusion Models: Tubular Neighbourhoods and Singularities},
  author={Manato Yaguchi and Kotaro Sakamoto and Ryosuke Sakamoto and Masato Tanabe and Masatomo Akagawa and Yusuke Hayashi and Masahiro Suzuki and Yutaka Matsuo},
  journal=tmlr,
  year={2025},
  note      = {\url{https://openreview.net/forum?id=ahVFKFLYk2}}
}

@article{Ikeda2024SpeedaccuracyTF,
  title={Speed-accuracy trade-off for the diffusion models: Wisdom from nonequilibrium thermodynamics and optimal transport},
  author={Kotaro Ikeda and Tomoya Uda and Daisuke Okanohara and Sosuke Ito},
  journal=prx,
  year={2024},
  note      = {\url{https://doi.org/10.1103/x5vj-8jq9}}
}

@inproceedings{Hoogeboom2021ArgmaxFA,
  title={Argmax Flows and Multinomial Diffusion: Learning Categorical Distributions},
  author={Emiel Hoogeboom and Didrik Nielsen and Priyank Jaini and Patrick Forr'e and Max Welling},
  booktitle=neurips2021,
  pages = {12454--12465},
  year={2021},
  note      = {\url{https://proceedings.neurips.cc/paper/2021/hash/67d96d458abdef21792e6d8e590244e7-Abstract.html}}
}

@inproceedings{Austin2021StructuredDD,
  title={Structured Denoising Diffusion Models in Discrete State-Spaces},
  author={Jacob Austin and Daniel D. Johnson and Jonathan Ho and Daniel Tarlow and Rianne van den Berg},
  booktitle=neurips2021,
  pages = {17981--17993},
  year={2021},
  note      = {\url{https://proceedings.neurips.cc/paper/2021/hash/958c530554f78bcd8e97125b70e6973d-Abstract.html}}
}

@article{Khanna2025MercuryUL,
  title={Mercury: Ultra-Fast Language Models Based on Diffusion},
  author={Samar Khanna and Siddhant Kharbanda and Shufan Li and Harshit Varma and Eric Wang and Sawyer Birnbaum and Ziyang Luo and Yanis Miraoui and Akash Palrecha and Stefano Ermon and Aditya Grover and Volodymyr Kuleshov},
  journal={ArXiv},
  year={2025},
  volume={abs/2506.17298},
  note      = {\url{https://api.semanticscholar.org/CorpusID:280000358}}
}

@article{Song2025SeedDA,
  title={Seed Diffusion: A Large-Scale Diffusion Language Model with High-Speed Inference},
  author={Yuxuan Song and Zheng Zhang and Cheng Luo and Peng Gao and Fan Xia and Hao Luo and Zheng Li and Yuehang Yang and Hongli Yu and Xingwei Qu and Yuwei Fu and Jing Su and Ge Zhang and Wenhao Huang and Mingxuan Wang and Lin Yan and Xiaoying Jia and Jingjing Liu and Weizhi Ma and Yaqing Zhang and Yong-Xu Wu and Hao-Min Zhou},
  journal={ArXiv},
  year={2025},
  volume={abs/2508.02193},
  note      = {\url{https://api.semanticscholar.org/CorpusID:280421264}}
}

@article{Kwon2018FluctuationTF,
  title = {Fluctuation Theorems for a Quantum Channel},
  author = {Hyukjoon Kwon and Myungshik S. Kim},
  journal = prx,
  year = {2018},
  note={\url{https://api.semanticscholar.org/CorpusID:201317121}}
}

@article{Manzano2019ASI,
  title={A short introduction to the Lindblad master equation},
  author={Daniel Manzano},
  journal={AIP Advances},
  year={2019},
  url={https://api.semanticscholar.org/CorpusID:184487806}
}

@book{Breuer2002,
  author    = {H.-P. Breuer and F. Petruccione},
  title     = {The Theory of Open Quantum Systems},
  publisher = {Oxford University Press},
  year      = {2002},
  doi       = {10.1093/acprof:oso/9780199213900.001.0001}
}

@article{Lindblad1976,
  author  = {G{\"o}ran Lindblad},
  title   = {On the Generators of Quantum Dynamical Semigroups},
  journal = {Communications in Mathematical Physics},
  volume  = {48},
  number  = {2},
  pages   = {119--130},
  year    = {1976},
  doi     = {10.1007/BF01608499}
}

@article{Gorini1976,
  author  = {V. Gorini and A. Kossakowski and E. C. G. Sudarshan},
  title   = {Completely Positive Dynamical Semigroups of $N$-Level Systems},
  journal = {Journal of Mathematical Physics},
  volume  = {17},
  number  = {5},
  pages   = {821--825},
  year    = {1976},
  doi     = {10.1063/1.522979}
}

@article{Dalibard1992,
  author  = {Jean Dalibard and Yvan Castin and Klaus M{\o}lmer},
  title   = {Wave-Function Approach to Dissipative Processes in Quantum Optics},
  journal = {Physical Review Letters},
  volume  = {68},
  number  = {5},
  pages   = {580--583},
  year    = {1992},
  doi     = {10.1103/PhysRevLett.68.580}
}

@article{Dum1992,
  author  = {R. Dum and P. Zoller and H. Ritsch},
  title   = {Monte Carlo Simulation of the Atomic Master Equation for Spontaneous Emission},
  journal = {Physical Review A},
  volume  = {45},
  number  = {7},
  pages   = {4879--4887},
  year    = {1992},
  doi     = {10.1103/PhysRevA.45.4879}
}

@book{Carmichael1993,
  author    = {H. J. Carmichael},
  title     = {An Open Systems Approach to Quantum Optics},
  publisher = {Springer},
  year      = {1993},
  doi       = {10.1007/978-3-540-47650-7}
}

@article{Amin2018QBM,
  author  = {Mohammad H. Amin and Evgeny Andriyash and Jason Rolfe and Bohdan Kulchytskyy and Roger Melko},
  title   = {Quantum Boltzmann Machine},
  journal = {Physical Review X},
  volume  = {8},
  number  = {2},
  pages   = {021050},
  year    = {2018},
  doi     = {10.1103/PhysRevX.8.021050}
}

@article{Biamonte2017QML,
  author  = {Jacob Biamonte and Peter Wittek and Nicola Pancotti and Patrick Rebentrost and Nathan Wiebe and Seth Lloyd},
  title   = {Quantum Machine Learning},
  journal = {Nature},
  volume  = {549},
  pages   = {195--202},
  year    = {2017},
  doi     = {10.1038/nature23474}
}

@article{Benedetti2019VQC,
  author  = {Marcello Benedetti and Erika Lloyd and Stefan Sack and Mattia Fiorentini},
  title   = {Parameterized Quantum Circuits as Machine Learning Models},
  journal = {Quantum Science and Technology},
  volume  = {4},
  number  = {4},
  pages   = {043001},
  year    = {2019},
  doi     = {10.1088/2058-9565/ab4eb5}
}

@book{FeynmanHibbs1965,
  author    = {Richard P. Feynman and A. R. Hibbs},
  title     = {Quantum Mechanics and Path Integrals},
  publisher = {McGraw-Hill},
  year      = {1965}
}

@article{radford2019language,
  title={Language models are unsupervised multitask learners},
  author={Radford, Alec and Wu, Jeffrey and Child, Rewon and Luan, David and Amodei, Dario and Sutskever, Ilya and others},
  journal={OpenAI blog},
  volume={1},
  number={8},
  pages={9},
  year={2019}
}

@inproceedings{morris2020textattack,
  title={TextAttack: A Framework for Adversarial Attacks, Data Augmentation, and Adversarial Training in NLP},
  author={Morris, John and Lifland, Eli and Yoo, Jin Yong and Grigsby, Jake and Jin, Di and Qi, Yanjun},
  booktitle={Proceedings of the 2020 Conference on Empirical Methods in Natural Language Processing: System Demonstrations},
  pages={119--126},
  year={2020}
}

@article{team2026diffusiongemma,
  title={DiffusionGemma Technical Report},
  author={{DiffusionGemma Team} and Ta{\"\i}ga, Adrien Ali and Assiene, James and Calandriello, Daniele and Chaabouni, Rahma and Gante, Jo{\~a}o and von Glehn, Tamara and Keating, Nate and Knutsen, Chris and Kukla, Martin and others},
  journal={arXiv preprint arXiv:2608.00146},
  year={2026},
  note      = {\url{https://arxiv.org/abs/2608.00146}}
}

@article{team2026gemma,
  title={Gemma 4 technical report},
  author={{Gemma Team} and Abd, Sherif El and Aggarwal, Vaibhav and Algayres, Robin and Andreev, Alek and Bachem, Olivier and Ballantyne, Ian and Brick, Cormac and C{\u{a}}rbune, Victor and Casbon, Michelle and others},
  journal={arXiv preprint arXiv:2607.02770},
  year={2026},
  note      = {\url{https://arxiv.org/abs/2607.02770}}
}

@article{wang2026remasking,
  title={Remasking discrete diffusion models with inference-time scaling},
  author={Wang, Guanghan and Schiff, Yair and Sahoo, Subham and Kuleshov, Volodymyr},
  journal={Advances in Neural Information Processing Systems},
  volume={38},
  pages={147282--147339},
  year={2026}
}

@inproceedings{hong2026wide,
  title={Wide-in, narrow-out: Revokable decoding for efficient and effective dllms},
  author={Hong, Feng and Yu, Geng and Ye, Yushi and Huang, Haicheng and Zheng, Huangjie and Zhang, Ya and Wang, Yanfeng and Yao, Jiangchao},
  booktitle={International Conference on Learning Representations},
  volume={2026},
  pages={40132--40150},
  year={2026}
}

@inproceedings{song2021denoising,
title={Denoising Diffusion Implicit Models},
author={Jiaming Song and Chenlin Meng and Stefano Ermon},
booktitle={International Conference on Learning Representations},
year={2021},
url={https://openreview.net/forum?id=St1giarCHLP}
}

@misc{Nie2026iLLaDA,
  title = {Improved Large Language Diffusion Models},
  author = {Shen Nie and Qiyang Min and Shaoxuan Xu and Zihao Huang and Yuxuan Song and Yong Shan and Yankai Lin and Wayne Xin Zhao and Chongxuan Li and Ji-Rong Wen},
  year = {2026},
  eprint = {2606.25331},
  archivePrefix = {arXiv},
  primaryClass = {cs.CL},
  url = {https://arxiv.org/abs/2606.25331}
}

@misc{Nasu2025QuantumReversibility,
  title = {Quantum Reversibility Meets Classical Reverse Diffusion},
  author = {Nasu, Ryota and Tanaka, Gota and Tsuchiya, Asato},
  year = {2025},
  eprint = {2510.18512},
  archivePrefix = {arXiv},
  primaryClass = {quant-ph},
  url = {https://arxiv.org/abs/2510.18512}
}
\bibliographystyle{iclr2027/iclr2027_conference}

\newpage

\makeatletter
\makeatother
\appendix

\startcontents[appendix]

\printcontents[appendix]{l}{1}{\setcounter{tocdepth}{2}}

\newpage

\section{Notations}
\label{sec:notation}

This section summarizes the notation used throughout this paper.

\paragraph{Diffusion Model}

In discrete language diffusion models, we use the following notation:
\begin{itemize}
    \item Time-forward quantities are denoted without tildes ($a,b,c,\dots$), while time-backward quantities are denoted with tildes ($\tilde{a},\tilde{b},\tilde{c},\dots$).
    
    \item $\vocab = \{ m \mid m = 0,1,\dots,V \}$ denotes the vocabulary set.
    
    \item $\vocab^\NumToken$ denotes the set of all possible token sequences of fixed length $\NumToken$:
    \[
    \vocab^\NumToken
    =
    \left\{
    \m = [m_1,m_2,\dots,m_\NumToken]
    \,\middle|\,
    m_p \in \vocab
    \text{ for }
    p=1,2,\dots,\NumToken
    \right\}.
    \]

    \item Each element $\m \in \vocab^\NumToken$ corresponds to a discrete sentence or token sequence.

    \item For masked diffusion, we assign token ID $0$ to the mask token \texttt{[m]}.
\end{itemize}

\paragraph{Mathematical Notation}

We also introduce the following shorthand:
\begin{itemize}
    \item The Kronecker delta (indicator function) is defined as
    \[
    1_{mn}
    =
    \begin{cases}
    1 & m=n, \\
    0 & \text{otherwise},
    \end{cases}
    \qquad m,n \in \vocab.
    \]
    
    \item The delta function excluding the $p$-th component is
    \[
    1_{\m_{-p} \n_{-p}}
    =
    \prod_{\substack{q = 1 \ q \neq p}}^{\NumToken}
    1_{m_q n_q},
    \qquad
    \m,\n \in \vocab^\NumToken.
    \]
    
    \item $\n_{p \to m}$ denotes the sequence obtained by replacing the $p$-th token of $\n$ with $m$:
    \[
    \n_{p \to m}
    =
    [n_1,\dots,n_{p-1},m,n_{p+1},\dots,n_\NumToken].
    \]
\end{itemize}

\paragraph{Open Quantum System Formulation}

For the quantum-mechanical formulation, we use:
\begin{itemize}
    \item Operators $\op{A}, \op{B}, \op{C}, \dots$, with backward-time operators denoted by $\tilde{A}, \tilde{B}, \tilde{C}, \dots$.
    
    \item The single-token Hilbert space
    \[
    \mc{H}_\vocab
    =
    \left\{
    \sum_{m \in \vocab} c_m \ket{m}
    \,\middle|\,
    c_m \in \mathbb{C}
    \right\}.
    \]
    
    \item The Hilbert space for token sequences of fixed length $\NumToken$
    \[
    \mc{H}
    =
    \left\{
    \sum_{\m \in \vocab^\NumToken} c_\m \ket{\m}
    \,\middle|\,
    c_\m \in \mathbb{C}
    \right\}
    =
    \mc{H}_\vocab^{\otimes \NumToken}.
    \]
\end{itemize}
\section{A Brief Review of Diffusion Language Models}
\label{sec:dlm_intro}

This section provides a self-contained review of the continuous-time formulation of diffusion language models used in \cref{sec:dlm_as_oqs}. 
We first recall finite-state Markov dynamics, its time reversal, and the transition kernels induced by infinitesimal generators. 
We then specialize this framework to the factorized transition structure used in language diffusion models and explain how this structure motivates $\tau$-leaping sampling.

\subsection{Diffusion Process}

We consider probability distributions $p(\m)$ on the finite state space $\vocab^\NumToken$.
\begin{mydefinition}{Markov Kernel}{mk}
    A Markov kernel on $\vocab^\NumToken$ is a matrix
    $T(\m,\n)$ satisfying
    \[
        T(\m,\n) \geq 0,
        \qquad
        \sum_{\m \in \vocab^\NumToken} T(\m,\n)=1
        \quad \text{for every } \n \in \vocab^\NumToken.
    \]
    It acts on a probability distribution $p$ as
    \[
        (Tp)(\m)
        =
        \sum_{\n \in \vocab^\NumToken}
        T(\m,\n)p(\n),
    \]
    yielding another probability distribution.
\end{mydefinition}
A continuous-time diffusion process on $\vocab^\NumToken$
is described by
\[
    \frac{d}{dt}p_t = L_t p_t,
\]
where $L_t$ is a time-dependent linear operator on $\mathbb{R}^{|\vocab^\NumToken|}$.
The evolution must preserve non-negativity and normalization for every initial probability distribution.
Equivalently, for any $s \leq t$, the solution can be written as $p_t=T_{t|s}p_s$, where $T_{t|s}$ is a Markov kernel.

\begin{mytheorem}{Diffusion Process}{}
    Consider the linear evolution equation
    \begin{align}
        \frac{d}{dt}p_t(\m)
        &=
        (L_t p_t)(\m)
        =
        \sum_{\n \in \vocab^\NumToken}
            Q_t(\m,\n)p_t(\n)
        . \label{eq:diff_eq_appdx}
    \end{align}
    Suppose that $Q_t$ is continuous in $t$ and that the evolution from any time $s$ to any later time $t$ is represented by a Markov kernel $T_{t|s}$ as
    defined in \cref{def:mk}.
    Then
    \begin{align}
        Q_t(\m,\n) \geq 0 \quad (\m \neq \n),
        \qquad
        \sum_{\m \in \vocab^\NumToken} Q_t(\m,\n) = 0
        . \label{eq:q_conds_appdx}
    \end{align}
\end{mytheorem}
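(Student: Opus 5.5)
The idea is to differentiate the Markov kernel $T_{t|s}$ at $t=s$ and read off the two conditions from the defining properties of a Markov kernel in \cref{def:mk}. The first step is to relate $T_{t|s}$ to $Q_s$. For a fixed starting time $s$ and a fixed state $\n$, take the initial distribution $p_s=\delta_{\n}$, i.e.\ $p_s(\m)=1_{\m\n}$. The solution of \cref{eq:diff_eq_appdx} is then $p_t(\m)=T_{t|s}(\m,\n)$. Since $Q_t$ is continuous, the linear ODE has a unique $C^1$ solution. Evaluating the equation at $t=s$ gives
\begin{align*}
  \frac{d}{dt}T_{t|s}(\m,\n)\Big|_{t=s} = \sum_{\k} Q_s(\m,\k)\,1_{\k\n} = Q_s(\m,\n),
\end{align*}
and $T_{s|s}(\m,\n)=1_{\m\n}$. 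In other words, $T_{s+h|s}=I+hQ_s+o(h)$ entrywise as $h\downarrow 0$.

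The column-sum condition comes next. For every $t\ge s$ we have $\sum_{\m}T_{t|s}(\m,\n)=1$. Differentiating this finite sum in $t$ at $t=s^+$ gives $\sum_\m Q_s(\m,\n)=0$. Alternatively, one can sum \cref{eq:diff_eq_appdx} over $\m$ for an arbitrary initial distribution: this gives $\sum_\n(\sum_\m Q_s(\m,\n))p_s(\n)=0$, and choosing delta initial conditions shows that each column sums to zero.

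Finally, take the off-diagonal sign condition. For $\m\neq\n$ we have $T_{s|s}(\m,\n)=0$ and $T_{s+h|s}(\m,\n)\ge 0$ for all $h>0$. The one-sided derivative is therefore
\begin{align*}
  Q_s(\m,\n)=\lim_{h\downarrow 0}\frac{T_{s+h|s}(\m,\n)-0}{h}\ge 0 .
\end{align*}
This holds for every $s$, which gives \cref{eq:q_conds_appdx}.

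The main point needing care is justifying that the derivative of $T_{t|s}$ at $t=s$ is exactly $Q_s$ as a right derivative. This requires existence and uniqueness for the linear ODE with continuous coefficients, so that $T_{t|s}$ is the fundamental solution and is $C^1$ in $t$ up to $t=s$. Once this is in place, the rest is a direct limit argument. Only right limits are used, so the restriction $s\le t$ causes no issue.
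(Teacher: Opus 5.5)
Your proposal is correct and takes essentially the same route as the paper. Both start from a point-mass initial distribution, expand $T_{s+h|s}=I+hQ_s+o(h)$, and read off $Q_s(\m,\n)\ge 0$ for $\m\neq\n$ from nonnegativity of the kernel and the zero column sums from preservation of normalization; your added appeal to existence and uniqueness of the $C^1$ solution just makes explicit the first-order expansion the paper uses without comment.
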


\begin{proof}
    Fix a time $t$ and a state $\k$, and take
    $p_t(\m)=1_{\m\k}$ as the initial distribution.
    For a small time increment $dt>0$, the evolution equation gives
    \begin{align}
        (T_{t+dt|t}p_t)(\m)
        =
        1_{\m\k}+dtQ_t(\m,\k)+o(dt).
    \end{align}
    Since $T_{t+dt|t}$ is a Markov kernel, the left-hand side is nonnegative. For $\m\neq\k$, this implies
    \[
        0 \leq dtQ_t(\m,\k)+o(dt).
    \]
    Dividing by $dt$ and taking $dt\to0$ yields $Q_t(\m,\k)\geq0$.

    Similarly, preservation of normalization gives
    \begin{align}
        1
        &=
        \sum_{\m\in\vocab^\NumToken}
            (T_{t+dt|t}p_t)(\m)
        =
        1+dt\sum_{\m\in\vocab^\NumToken}Q_t(\m,\k)+o(dt).
    \end{align}
    Subtracting $1$, dividing by $dt$, and taking $dt\to0$ yields
    $\sum_{\m}Q_t(\m,\k)=0$.
    Since $\k$ was arbitrary, both conditions follow.
\end{proof}
We now consider the time reversal of the diffusion process in \cref{eq:diff_eq_appdx}.
Reversing the direction of time gives
\begin{align}
  -\frac{d}{dt}p_t(\m)
  &=
  -\sum_{\n \in \vocab^\NumToken}
    Q_t(\m,\n)p_t(\n).
\end{align}
Although $-Q_t$ reproduces this evolution of $p_t$, it is not generally a valid Markov generator: its off-diagonal entries are negative wherever those of $Q_t$ are positive.
A valid reverse-time generator is instead obtained by reversing each transition and weighting its rate by a probability ratio, as follows.
\begin{mytheorem}{Time-Reversed Diffusion Process}{reverse_process}
  Let $p_t$ solve \cref{eq:diff_eq_appdx}, and suppose that $p_t(\n)>0$ for every $\n$ on the time interval considered.
  Define
  \begin{align}
    \tilde{Q}_t(\m,\n)
    &=
    Q_t(\n,\m)
    \frac{p_t(\m)}{p_t(\n)}
    \quad (\m\neq\n),
    \qquad
    \tilde{Q}_t(\m,\m)
    =
    -\sum_{\n\neq\m}\tilde{Q}_t(\n,\m).
    \label{eq:rev_Q_appdx}
  \end{align}
  Then $\tilde{Q}_t$ satisfies the Markov generator conditions in \cref{eq:q_conds_appdx}, and
  \begin{align}
    -\frac{d}{dt}p_t(\m)
    &=
    \sum_{\n\in\vocab^\NumToken}
      \tilde{Q}_t(\m,\n)p_t(\n).
    \label{eq:rev_diff_eq_appdx}
  \end{align}
  Thus, the reverse process follows the same family of distributions $p_t$ in the direction of decreasing $t$.
\end{mytheorem}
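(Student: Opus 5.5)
The plan has two parts: first show that $\tilde{Q}_t$ is a valid generator, then verify the reverse-time equation \cref{eq:rev_diff_eq_appdx} by direct computation, using only the forward equation \cref{eq:diff_eq_appdx} and the generator conditions \cref{eq:q_conds_appdx} that $Q_t$ satisfies by the preceding theorem.

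\textbf{Generator conditions.} For $\m\neq\n$ we have $Q_t(\n,\m)\geq 0$ and $p_t(\m),p_t(\n)>0$, so $\tilde{Q}_t(\m,\n)\geq 0$ is immediate. The column-sum condition $\sum_{\m}\tilde{Q}_t(\m,\n)=0$ holds by construction, since the diagonal entry in \cref{eq:rev_Q_appdx} is defined as minus the sum of the off-diagonal entries in the same column.

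\textbf{Reverse-time equation.} I will expand the right-hand side of \cref{eq:rev_diff_eq_appdx} by separating the diagonal term:
\begin{align*}
\sum_{\n}\tilde{Q}_t(\m,\n)p_t(\n)
=\sum_{\n\neq\m}Q_t(\n,\m)p_t(\m)-p_t(\m)\sum_{\n\neq\m}\tilde{Q}_t(\n,\m).
\end{align*}
Because $Q_t$ has zero column sums, the first sum equals $-Q_t(\m,\m)p_t(\m)$. In the second sum, substituting the definition gives $\tilde{Q}_t(\n,\m)=Q_t(\m,\n)p_t(\n)/p_t(\m)$. The factor $p_t(\m)$ then cancels, leaving $-\sum_{\n\neq\m}Q_t(\m,\n)p_t(\n)$. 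Adding the two pieces gives $-\sum_{\n}Q_t(\m,\n)p_t(\n)=-\frac{d}{dt}p_t(\m)$, which is exactly \cref{eq:rev_diff_eq_appdx}.

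The main point needing care is the bookkeeping of the diagonal entry. The definition fixes $\tilde{Q}_t(\m,\m)$ through a column sum, whereas the computation above naturally produces a row-type sum. The link between the two is the substitution of the off-diagonal formula, which turns the second sum into the forward row sum. One must also track which zero-sum identity is used where: that of $Q_t$ for the first sum, and the definition of $\tilde{Q}_t$ for the diagonal entry. Positivity of $p_t$ is needed only so that the ratios are defined.

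Finally, I will note that the interpretation follows directly. By the preceding theorem applied in reversed time $s=-t$, $\tilde{Q}_t$ is a Markov generator, so the reverse process transports $p_t$ along the same family of distributions in the direction of decreasing $t$.
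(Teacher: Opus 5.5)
Your proof is correct and is essentially the paper's argument run in the opposite direction. The paper starts from $-\sum_{\n} Q_t(\m,\n)p_t(\n)$, uses the column-sum identity of $Q_t$ to rewrite the diagonal term, and absorbs the remaining off-diagonal row sum into $\tilde{Q}_t(\m,\m)$, which is exactly your two substitutions; your explicit check of the generator conditions is a small addition the paper leaves implicit.

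One minor caveat concerns your closing remark. The preceding theorem only goes from Markov kernels to the conditions in \cref{eq:q_conds_appdx}, so it cannot be cited for the converse. That remark instead rests on \cref{eq:rev_diff_eq_appdx}, uniqueness of solutions of the linear ODE, and the standard fact that a generator satisfying \cref{eq:q_conds_appdx} yields Markov kernels.
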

\begin{proof}
  We rewrite the right-hand side of the original equation:
  \begin{align}
    -\sum_{\n} Q_t(\m,\n)\, p_t(\n)
    &=
    - Q_t(\m,\m)p_t(\m)
    -
    \sum_{\n\neq \m} Q_t(\m,\n)p_t(\n).
  \end{align}
  Using the column-sum condition
  \begin{align}
    \sum_{\n} Q_t(\n,\m)=0,
  \end{align}
  we have
  \begin{align}
    Q_t(\m,\m)
    =
    -\sum_{\n\neq \m} Q_t(\n,\m),
  \end{align}
  and therefore
  \begin{align}
    - Q_t(\m,\m)p_t(\m)
    =
    \left(\sum_{\n\neq \m} Q_t(\n,\m)\right)p_t(\m).
  \end{align}
  Hence,
  \begin{align}
    -\sum_{\n} Q_t(\m,\n)\, p_t(\n)
    &=
    \sum_{\n\neq \m} Q_t(\n,\m)p_t(\m)
    -
    \sum_{\n\neq \m} Q_t(\m,\n)p_t(\n).
  \end{align}
  The first term can be rewritten as
  \begin{align}
    Q_t(\n,\m)p_t(\m)
    =
    \left(
      Q_t(\n,\m)\frac{p_t(\m)}{p_t(\n)}
    \right)p_t(\n),
  \end{align}
  so that
  \begin{align}
    -\sum_{\n} Q_t(\m,\n)\, p_t(\n)
    &=
    \sum_{\n\neq \m}
    \left[
      Q_t(\n,\m)\frac{p_t(\m)}{p_t(\n)}
    \right] p_t(\n)
    -
    \sum_{\n\neq \m} Q_t(\m,\n)p_t(\n).
  \end{align}
  Now we define, for $\m\neq \n$,
  \begin{align}
    \tilde Q_t(\m,\n)
    :=
    Q_t(\n,\m)\frac{p_t(\m)}{p_t(\n)},
  \end{align}
  and set the diagonal part so that the column sum vanishes:
  \begin{align}
    \tilde Q_t(\m,\m)
    :=
    -\sum_{\k\neq \m} \tilde Q_t(\k,\m).
  \end{align}
  Then the second term above is absorbed into the diagonal contribution, and we obtain
  \begin{align}
    -\sum_{\n} Q_t(\m,\n)\, p_t(\n)
    =
    \sum_{\n} \tilde Q_t(\m,\n)\, p_t(\n).
  \end{align}
  This proves \cref{eq:rev_diff_eq_appdx}.
\end{proof}

\subsection{Sampling}

\subsubsection{Sampling 1-token per step}

In practice, directly evolving the full distribution $p_t$ is computationally intractable because the state space grows exponentially with the sequence length.
Instead, we simulate individual sample trajectories using transition probabilities derived from the forward or reverse evolution equations in \cref{eq:diff_eq_appdx,eq:rev_diff_eq_appdx}.
For example, to first order in $dt$, \cref{eq:diff_eq_appdx} yields
\begin{align}
  p_{t+dt}(\m)
  &=
  p_t(\m)
  +
  dt 
  \sum_{\n \in \vocab^\NumToken} 
    Q_t(\m, \n) p_t(\n)
  =
  \sum_{\n \in \vocab^\NumToken} 
    \left(
      \underbrace{
        1_{\m \n}
        +
        Q_t(\m, \n)
        dt
      }_{
        p_{t+dt|t}(\m | \n)
      }
    \right)
    p_t(\n).
\end{align}
Similarly, to first order in $dt$, the reverse-time evolution in \cref{eq:rev_diff_eq_appdx} yields
\begin{align}
  p_{t-dt}(\m)
  &=
  p_t(\m)
  +
  dt 
  \sum_{\n \in \vocab^\NumToken} 
    \tilde{Q}_t(\m, \n) p_t(\n)
  =
  \sum_{\n \in \vocab^\NumToken} 
    \left(
      \underbrace{
        1_{\m\n}
        +
        \tilde{Q}_t(\m, \n)
        dt
      }_{
        \tilde{p}_{t-dt|t}(\m | \n)
      }
    \right)
    p_t(\n)
    .
\end{align}
In practice, samples are generated using the learned reverse-time generator $\tilde{Q}_t^\text{model}$, with transition probabilities given by
\begin{align}
  \tilde{p}_{t-dt|t}^\text{model}(\m | \n)
  &=
  1_{\m\n}
  +
  \tilde{Q}_t^\text{model}(\m, \n)
  dt
  .
\end{align}

\subsubsection{Forward solution}

The sampling schemes above are accurate to first order in $dt$.
However, they are inefficient for language generation because the generator permits only single-token transitions:
\begin{align}
  Q_t(\m, \n)
  &=
  \sigma(t)
  \sum_{p=1}^{\NumToken}
  1_{\m_{-p}\n_{-p}} 
  Q^\text{tok}({m}_p, {n}_p) 
  ,
  \quad 
  1_{\m_{-p}\n_{-p}} 
  =
  \left\{
      \begin{array}{ll}
          1 & \m = \n \text{ except for ${m}_p, {n}_p$, }
          \\
          0 & \text{otherwise,}
      \end{array}
  \right.
  . \label{eq:delta_p_apdx}
\end{align}
Consequently, each update changes at most one token, requiring many sequential steps to generate a sequence.
To develop more efficient sampling schemes, we first derive the solution of the forward diffusion equation.
\begin{mytheorem}{}{integral2}
  With the diffusion matrix in \cref{eq:delta_p_apdx}, we get the formal solution of the diffusion equation
  \begin{align}
    p_{t|s}(\m|\n)
    &=
    \prod_{p=1}^N 
      p_{t|s}(m_p|n_p)
    ,\label{eq:solution-appdx}
  \end{align}
  where 
  \begin{align}
    p_{t|s}(m_p|n_p)
    &=
    \exp 
    \left[
        (\bar{\sigma}(t) - \bar{\sigma}(s))  Q^\text{tok}
      \right]
      (m_p, n_p)
    ,
    \quad 
    \bar{\sigma}(t) = \int_0^t \sigma(\tau) d\tau
    .
  \end{align}
\end{mytheorem}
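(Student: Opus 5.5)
The plan is to observe that the generator in \cref{eq:delta_p_apdx} is a sum of commuting single-site operators on the tensor-product space $\mathbb{R}^{|\vocab|^{\otimes \NumToken}}$, and then exponentiate. Regard $p_t$ as a vector in $(\mathbb{R}^{|\vocab|})^{\otimes \NumToken}$. The matrix element $1_{\m_{-p}\n_{-p}}Q^\text{tok}(m_p,n_p)$ is exactly the matrix of
\[
Q^{(p)} := I\otimes\cdots\otimes Q^\text{tok}\otimes\cdots\otimes I,
\]
with $Q^\text{tok}$ in slot $p$. Hence $Q_t=\sigma(t)\sum_{p=1}^{\NumToken} Q^{(p)}$. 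I would check this entrywise first; it is the only place where the indicator notation must be unpacked carefully, in particular for the diagonal $\m=\n$. There every $p$ contributes $Q^\text{tok}(m_p,m_p)$, which matches $\sum_p Q^{(p)}$ on the diagonal.

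Next, note that the time dependence factors as the scalar $\sigma(t)$ times the fixed matrix $A:=\sum_p Q^{(p)}$. Generators at different times therefore commute, and the time-ordered exponential reduces to an ordinary one:
\[
T_{t|s}=\exp\!\big[(\bar\sigma(t)-\bar\sigma(s))A\big].
\]
This solves $\frac{d}{dt}T_{t|s}=\sigma(t)A\,T_{t|s}$ with $T_{s|s}=I$, which can be verified by differentiating the exponential series. By uniqueness of solutions of linear ODEs with continuous coefficients, this is the kernel, and $p_{t|s}(\m|\n)=T_{t|s}(\m,\n)$.

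Finally, the $Q^{(p)}$ act on different tensor factors, so they pairwise commute. The exponential of their sum is therefore the product of exponentials, and each of these is $I\otimes\cdots\otimes e^{cQ^\text{tok}}\otimes\cdots\otimes I$ with $c=\bar\sigma(t)-\bar\sigma(s)$. The product is then $e^{cQ^\text{tok}}\otimes\cdots\otimes e^{cQ^\text{tok}}$. Its $(\m,\n)$ entry is $\prod_p e^{cQ^\text{tok}}(m_p,n_p)$, which is \cref{eq:solution-appdx}.

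The main obstacle is the bookkeeping in the first step, identifying the indicator-based formula with the tensor-product operator. The remaining steps are standard facts about commuting matrix exponentials.
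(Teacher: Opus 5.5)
Your proposal is correct and follows essentially the same route as the paper: both write $Q_t=\sigma(t)\sum_p I\otimes\cdots\otimes Q^\text{tok}\otimes\cdots\otimes I$, use that the time dependence is only a scalar factor to obtain the propagator $\exp[(\bar\sigma(t)-\bar\sigma(s))\sum_p Q^{(p)}]$, and then factorize it into $\bigotimes_p e^{(\bar\sigma(t)-\bar\sigma(s))Q^\text{tok}}$ (the paper via the identity $\exp(A\otimes I+I\otimes B)=e^A\otimes e^B$ applied recursively). Your entrywise check of the indicator formula, including the diagonal, and your explicit ODE-uniqueness justification are small gains in rigor over steps the paper leaves implicit.
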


\begin{proof}
  The decomposition of $Q_t(\m, \n)$ in \cref{eq:delta_p_apdx} means it defines matrix 
  \begin{align}
    Q_t = \sigma(t) \sum_{p=1}^N I \otimes \cdots \overset{\text{p-th}}{Q^\text{tok}} \cdots \otimes I
    .
  \end{align}
  Because the time dependency is only on the scaler function $\sigma(t)$, two matrices $Q_{t}$ and $Q_{s}$ commutes for any $t, s$, which means we can solve the ODE by taking exponential:
  \begin{align}
    p_{t|s}(\m|\n)
    &=
    \left[
      \exp \left(
        \int_s^t Q_\tau
        d\tau
      \right)
    \right]
    (\m, \n)
    \notag \\
    &=
    \left[
      \exp \left(
        (\underbrace{
          \bar{\sigma}(t) - \bar{\sigma}(s)
        }_{=: \Delta \bar{\sigma}(t|s)}
          )
        \sum_{p=1}^N
          I \otimes \cdots \overset{\text{p-th}}{Q^\text{tok}} \cdots \otimes I
      \right)
    \right]
    (\m, \n)
  .
  \end{align}
  Now, it is well known 
  \begin{align}
    \exp ( A \otimes I + I \otimes B ) 
    &=
    \exp(A) \otimes \exp(B),
  \end{align}
  for any square matrices $A, B$, and applying it recursively, we have 
  \begin{align}
    &p_{t|s}(\m|\n)
    \notag \\
    &=
    \left[
      \exp \left(
        \Delta \bar{\sigma}(t|s)
        \overset{p=1}{Q^\text{tok}}
      \right)
      \otimes 
      \exp \left(
        \Delta \bar{\sigma}(t|s)
        \overset{p=2}{Q^\text{tok}}
      \right)
      \otimes 
      \cdots 
      \otimes 
      \exp \left(
        \Delta \bar{\sigma}(t|s)
        \overset{p=N}{Q^\text{tok}}
      \right)
    \right]
    (\m, \n)
    \notag \\
    &=
    \prod_{p=1}^N 
      \exp \left[
        \Delta \bar{\sigma}(t|s)
        Q^\text{tok}
      \right]
      (m_p, n_p)
      ,
  \end{align}
  which proves the theorem.
\end{proof}

The theorem shows that the forward transition kernel factorizes across token positions.
Approximating each token-level transition probability to first order in $\Delta t$ gives
\begin{align}
  p_{t+ \Delta t|t}(\m|\n)
  &\approx
  \prod_{p=1}^N
    \left(
      \underbrace{
      1_{m_p n_p} 
      +
      \sigma(t)
      Q^\text{tok}(m_p, n_p)
      \Delta t 
      }_{
        p_{t+\Delta t|t}(m_p | n_p)
      }
    \right)
    .
    \label{eq:tau-forward}
\end{align}
Given the current sequence $\n$, this factorized approximation allows all tokens $m_p$ to be sampled independently, so that multiple tokens can change within a single time step.
This motivates the $\tau$-leaping scheme for reverse-time sampling introduced below.

\subsubsection{$\tau$-leaping sampling}

For the reverse process, the generator $\tilde{Q}_t$ depends on time through the probability ratios, so \cref{thm:integral2} does not apply directly.
Nevertheless, the forward-process construction motivates an approximate treatment of the reverse dynamics, leading to an interpretation analogous to
\cref{eq:tau-forward}.
Specifically, we approximate the reverse-time evolution as follows:
\begin{align}
  \tilde{p}_{t-\Delta t|t}^\text{$\tau$-leap}(\m|\n)
  &:=
  \prod_{p=1}^N
    \left(
      1_{m_p n_p} 
      +
      \sigma(t)
      \underbrace{
        \tilde{Q}^\text{tok}_t(m_p, \n) 
      }_{
        \text{defined below}
      }
      \Delta t 
    \right)
    ,
    \
    \label{eq:tau-leaping}
    \\
    \tilde{Q}^\text{tok}_t(m_p, \n)
    &:=
    Q^\text{tok}(n_p, m_p) 
    \frac{p_t(\n_{p \to m_p})}{p_t(\n)}
    .
\end{align}
This distribution approximates the exact reverse-time transition, as can be seen by expanding it in $\Delta t$:
\begin{align}
  \tilde{p}_{t-\Delta t|t}^\text{$\tau$-leap}(\m|\n)
  &=
  1_{\m\n}
  +
  \sigma(t)
  \tilde{Q}_t (\m, \n) \Delta t 
  +
  O(\Delta t ^2)
  \notag \\
  &=
  \tilde{p}_{t-\Delta t|t}(\m|\n)
  +
  O(\Delta t ^2)
  .
\end{align}

\subsection{Transformation of the $\n_0$-predictor into a Score Estimator}

Some discrete diffusion models estimate the conditional probability of the clean state $\n_0$ given the corrupted state $\n$ at time $t$:
\begin{align}
  q_\theta(\n_0 \mid \n).
  \label{eq:p0t_model}
\end{align}
Examples include LLaDA \citep{Nie2025LargeLD}, and iLLaDA \citep{Nie2026iLLaDA}.
In this section, we briefly review this formulation and derive its relation to the score-based formulation in terms of score.

To relate the two formulations, consider the joint conditional distribution at times $t \geq s \geq 0$:
\begin{align}
    p_{t, s | 0}(\mathbf{n}, \mathbf{m} \mid \mathbf{n}_0)
    =
    p_{t|s}(\mathbf{n} \mid \mathbf{m})
    p_{s|0}(\mathbf{m} \mid \mathbf{n}_0).
    \label{eq:pts0}
\end{align}
The forward transition probabilities on the right-hand side can be evaluated using \cref{thm:integral2}.
Conditioning on $\mathbf{n}$ and marginalizing over $\mathbf{n}_0$ using the denoising model in \cref{eq:p0t_model} then yields an approximation to the reverse transition probability \citep{Austin2021StructuredDD}:
\begin{align}
  \tilde{q}{_\theta}_{s|t}(\m|\n)
  &\propto
  \sum_{\n_0}
    p_{t, s, 0}(\mathbf{n}, \mathbf{m}| \mathbf{n}_0)
    q_\theta(\n_0|\n)
    .
    \label{eq:den_model}
\end{align}
By setting $s = t - \Delta t$, the model yields the reverse conditional model
\begin{align}
  \tilde{q}{_\theta}_{t-\Delta t|t}(\mathbf{m}|\mathbf{n})
  &=
  1_{\m \n} + \sigma(t)\tilde{Q}{_\theta}_t(\m, \n) \Delta t + O(\Delta t^2)
  \notag
  \\
  &=
  1_{\m \n} + \sigma(t)
  \left\{
    \begin{array}{ll}
      Q(\n, \m) s_\theta(\n, t)_{\m} \Delta t
      &
      (\m \neq \n)
      \\
      - \sum_{\k \neq \m}
        Q(\m, \k) s_\theta(\m, t)_{\k} \Delta t
      &
      (\m = \n)
    \end{array}
  \right.
  + O(\Delta t^2)
  .
\end{align}
The piecewise expression above can equivalently be written as follows:
\begin{align}
  \tilde{q}{_\theta}_{t-\Delta t|t}(\mathbf{m}|\mathbf{n})
  &=
  1_{\m \n} 
  + 
  \sigma(t)
  \Big(
    Q(\n, \m) s_\theta (\n, t)_{\m} 
    -
    1_{\m \n} 
    \sum_{\k}
      Q(\m, \k)
      s_\theta (\n, t)_{\k} 
  \Big)
  \Delta t
  +
  O(\Delta t ^2).
  \label{eq:sc_model}
\end{align}

\begin{mytheorem}{Transformation formula}{ptos}
  By ensuring the consistency between \cref{eq:den_model} and \cref{eq:sc_model}, the following holds:
  \begin{align}
    s_\theta(\n, t)_{\m}
    &=
    \frac{
            \sum_{\n_0}
              p_{t|0}(\mathbf{m}|\mathbf{n}_0)
              q_\theta(\n_0|\n)
            }{
              \sum_{\n_0}
                p_{t|0}(\mathbf{n}|\mathbf{n}_0)
                q_\theta(\n_0|\n)
            }
    .
  \end{align}
\end{mytheorem}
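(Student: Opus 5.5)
We will expand the denoising-model reverse kernel in \cref{eq:den_model} to first order in $\Delta t$ and match the result term by term with \cref{eq:sc_model}. First, write the normalized form explicitly. Since $p_{t,s|0}(\n,\m|\n_0)=p_{t|s}(\n|\m)p_{s|0}(\m|\n_0)$, the normalizing constant is $\sum_{\m}p_{t|s}(\n|\m)p_{s|0}(\m|\n_0)=p_{t|0}(\n|\n_0)$ by Chapman--Kolmogorov (which holds for the exponential solution in \cref{thm:integral2}). This gives
\begin{align}
  \tilde{q}{_\theta}_{s|t}(\m|\n)
  =
  \sum_{\n_0}
  \frac{p_{t|s}(\n|\m)\,p_{s|0}(\m|\n_0)}{p_{t|0}(\n|\n_0)}\,
  q_\theta(\n_0|\n).
\end{align}
Normalizing per $\n_0$ is the natural reading of \cref{eq:den_model} as a posterior mixture. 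We will remark that this normalization is the one under which the formula in \cref{thm:ptos} is consistent with \cref{eq:sc_model}.

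Next, set $s=t-\Delta t$ and expand. From \cref{thm:integral2}, $p_{t|t-\Delta t}(\n|\m)=1_{\n\m}+\sigma(t)Q(\n,\m)\Delta t+O(\Delta t^2)$. Also $p_{t-\Delta t|0}(\m|\n_0)=p_{t|0}(\m|\n_0)+O(\Delta t)$. Take the off-diagonal case $\m\neq\n$ first. Here the factor $1_{\n\m}$ vanishes, so to first order
\begin{align}
  \tilde{q}{_\theta}_{t-\Delta t|t}(\m|\n)
  =
  \sigma(t)Q(\n,\m)\Delta t
  \sum_{\n_0}\frac{p_{t|0}(\m|\n_0)}{p_{t|0}(\n|\n_0)}q_\theta(\n_0|\n)
  +O(\Delta t^2).
\end{align}
Comparing with the off-diagonal entry $\sigma(t)Q(\n,\m)s_\theta(\n,t)_\m\Delta t$ of \cref{eq:sc_model} identifies $s_\theta(\n,t)_\m$ with the ratio-type expression above.

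The main obstacle is reconciling this form with the stated one, which has a ratio of sums rather than a sum of ratios. We would argue as follows. The model is trained so that $q_\theta(\n_0|\n)$ approximates the true posterior $p_{0|t}(\n_0|\n)\propto p_{t|0}(\n|\n_0)p_0(\n_0)$. Under this identification, define the score through the marginals: $p_t(\m)/p_t(\n)=\sum_{\n_0}p_{t|0}(\m|\n_0)p_0(\n_0)/\sum_{\n_0}p_{t|0}(\n|\n_0)p_0(\n_0)$. Then substitute $p_0(\n_0)=q_\theta(\n_0|\n)p_t(\n)/p_{t|0}(\n|\n_0)$. A cleaner route is to interpret \cref{eq:den_model} with a single global normalization, dividing by $\sum_{\n_0}p_{t|0}(\n|\n_0)q_\theta(\n_0|\n)$. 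The first-order coefficient then becomes exactly the stated ratio of sums, which we would adopt as the intended reading of the proportionality sign.

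Finally, we check the diagonal entry $\m=\n$. Normalization of $\tilde{q}_\theta$ forces the $O(\Delta t)$ diagonal coefficient to be minus the sum of the off-diagonal ones. This matches the diagonal term of \cref{eq:sc_model}, so no extra condition arises and the identification of $s_\theta$ is complete.
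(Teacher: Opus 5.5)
Your closing argument is exactly the paper's proof, and it is correct. You read the $\propto$ in \cref{eq:den_model} as a single $\n$-dependent constant fixed by $\sum_{\m}\tilde q_{\theta,s|t}(\m|\n)=1$, evaluate it by Chapman--Kolmogorov as $\sum_{\n_0}p_{t|0}(\n|\n_0)q_\theta(\n_0|\n)$, set $s=t-\Delta t$, and match the off-diagonal $O(\Delta t)$ coefficients with \cref{eq:sc_model}. Getting the diagonal entry from normalization, instead of the paper's explicit first-order expansion of $p_{t-\Delta t|0}$, is a legitimate shortcut.

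The problem is that the earlier parts of the proposal contradict this argument. Normalizing separately for each $\n_0$ is not a reading of \cref{eq:den_model}. That normalizer depends on $\n_0$, so the resulting kernel is not proportional in $\m$ to the displayed sum. It yields $s_\theta(\n,t)_\m=\sum_{\n_0}\frac{p_{t|0}(\m|\n_0)}{p_{t|0}(\n|\n_0)}q_\theta(\n_0|\n)$, which is a different quantity. Your remark that this is the normalization under which the theorem is consistent with \cref{eq:sc_model} is therefore false.

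The posterior paragraph cannot bridge the two expressions either. Carry out the substitution $q_\theta(\n_0|\n)=p_{t|0}(\n|\n_0)p_0(\n_0)/p_t(\n)$. The sum of ratios then collapses to $p_t(\m)/p_t(\n)$. The stated ratio of sums instead becomes the ratio of $\sum_{\n_0}p_{t|0}(\m|\n_0)p_{t|0}(\n|\n_0)p_0(\n_0)$ to $\sum_{\n_0}p_{t|0}(\n|\n_0)^2p_0(\n_0)$.

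Take a single token with two symbols, uniform noise and uniform $p_0$. The former equals $1$, while the latter is strictly less than $1$ for $t>0$. The two coincide when $p_{t|0}(\n|\n_0)$ is constant on the support of $q_\theta(\cdot|\n)$, as at a masked position in masked diffusion, but not in general. So the two normalizations define genuinely different $s_\theta$, and the stated formula holds only under the global one.

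Delete the per-$\n_0$ computation and the posterior detour. State from the outset that $\propto$ means one normalizing constant depending on $\n$ alone. What remains is a complete proof.
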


\begin{proof}
  First, we rewrite the right-hand side of \cref{eq:den_model} by using \cref{eq:pts0} as
  \begin{align}
    \tilde{q}{_\theta}_{s|t}(\m|\n)
    &\propto
    \sum_{\n_0}
      p_{t, s, 0}(\mathbf{n}, \mathbf{m}| \mathbf{n}_0)
      q_\theta(\n_0|\n)
    \notag
    \\
    &=
    \sum_{\n_0}
      p_{t|s}(\mathbf{n}|\mathbf{m}) p_{s|0}(\mathbf{m}|\mathbf{n}_0)
      q_\theta(\n_0|\n).
  \end{align}
  Substituting $s = t - \Delta t$ into each probability, we obtain:
  \begin{align}
    p_{t|s}(\mathbf{n}|\mathbf{m})
    &=
    p_{t|t - \Delta t}(\mathbf{n}|\mathbf{m})
    \notag \\
    &=
    1_{\m \n}
    +
    \sigma(t)Q(\n, \m)\Delta t
    + 
    O(\Delta t ^2)
    ,
    \\
    p_{s|0}(\mathbf{m}|\mathbf{n}_0)
    &=
    p_{t - \Delta t|0}(\mathbf{m}|\mathbf{n}_0)
    =
    p_{t|0}(\mathbf{m}|\mathbf{n}_0)
    -
    \frac{\partial}{\partial t}
    p_{t|0}(\mathbf{m}|\mathbf{n}_0)
    \Delta t
    +
    O(\Delta t^2)
    \notag \\
    &=
    p_{t|0}(\mathbf{m}|\mathbf{n}_0)
    -
    \sum_{\k}
    \sigma(t) Q(\m, \k)
    p_{t|0}(\mathbf{k}|\mathbf{n}_0)
    \Delta t
    +
    O(\Delta t^2)
    ,
  \end{align}
  therefore, we get 
  \begin{align}
    &
    \sum_{\n_0}
      p_{t|s}(\mathbf{n}|\mathbf{m}) p_{s|0}(\mathbf{m}|\mathbf{n}_0)
      q_\theta(\n_0|\n)
    \notag \\
    &=
    \sum_{\n_0}
      \Big( 
        1_{\m \n}
        +
        \sigma(t)Q(\n, \m)\Delta t
      \Big)
      \Big(
        p_{t|0}(\mathbf{m}|\mathbf{n}_0)
        -
        \sum_{\k}
        \sigma(t) Q(\m, \k)
        p_{t|0}(\mathbf{k}|\mathbf{n}_0)
        \Delta t
      \Big)
      q_\theta(\n_0|\n)
    \notag \\
    &=
          1_{\m \n}
          \sum_{\n_0}
            p_{t|0}(\mathbf{m}|\mathbf{n}_0)
            q_\theta(\n_0|\n)
          \notag \\ & \
          +
          \sigma(t)
          \Big(
            Q(\n, \m)
            \sum_{\n_0}
              p_{t|0}(\mathbf{m}|\mathbf{n}_0)
              q_\theta(\n_0|\n)
            -
            1_{\m \n}
            \sum_{\k}
            Q(\m, \k)
            \sum_{\n_0}
              p_{t|0}(\mathbf{k}|\mathbf{n}_0)
              q_\theta(\n_0|\n)
          \Big)
          \Delta t
    .
  \end{align}
  Next, we determine the proportionality constant in \cref{eq:den_model}.
  The necessary condition is $\sum_{\m} \tilde{q}{_\theta}_{s|t}(\m|\n) = 1$, which can be realized by dividing the right hand side by its summation over $\m$.
  Since the Chapman-Kolmogorov equation holds, we have
  \begin{align}
    &
    \sum_{\m}
    \sum_{\n_0}
      p_{t|s}(\mathbf{n}|\mathbf{m}) p_{s|0}(\mathbf{m}|\mathbf{n}_0)
      q_\theta(\n_0|\n)
    =
    \sum_{\n_0}
      p_{t|0}(\mathbf{n}|\mathbf{n}_0)
      q_\theta(\n_0|\n)
    .
  \end{align}
  Combining all, we have 
  \begin{align}
    &
    \tilde{q}{_\theta}_{s|t}(\m|\n)
    \notag \\
    &=
    \frac{
      \sum_{\n_0}
      p_{t|s}(\mathbf{n}|\mathbf{m}) p_{s|0}(\mathbf{m}|\mathbf{n}_0)
      q_\theta(\n_0|\n)
    }{
      \sum_{\m}
      \sum_{\n_0}
      p_{t|s}(\mathbf{n}|\mathbf{m}) p_{s|0}(\mathbf{m}|\mathbf{n}_0)
      q_\theta(\n_0|\n)
    }
    \notag \\
    &=
          1_{\m \n}
          \underbrace{
          \frac{
          \sum_{\n_0}
            p_{t|0}(\mathbf{m}|\mathbf{n}_0)
            q_\theta(\n_0|\n)
          }{
            \sum_{\n_0}
              p_{t|0}(\mathbf{n}|\mathbf{n}_0)
              q_\theta(\n_0|\n)
          }
          }_{1}
          \notag \\ & 
          +
          \sigma(t)
          \Big(
            Q(\n, \m)
            \frac{
            \sum_{\n_0}
              p_{t|0}(\mathbf{m}|\mathbf{n}_0)
              q_\theta(\n_0|\n)
            }{
              \sum_{\n_0}
                p_{t|0}(\mathbf{n}|\mathbf{n}_0)
                q_\theta(\n_0|\n)
            }
            -
            1_{\m \n}
            \sum_{\k}
            Q(\m, \k)
            \frac{
            \sum_{\n_0}
              p_{t|0}(\mathbf{k}|\mathbf{n}_0)
              q_\theta(\n_0|\n)
            }{
              \sum_{\n_0}
                p_{t|0}(\mathbf{n}|\mathbf{n}_0)
                q_\theta(\n_0|\n)
            }
          \Big)
          \Delta t
    .
    \notag
  \end{align}
  Comparing it to the form in \cref{eq:sc_model}, we get teh formula. 
\end{proof}

Since we use the diffusion matrix decomposition in \cref{eq:delta_p_apdx}, the score function takes the form $s_\theta(\n, t)_{\n_{p \to m}}$ in short $s_\theta(\n, t)_{p \to m}$.
In addition, assuming the decomposition of the $q_\theta$ model as $q_\theta(\n_0|\n) = \prod_{q=1}^{\NumToken} q_\theta([\n_0]_i|\n)$, we can marginalize over all but the $p$-th component, yielding the following:
\begin{align}
  s_\theta(\n, t)_{p \to m}
  &=
  \frac{
  \sum_{\n_0}
    p_{t|0}(m|[\mathbf{n}_0]_p)
    q_\theta(\n_0|\n)
  }{
    \sum_{\n_0}
    p_{t|0}(n_p|[\mathbf{n}_0]_p)
    q_\theta(\n_0|\n)
  }
  &=
  \frac{
  \sum_{[\n_0]_p}
    p_{t|0}(m|[\mathbf{n}_0]_p)
    q_\theta([\mathbf{n}_0]_p|\n)
  }{
    \sum_{[\mathbf{n}_0]_p}
    p_{t|0}(n_p|[\mathbf{n}_0]_p)
    q_\theta([\mathbf{n}_0]_p|\n)
  }
  .
\end{align}

\paragraph{Uniform diffusion}

If we take $Q^\text{tok}(m, n) = 1$ $(m \neq n)$, then the token-level transition can be calculated as 
\begin{align}
  p_{t|0}(m|n)
  &=
  e^{- V\bar{\sigma}(t)} 1_{m n}
  +
  (1 - e^{- V\bar{\sigma}(t)}) \frac{1}{V}
  ,
\end{align}
and 
\begin{align}
  s_\theta(\n, t)_{p \to m}
  &=
  \frac{
    e^{- V\bar{\sigma}(t)} 
    q_\theta(m|\n)
    +
    (1 - e^{- V\bar{\sigma}(t)}) \frac{1}{V}  
  }{
    e^{- V\bar{\sigma}(t)} 
    q_\theta(n_p|\n)
    +
    (1 - e^{- V\bar{\sigma}(t)}) \frac{1}{V}  
  }
  .
  \label{eq:p2s_unif}
\end{align}

\paragraph{Masked diffusion}

If we take $Q^\text{tok}(m, n) = 1_{m 0} 1_{n \neq 0}$, then the token-level transition can be calculated as 
\begin{align}
  p_{t|0}(m|n)
  &=
  e^{-\bar{\sigma}(t)} 1_{mn}
  +
  (1 - e^{-\bar{\sigma}(t)}) 1_{m 0}
  ,
\end{align}
and 
\begin{align}
  s_\theta(\n, t)_{p \to m}
  &=
  \frac{
    e^{- \bar{\sigma}(t)} 
    q_\theta(m|\n)
    +
    (1 - e^{- \bar{\sigma}(t)}) 
    1_{m 0}
  }{
    e^{- \bar{\sigma}(t)} 
    q_\theta(n_p|\n)
    +
    (1 - e^{- \bar{\sigma}(t)}) 
    1_{n_p 0} 
  }
  =
  \frac{
    e^{- \bar{\sigma}(t)} 
  }{
    1 - e^{- \bar{\sigma}(t)} 
  }
  q_\theta(m|\n)
  ,
  \label{eq:p2s_mask}
\end{align}
for $n_p = 0, m \neq 0$.
\section{A Brief Review of Open Quantum Systems}
\label{sec:quantum_intro}

This section reviews the concepts of open quantum systems underlying the formulation of diffusion language models in \cref{sec:dlm_as_oqs}.

\subsection{Hilbert Space and Notation}

We work with a finite-dimensional complex Hilbert space $\mc{H}$, whose vectors represent pure quantum states when normalized.\footnote{
    In the language-model formulation of
    \cref{sec:dlm_as_oqs}, we use
    $\mc{H}=\bigotimes_{p=1}^{\NumToken}\mb{C}^{|\vocab|}$,
    where $|\vocab|$ is the number of token states and
    $\NumToken$ is the sequence length.
}
We use the following notation:
\begin{description}
    \item[State vectors and bra-ket notation]
        A vector in $\mc{H}$ is written as $\ket{\bpsi}$, called a ket.
        Its Hermitian conjugate $\bra{\bpsi}=(\ket{\bpsi})^\dag$ is called a bra and belongs to the dual space $\mc{H}^*$.
        This convention is known as \textbf{bra-ket notation}.

    \item[Inner product]
        The inner product of two vectors is written as $\braket{\bpsi_1|\bpsi_2}\in\mb{C}$.
        It satisfies conjugate symmetry: $\braket{\bpsi_1|\bpsi_2}^* = \braket{\bpsi_2|\bpsi_1}$, where $*$ denotes complex conjugation.

    \item[Norm]
        The norm of $\ket{\bpsi}$ is $\|\ket{\bpsi}\| = \sqrt{\braket{\bpsi|\bpsi}}$.
        A normalized state vector satisfies $\braket{\bpsi|\bpsi}=1$.

    \item[Operators and outer products]
        A linear map $\op{L}:\mc{H}\to\mc{H}$ is called an operator.
        The outer product of two vectors, $\ket{\bpsi_1}\bra{\bpsi_2}$, defines an operator whose action is
        \[
            \bigl(\ket{\bpsi_1}\bra{\bpsi_2}\bigr)
            \ket{\bpsi}
            =
            \ket{\bpsi_1}\braket{\bpsi_2|\bpsi}.
        \]
\end{description}

\subsection{Quantum State and Measurement}

A pure quantum state is represented by a normalized vector in $\mc{H}$.
More generally, statistical uncertainty or interactions with an environment require a description in terms of density operators:

\begin{mydefinition}{Quantum State}{state}
    An operator $\op{\rho}$ on $\mc{H}$ is called a density operator, or quantum state, if it satisfies
    \begin{enumerate}
        \item Hermiticity: $\op{\rho}^\dagger=\op{\rho}$,
        \item positive semidefiniteness: $\op{\rho}\geq 0$, and
        \item unit trace: $\tr_\mc{H}\op{\rho}=1$.
    \end{enumerate}
\end{mydefinition}

By the spectral theorem, any density operator can be written as $\op{\rho}=\sum_i p_i\ket{e_i}\bra{e_i}$, where $\{\ket{e_i}\}$ is an orthonormal basis, $p_i\geq 0$, and $\sum_i p_i=1$.
Thus, its eigenvalues form a probability distribution, while the basis specifies the corresponding quantum states.

\begin{mydefinition}{Pure State}{pure_state}
    A normalized vector $\ket{\bpsi}\in\mc{H}$, with $\braket{\bpsi|\bpsi}=1$, defines a quantum state
    \begin{align}
        \op{\rho}
        &=
        \ket{\bpsi}\bra{\bpsi}.
    \end{align}
    A state of this form is called a pure state.
\end{mydefinition}

As noted above, any density operator admits a decomposition $\op{\rho}=\sum_i p_i\ket{e_i}\bra{e_i}$.
It can therefore be represented as a statistical ensemble of pure states.
A state that is not pure is called a mixed state.

A density operator determines the probabilities of measurement outcomes.
For a measurement in a chosen orthonormal basis, the outcome is sampled according to the Born rule, as defined below.\footnote{
    More general measurements can be described by
    positive operator-valued measures (POVMs).
    Here, we restrict attention to projective measurements
    in an orthonormal basis.
}

\begin{mydefinition}{Projective Measurement}{measurement}
    Let $\{\ket{e}\}_{e\in E}$ be an orthonormal basis of $\mc{H}$.
    For a quantum state $\op{\rho}$, the probability of obtaining outcome $e$ is
    \[
        P(e)
        =
        \tr_\mc{H}\bigl(\ket{e}\bra{e}\op{\rho}\bigr)
        =
        \bra{e}\op{\rho}\ket{e}.
    \]
\end{mydefinition}

\subsection{Dynamics of Open Quantum Systems}

An open quantum system interacts with its environment, which can induce dissipation and fluctuations.
Its evolution is described by maps that preserve the validity of quantum states, including when the system is entangled with an auxiliary system.

\begin{mydefinition}
    {Completely Positive Trace-Preserving (CPTP) Map}{cptp}
    A linear map $\mc{T}$ on operators on $\mc{H}$ is completely positive and trace-preserving if it admits a Kraus representation
    \[
        \mc{T}(\op{\rho})
        =
        \sum_i \op{K}_i\op{\rho}\op{K}_i^\dag,
        \qquad
        \sum_i \op{K}_i^\dag\op{K}_i=\op{1},
    \]
    where the $\op{K}_i$ are operators on $\mc{H}$ and $\op{1}$ is the identity operator.
\end{mydefinition}

CPTP maps are quantum analogues of Markov kernels: they map density operators to density operators.
Complete positivity additionally ensures that $\mc{T}\otimes\mathrm{id}$ preserves positivity for any auxiliary system, where $\mathrm{id}$ is the identity map on that system.

The Kraus representation makes positivity and trace preservation explicit.
For any $\ket{\bpsi}\in\mc{H}$, let $\ket{\bpsi_i}=\op{K}_i^\dag\ket{\bpsi}$.
Then
\begin{align}
    \bra{\bpsi}\mc{T}(\op{\rho})\ket{\bpsi}
    &=
    \sum_i \bra{\bpsi_i}\op{\rho}\ket{\bpsi_i}
    \geq 0.
\end{align}
Using the cyclicity of the trace, we also obtain
\begin{align}
    \tr_{\mc{H}}\mc{T}(\op{\rho})
    &=
    \sum_i
    \tr_{\mc{H}}
    \bigl(\op{K}_i\op{\rho}\op{K}_i^\dag\bigr)
    =
    \tr_{\mc{H}}
    \left(
        \op{\rho}\sum_i\op{K}_i^\dag\op{K}_i
    \right)
    =
    \tr_{\mc{H}}\op{\rho}
    =
    1.
\end{align}

In classical diffusion, the generator $L_t$ in $\frac{d}{dt}p_t=L_t(p_t)$ is chosen so that the evolution from any time $s$ to $t\geq s$ defines a Markov kernel.
The quantum counterpart is an evolution equation for $\op{\rho}_t$ whose propagators are CPTP maps.

\begin{mytheorem}{Lindblad Equation}{lindblad}
    Consider the evolution equation
    \begin{align}
        \frac{d}{dt}\op{\rho}_t
        &=
        \mc{L}_t(\op{\rho}_t),
        \label{eq:lindblad}
    \end{align}
    where $\mc{L}_t$ is a continuous family of linear maps on operators on a finite-dimensional Hilbert space.
    Suppose that, for every $t\geq s$, the solution defines a CPTP map from $\op{\rho}_s$ to $\op{\rho}_t$.
    Then there exist a Hermitian operator $\op{H}_t$, called the \textit{Hamiltonian}, and operators $\{\op{L}_t^\mu\}_{\mu\in M}$, called \textit{jump operators}, such that
    \begin{align}
        \mc{L}_t(\op{\rho}_t)
        &=
        -i[\op{H}_t,\op{\rho}_t]
        +\sum_\mu
        \left(
            \op{L}_t^\mu\op{\rho}_t
            (\op{L}_t^\mu)^\dagger
            -\frac{1}{2}
            \left\{
                (\op{L}_t^\mu)^\dagger\op{L}_t^\mu,
                \op{\rho}_t
            \right\}
        \right),
        \label{eq:lindblad2}
    \end{align}
    where $[\op{A},\op{B}]=\op{A}\op{B}-\op{B}\op{A}$ and $\{\op{A},\op{B}\}=\op{A}\op{B}+\op{B}\op{A}$ denote the commutator and anticommutator, respectively.
\end{mytheorem}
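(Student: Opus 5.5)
The plan is to follow the standard Gorini--Kossakowski--Sudarshan argument, applied pointwise in $t$. The main tool is complete positivity of the short-time propagator $\mc{T}_{t+dt|t} = \mathrm{id} + dt\,\mc{L}_t + o(dt)$, expressed through its Choi matrix. Let $d=\dim\mc{H}$. Fix an orthonormal basis of operators $\{F_j\}_{j=0}^{d^2-1}$ for the Hilbert--Schmidt inner product $\tr(A^\dagger B)$, with $F_0 = \op{1}/\sqrt{d}$, so that $F_1,\dots,F_{d^2-1}$ are traceless. Any linear map on operators can then be written uniquely as
\begin{align}
  \mc{L}_t(\op{\rho}) = \sum_{j,k=0}^{d^2-1} c_{jk}(t)\, F_j \op{\rho} F_k^\dagger ,
\end{align}
with a coefficient matrix $c(t)$. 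The matrix $c(t)$ is the Choi matrix of $\mc{L}_t$ written in the basis $\{(F_j\otimes \op{1})\ket{\Omega}\}$, where $\ket{\Omega}=\sum_a \ket{a}\ket{a}$. Since each $\mc{T}_{t|s}$ is trace preserving and positive, it is Hermiticity preserving, and hence so is $\mc{L}_t$. It follows that $c(t)$ is Hermitian.

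The key step, and the one I expect to be the main obstacle, is showing that the block $(c_{jk})_{j,k\ge 1}$ is positive semidefinite. Complete positivity of $\mc{T}_{t+dt|t}$ is equivalent to positivity of its Choi matrix $C_{\mathrm{id}} + dt\, C_{\mc{L}_t} + o(dt)$. Here $C_{\mathrm{id}} = \ket{\Omega}\bra{\Omega}$ has rank one, and $\ket{\Omega}$ is proportional to the $F_0$ direction. Let $P$ be the projector onto the orthogonal complement of $\ket{\Omega}$. Compressing by $P$ kills the identity contribution, leaving $0 \le dt\, P C_{\mc{L}_t} P + o(dt)$. Dividing by $dt$ and letting $dt\to 0$ gives $P C_{\mc{L}_t} P \ge 0$, which is exactly positivity of the block $(c_{jk})_{j,k\ge1}$. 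The care needed is to check that the $o(dt)$ remainder is uniform, as an operator-norm statement, which holds in finite dimension by continuity of $\mc{L}_t$; and that the change of basis from matrix units to $\{F_j\}$ is unitary, so that positivity transfers.

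Next, separate the terms that involve $F_0$. Define
\begin{align}
  F = \frac{c_{00}}{2d}\op{1} + \frac{1}{\sqrt d}\sum_{j\ge1} c_{j0} F_j .
\end{align}
Using the Hermiticity of $c$, one checks that $\mc{L}_t(\op\rho) = F\op\rho + \op\rho F^\dagger + \sum_{j,k\ge1} c_{jk}F_j\op\rho F_k^\dagger$. Write $F = G - i\op{H}_t$ with $G=(F+F^\dagger)/2$ and $\op{H}_t=i(F-F^\dagger)/2$, both Hermitian. This gives
\begin{align}
  \mc{L}_t(\op\rho) = -i[\op{H}_t,\op\rho] + \{G,\op\rho\} + \sum_{j,k\ge1} c_{jk}F_j\op\rho F_k^\dagger .
\end{align}
Trace preservation of $\mc{T}_{t+dt|t}$ gives $\tr\mc{L}_t(\op\rho)=0$ for all $\op\rho$. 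Using cyclicity of the trace, this forces $2G + \sum_{j,k\ge1} c_{jk}F_k^\dagger F_j = 0$, that is, $G = -\tfrac12\sum_{j,k\ge1} c_{jk}F_k^\dagger F_j$.

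Finally, diagonalize the positive semidefinite block as $c_{jk} = \sum_\mu U_{j\mu}\gamma_\mu \overline{U_{k\mu}}$ with $\gamma_\mu\ge0$, and set $\op{L}_t^\mu = \sqrt{\gamma_\mu}\sum_{j\ge1} U_{j\mu}F_j$. Then $\sum_{j,k\ge1} c_{jk}F_j\op\rho F_k^\dagger = \sum_\mu \op{L}_t^\mu\op\rho\,\op{L}_t^{\mu\dagger}$ and $G = -\tfrac12\sum_\mu \op{L}_t^{\mu\dagger}\op{L}_t^\mu$, which is exactly \cref{eq:lindblad2}. The statement only asserts existence at each $t$, so no regularity in $t$ of $\op{H}_t$ or $\op{L}_t^\mu$ is required beyond what is inherited from $c(t)$.
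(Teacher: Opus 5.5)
Your proposal is correct, and it is essentially the standard Gorini--Kossakowski--Sudarshan derivation that the paper defers to: the paper omits the proof and cites Manzano's review, which likewise expands the short-time map in an orthonormal operator basis with $F_0\propto\op{1}$, collects the $F_0$ terms into $F=G-i\op{H}_t$, fixes $G$ by trace preservation, and diagonalizes the positive coefficient block to obtain the $\op{L}_t^\mu$. The only cosmetic difference is that you get positivity of the $(c_{jk})_{j,k\ge1}$ block by compressing the Choi matrix of $\mathrm{id}+dt\,\mc{L}_t$ off $\ket{\Omega}$, whereas the reference reads it off the positive Kraus-coefficient matrix of the short-time map; the two arguments are equivalent.
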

\begin{proof}
    We omit the proof here; see, for example, \cite{Manzano2019ASI}.
\end{proof}

As we will see below, \cref{eq:lindblad} generalizes classical diffusion dynamics to quantum states.
A corresponding time-reversed formulation can also be constructed \citep{Kwon2018FluctuationTF}, providing a quantum counterpart to the reverse-time dynamics used in diffusion models.

\begin{mytheorem}{Time reversal of Lindblad equation}{thm_time_rev}
    Under the time-backward operators
    \begin{align}
    \tilde{H}{_t}
    &=
    -\frac{1}{2} \left(
        \sqrt{\op{\rho}_t} \op{H}{_t} 
        + i\frac{d \sqrt{\op{\rho}_t}}{dt} 
        + \frac{i}{2}\sum_\mu \sqrt{\op{\rho}_t} \op{L}{_t^\mu}^\dagger \op{L}{_t^\mu} 
        \right)\frac{1}{\sqrt{\op{\rho}_t}}
        +(\text{1st term})^\dag
    , 
    \quad
    \\
    \tilde{L}{_t^\mu}
    &=
    \sqrt{\op{\rho}_t} \op{L}{_t^\mu}^\dagger \frac{1}{\sqrt{\op{\rho}_t}}
    , \label{eq:backward_operators}
    \end{align}
    the Lindblad equation can be represented by 
    \begin{align}
        -\frac{d}{dt} \op{\rho}_{t}
        &=
        - i [\tilde{H}{_t}, \op{\rho}_{t}]
        + \sum_\mu \tilde{L}{_t^\mu} \op{\rho}_{t} \tilde{L}{_t^\mu}^\dagger
        - \frac{1}{2} \left\{
            \sum_\mu \tilde{L}{_t^\mu}^\dagger\tilde{L}{_t^\mu}, \op{\rho}_{t} 
            \right\}
        .
        \label{eq:lindblad_rev}
    \end{align}
\end{mytheorem}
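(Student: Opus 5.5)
The plan is a direct verification. I assume throughout that $\op{\rho}_t$ is strictly positive, so that $S:=\sqrt{\op{\rho}_t}$ and $S^{-1}$ exist. This is the quantum analogue of the hypothesis $p_t>0$ in \cref{thm:reverse_process}, and it is implicit in \cref{eq:backward_operators}.

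I will compute the right-hand side of \cref{eq:lindblad_rev} with the operators \cref{eq:backward_operators}. Then I will show that it equals $-\mc{L}_t(\op{\rho}_t)$, where $\mc{L}_t$ is the forward generator of \cref{thm:lindblad}. Since $\frac{d}{dt}\op{\rho}_t=\mc{L}_t(\op{\rho}_t)$, this gives the claim. Throughout I abbreviate $K:=\sum_\mu \op{L}{_t^\mu}^\dagger\op{L}{_t^\mu}$, $D:=\sum_\mu \op{L}{_t^\mu}\op{\rho}_t\op{L}{_t^\mu}^\dagger$ and $\dot S:=\frac{dS}{dt}$.

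The dissipative pieces are handled first; they are purely algebraic. Substituting $\tilde{L}{_t^\mu}=S\op{L}{_t^\mu}^\dagger S^{-1}$ and $\op{\rho}_t=S^2$ gives
\begin{align*}
\sum_\mu \tilde{L}{_t^\mu}\op{\rho}_t\tilde{L}{_t^\mu}^\dagger = SKS,
\qquad
\sum_\mu \tilde{L}{_t^\mu}^\dagger\tilde{L}{_t^\mu} = S^{-1}DS^{-1}.
\end{align*}
Hence the anticommutator term equals $-\tfrac12\left(S^{-1}DS+SDS^{-1}\right)$. So the reversed jump term reproduces the forward "no-jump" operator $K$ sandwiched by $S$. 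The reversed no-jump term involves the forward jump term $D$, conjugated by $S^{\pm1}$.

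Next comes the Hamiltonian piece. I write $\tilde{H}_t=-\tfrac12\left(AS^{-1}+S^{-1}A^\dagger\right)$ with
\begin{align*}
A := S\op{H}_t + i\dot S + \tfrac{i}{2}SK,
\qquad
A^\dagger = \op{H}_t S - i\dot S - \tfrac{i}{2}KS.
\end{align*}
Here I use that $S$, $\dot S$ and $K$ are Hermitian. Then
\begin{align*}
-i[\tilde{H}_t,S^2]=\tfrac{i}{2}\left(AS - SA^\dagger + S^{-1}A^\dagger S^2 - S^2AS^{-1}\right).
\end{align*}

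The first pair, $AS-SA^\dagger$, is clean. It uses $\dot S S+S\dot S=\frac{d}{dt}\op{\rho}_t$ and contributes $-\tfrac12\frac{d}{dt}\op{\rho}_t-\tfrac12 SKS$. The $-\tfrac12 SKS$ partially cancels the reversed jump term $SKS$.

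The second pair, $S^{-1}A^\dagger S^2-S^2AS^{-1}$, is where I expect the real work. It contains non-commuting expressions such as $S^{-1}\op{H}_tS^3$, $S^{-1}\dot S S^2$ and $S^3KS^{-1}$. My plan here is to eliminate $D$ using the forward equation,
\begin{align*}
D=\tfrac{d}{dt}\op{\rho}_t + i[\op{H}_t,\op{\rho}_t] + \tfrac12\{K,\op{\rho}_t\}.
\end{align*}
I substitute this into $-\tfrac12(S^{-1}DS+SDS^{-1})$ and then write $\frac{d}{dt}\op{\rho}_t=\dot SS+S\dot S$ inside the conjugations. This produces terms $S^{-1}\dot SS^2$, $\dot S S$, $S\dot S$ and $S^2\dot SS^{-1}$; the $H$ terms $S^{-1}\op{H}_tS^3$, $S^3\op{H}_tS^{-1}$, $S\op{H}_tS$; and the analogous $K$ terms.

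I then match these term by term against the second pair. The terms conjugated by $S^{\pm1}$ should cancel exactly between the two sources; this is the reason for the specific $i\dot S$ and $\tfrac{i}{2}SK$ corrections in $\tilde{H}_t$. What survives should be $-\tfrac12\frac{d}{dt}\op{\rho}_t$ plus terms that assemble into $i[\op{H}_t,\op{\rho}_t]-D+\tfrac12\{K,\op{\rho}_t\}+\dots$, so that the total is $-\frac{d}{dt}\op{\rho}_t$.

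The careful bookkeeping of the $S^{\pm1}$-conjugated $\dot S$ terms is the main obstacle. If the term-by-term matching fails, I would fall back on checking the identity in the eigenbasis of $\op{\rho}_t$. There $S=\mathrm{diag}(\sqrt{\lambda_i})$, so each conjugation becomes a scalar factor $\sqrt{\lambda_i/\lambda_j}$. The identity then reduces to matrix-element equalities, with $\dot S_{ij}(\sqrt{\lambda_i}+\sqrt{\lambda_j})=\dot\rho_{ij}$ replacing the operator relation.
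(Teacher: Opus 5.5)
Your direct verification is correct, and it takes a different route from the paper, which gives no argument of its own and defers to Kwon--Kim. I carried out the bookkeeping you flagged as the main obstacle, and it closes without the eigenbasis fallback.

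After substituting $D$ and $\frac{d}{dt}\rho_t=\dot S S+S\dot S$, the terms cancel as follows:
\begin{itemize}
\item The conjugated terms $S^{-1}H_tS^3$, $S^3H_tS^{-1}$, $S^{-1}KS^3$, $S^3KS^{-1}$, $S^{-1}\dot S S^2$ and $S^2\dot S S^{-1}$ cancel pairwise against your second pair.
\item The two $SH_tS$ terms already cancel inside $S^{-1}DS+SDS^{-1}$.
\item The three $SKS$ contributions cancel: $+1$ from the reversed jump term, $-\tfrac12$ from your first pair, and $-\tfrac12$ from the substituted $D$.
\item The $-\tfrac12\frac{d}{dt}\rho_t$ from your first pair combines with $-\tfrac12(\dot S S+S\dot S)$ from the substituted $D$ to give exactly $-\frac{d}{dt}\rho_t$.
\end{itemize}
So nothing of the form $i[H_t,\rho_t]-D+\tfrac12\{K,\rho_t\}$ survives. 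Your description of the last step is slightly off, but your conclusion is right.

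The two routes buy different things. Yours is elementary and self-contained. It needs only $\rho_t>0$ and differentiability of $S$, but it treats $\tilde H_t$ and $\tilde L_t^\mu$ as an ansatz to be checked.

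The cited reference works in a recovery-map (Petz-type) picture, where these operators are derived rather than checked. Let $\mathcal{E}$ be the forward propagator from $t$ to $t+dt$. The map $\tilde{\mathcal{E}}(X)=S\,\mathcal{E}^\dagger\bigl(\rho_{t+dt}^{-1/2}X\rho_{t+dt}^{-1/2}\bigr)S$ is completely positive and trace preserving, and it sends $\rho_{t+dt}$ exactly back to $\rho_t$. Expand it to first order in $dt$:
\begin{itemize}
\item Its jump part is literally $\sum_\mu\tilde L_t^\mu X\tilde L_t^{\mu\dagger}$.
\item Write the remainder as $-i\tilde H^{\mathrm{eff}}X+iX\tilde H^{\mathrm{eff}\dagger}$. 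The Hermitian part of $\tilde H^{\mathrm{eff}}$ is exactly the stated $\tilde H_t$.
\item Trace preservation, which is where the forward equation enters, forces the anti-Hermitian part to be $-\tfrac{i}{2}\sum_\mu\tilde L_t^{\mu\dagger}\tilde L_t^\mu$.
\end{itemize}
That derivation explains where the formulas come from. It also makes the analogy with the classical reversed rates $Q_t(\mathbf{n},\mathbf{m})\,p_t(\mathbf{m})/p_t(\mathbf{n})$ transparent, and it extends to finite-time channels and other reference states. Your check remains the quickest way to confirm the statement exactly as written.
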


\begin{proof}
    For a proof, see the appendix of \cite{Kwon2018FluctuationTF}.
\end{proof}

\subsection{Stochastic Unraveling}
\label{sec:unraveling}

\subsubsection{Unraveling 1-jump per step}

There are several approaches to simulating the Lindblad equation.
Here, we introduce the quantum jump method, which is based on a stochastic differential equation driven by jump processes.
To simplify the notation, we define the effective Hamiltonian as
\begin{align}
    \op{H}{^\text{eff}_t} = \op{H}{_t} - \frac{i}{2} \sum_\mu \op{L}{_t^\mu}^\dagger \op{L}{_t^\mu}.
\end{align}
\begin{mytheorem}{Quantum Jump}{qjump}
    Let us consider one of stochastic time evolutions for {pure state} vector  with jump processes $d N_t^\mu$,
    \begin{align}
        \ket{\bpsi_{t+dt}}
        &=
        U_{t + dt | t}^\text{eff}
        \ket{\bpsi_t}
        \propto
        (1 -i \op{H}{^\text{eff}_t}dt)\ket{\bpsi_t}
        + \sum_\mu (\op{L}{_t^\mu} - 1) dN_t^\mu \ket{\bpsi_t}
        ,
        \label{eq:qjump_apdx}
    \end{align}
    where $d N_t^\mu$ are multiple jump processes with
    \begin{align}
        \text{Prob}\Big(
            \forall \mu, 
            d N_t^\mu = 0
        \Big)
        &=
        1 - \sum_\mu \bra{\bpsi_t} \op{L}{_t^\mu}^\dagger \op{L}{_t^\mu} \ket{\bpsi_t} dt
        ,\label{eq:jump}
        \\
        \text{Prob}\Big(
            \exists^1 \mu, 
            d N_t^\mu = 1, 
        \Big)
        &=
        \bra{\bpsi_t} \op{L}{_t^\mu}^\dagger \op{L}{_t^\mu} \ket{\bpsi_t} dt.
    \end{align}
    then the update $\ket{\bpsi_t}\bra{\bpsi_t} \to \mathop{\mb{E}}\limits_{dN_t}\ket{\bpsi_{t + dt}}\bra{\bpsi_{t+dt}}$ is given by Lindblad equation (\ref{eq:lindblad}).
\end{mytheorem}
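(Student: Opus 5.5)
We will verify the claim by direct computation: expand $\mathbb{E}_{dN_t}\ket{\bpsi_{t+dt}}\bra{\bpsi_{t+dt}}$ to first order in $dt$ and compare with the right-hand side of \cref{eq:lindblad2} evaluated at $\op{\rho}_t=\ket{\bpsi_t}\bra{\bpsi_t}$. The ``$\propto$'' in \cref{eq:qjump_apdx} is read as normalization of the updated vector, $\ket{\bpsi_{t+dt}}=\ket{\phi}/\|\ket{\phi}\|$. The expectation therefore splits into two disjoint branches: the no-jump branch, and the branches in which exactly one $\mu$ jumps. Events with two or more jumps have probability $O(dt^2)$ and are dropped.

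\emph{No-jump branch.} Here $\ket{\phi_0}=(1-i\op{H}^\text{eff}_t dt)\ket{\bpsi_t}$. Writing $\op{H}^\text{eff}_t=\op{H}_t-\frac{i}{2}\op{\Gamma}_t$ with $\op{\Gamma}_t=\sum_\mu\op{L}_t^{\mu\dagger}\op{L}_t^\mu$, the norm is
\[
\|\ket{\phi_0}\|^2=1-\braket{\bpsi_t|\op{\Gamma}_t|\bpsi_t}\,dt+O(dt^2),
\]
because the Hermitian part $\op{H}_t$ contributes no first-order change in the norm. Hence, to first order,
\[
\ket{\phi_0}\bra{\phi_0}=\op{\rho}_t-i[\op{H}_t,\op{\rho}_t]dt-\tfrac12\{\op{\Gamma}_t,\op{\rho}_t\}dt .
\]
The key observation is that the branch probability in \cref{eq:jump} equals exactly $\|\ket{\phi_0}\|^2$ to first order. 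So probability times normalized projector is simply $\ket{\phi_0}\bra{\phi_0}$: the normalization cancels and no $\braket{\op{\Gamma}_t}$ term survives.

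\emph{Jump branch $\mu$.} Since $dN_t^\mu=1$ and the $O(dt)$ Hamiltonian piece is subleading (it multiplies an $O(dt)$ probability), $\ket{\phi_\mu}=\op{L}_t^\mu\ket{\bpsi_t}+O(dt)$. Its normalized projector is $\op{L}_t^\mu\op{\rho}_t\op{L}_t^{\mu\dagger}/\braket{\bpsi_t|\op{L}_t^{\mu\dagger}\op{L}_t^\mu|\bpsi_t}$. Multiplying by the probability $\braket{\bpsi_t|\op{L}_t^{\mu\dagger}\op{L}_t^\mu|\bpsi_t}dt$ gives $\op{L}_t^\mu\op{\rho}_t\op{L}_t^{\mu\dagger}dt$. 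Branches with zero probability, where $\op{L}_t^\mu\ket{\bpsi_t}=0$, contribute nothing and cause no division issue.

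Summing the branches yields $\op{\rho}_t+\mc{L}_t(\op{\rho}_t)dt+O(dt^2)$ with $\mc{L}_t$ as in \cref{eq:lindblad2}, which is the claim. By linearity, the same conclusion holds for mixtures of trajectories.

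The only delicate step is the bookkeeping in the no-jump branch: showing that the normalization factor of the renormalized state exactly cancels the branch probability at $O(dt)$. Equivalently, the nonlinear term $\braket{\op{\Gamma}_t}\op{\rho}_t$ that appears in the normalized no-jump evolution must be matched by the deficit $1-\text{Prob}$, so that only the anticommutator survives. I would present this by working throughout with unnormalized vectors weighted by probability divided by squared norm.
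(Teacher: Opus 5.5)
Your proposal is correct and follows essentially the same route as the paper. Both arguments split into the no-jump branch and the single-jump branches, and use that in each branch the probability cancels the squared norm of the unnormalized vector to first order, so that summing the branches gives $\op{\rho}_t+\mc{L}_t(\op{\rho}_t)\,dt$. The only cosmetic difference is that the paper first computes the squared norm of the unnormalized state in one stroke via the It\^o rules $dt\,dN_t^\mu=0$ and $dN_t^\mu dN_t^\nu=\delta^{\mu\nu}dN_t^\mu$ before specializing to branches, whereas you compute each branch's norm directly.
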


\begin{proof}
    Throughout this calculation, we use the following It\^o multiplication rules for the jump increments:
    \begin{align}
        0 = dt \ dN_t^\mu = dt^2,
        \quad 
        dN_t^\mu dN_t^\nu = \delta^{\mu \nu} dN_t^\mu
        .
    \end{align}
    To determine the normalization factor in \cref{eq:qjump_apdx}, we compute the squared norm of the unnormalized state:
    \begin{align*}
        &
        \| \text{rhs of \cref{eq:qjump_apdx}} \|^2
        \\
        &=
        \bra{\bpsi_t}
            (1 + 
            \underbrace{i \op{H}{^\text{eff}_t}^\dag dt - i \op{H}{^\text{eff}_t} dt}
            _{
                - \sum_\mu \op{L}{_t^\mu}^\dag \op{L}{_t^\mu}
            }
            )
            +
            \sum_{\mu, \nu}
                \underbrace{dN_t^\mu dN_t^\nu }
                _{
                    \delta^{\mu \nu} dN_t^\mu
                }
                (\op{L}{_t^\mu}^\dag - 1)
                (\op{L}{_t^\nu} - 1)
        \ket{\bpsi_t}
        \\ & \qquad
        +
        \bra{\bpsi_t}
            \sum_\mu
                (
                \op{L}{_t^\mu}
                +
                \op{L}{_t^\mu}^\dag
                -
                2
                )dN_t^\mu
        \ket{\bpsi_t}
        \\
        &=
        1 
        - 
        \sum_\mu 
            \langle \op{L}{_t^\mu}^\dag \op{L}{_t^\mu} \rangle_{\bpsi_t} dt 
        +
        \sum_\mu 
            (
                \langle \op{L}{_t^\mu}^\dag \op{L}{_t^\mu} \rangle_{\bpsi_t}
                -
                1
            )dN_t^\mu,
    \end{align*}
    which fixes the normalization as 
    \begin{align*}
        \ket{\bpsi_{t+dt}}
        &=
        \frac{
            (1 -i \op{H}{^\text{eff}_t}dt)\ket{\bpsi_t}
            + \sum_\mu (\op{L}{_t^\mu} - 1) dN_t^\mu \ket{\bpsi_t}
        }{
            \sqrt{
                1 
                - 
                \sum_\mu 
                    \langle \op{L}{_t^\mu}^\dag \op{L}{_t^\mu} \rangle_{\bpsi_t} dt 
                +
                \sum_\mu 
                    (
                        \langle \op{L}{_t^\mu}^\dag \op{L}{_t^\mu} \rangle_{\bpsi_t}
                        -
                        1
                    )dN_t^\mu
            }
        }.
    \end{align*}
    This expression simplifies when we consider the no-jump and jump cases separately.    
    If no jump occurs, $dN_t^\mu=0$ for all $\mu$, and 
    \begin{align*}
        \ket{\bpsi_{t+dt}}
        &=
        \frac{
            (1 -i \op{H}{^\text{eff}_t}dt)\ket{\bpsi_t}
        }{
            \sqrt{
                1 
                - 
                \sum_\mu 
                    \langle \op{L}{_t^\mu}^\dag \op{L}{_t^\mu} \rangle_{\bpsi_t} dt 
            }
        }.
    \end{align*}
    If a jump occurs in channel $\mu$, then $dN_t^\mu=1$ and $dN_t^\nu=0$ for all $\nu\neq\mu$, giving    
    \begin{align*}
        \ket{\bpsi_{t+dt}}
        &=
        \frac{
            (1 -i \op{H}{^\text{eff}_t}dt)\ket{\bpsi_t}
            + (\op{L}{_t^\mu} - 1) \ket{\bpsi_t}
        }{
            \sqrt{
                1 
                - 
                \sum_\mu 
                    \langle \op{L}{_t^\mu}^\dag \op{L}{_t^\mu} \rangle_{\bpsi_t} dt 
                +
                    (
                        \langle \op{L}{_t^\mu}^\dag \op{L}{_t^\mu} \rangle_{\bpsi_t}
                        -
                        1
                    )
            }
        }
        =
        \frac{
            \op{L}{_t^\mu} \ket{\bpsi_t}
        }{
            \sqrt{
                \langle \op{L}{_t^\mu}^\dag \op{L}{_t^\mu} \rangle_{\bpsi_t}
            }
        },
    \end{align*}
    neglecting terms of order $dt$. Thus, 
    \begin{align}
        \ket{\bpsi_{t+dt}}
        &=
        \left\{
            \begin{array}{ll}
                \frac{
                    (1 -i \op{H}{^\text{eff}_t}dt)\ket{\bpsi_t}
                }{
                    \sqrt{
                        1 
                        - 
                        \sum_\mu 
                            \langle \op{L}{_t^\mu}^\dag \op{L}{_t^\mu} \rangle_{\bpsi_t} dt 
                    }
                }
                & 
                \text{with Prob}\Big(
                    \forall \mu, 
                    d N_t^\mu = 0
                \Big)
                = 1 - \sum_\mu \langle \op{L}{_t^\mu}^\dagger \op{L}{_t^\mu} \rangle_{\bpsi_t} dt
                \\
                \frac{
                    \op{L}{_t^\mu} \ket{\bpsi_t}
                }{
                    \sqrt{
                        \langle \op{L}{_t^\mu}^\dag \op{L}{_t^\mu} \rangle_{\bpsi_t}
                    }
                }
                &
                \text{with Prob}\Big(
                    \exists^1 \mu, 
                    d N_t^\mu = 1
                \Big)
                = \langle\op{L}{_t^\mu}^\dagger \op{L}{_t^\mu} \rangle_{\bpsi_t} dt
            \end{array}
        \right.
        ,\label{eq:norm_qjump}
    \end{align} 
    and 
    \begin{align*}
        &
        \mathop{\mb{E}}\limits_{dN_t}
        \ket{\bpsi_{t+dt}} \bra{\bpsi_{t+dt}}
        \\
        &=
        \underbrace{
        \text{Prob}\Big(
                    \forall \mu, \
                    d N_t^\mu = 0
                \Big)
        }_{
            1 - \sum_\mu \braket{
                            \op{L}{_t^\mu}^\dag
                            \op{L}{_t^\mu}
                            }_{\bpsi_t}
                    d t
        }
        \frac{
            (1 -i \op{H}{^\text{eff}_t}dt)\ket{\bpsi_t}
        }{
            \sqrt{
                1 
                - 
                \sum_\mu 
                    \langle \op{L}{_t^\mu}^\dag \op{L}{_t^\mu} \rangle_{\bpsi_t} dt 
            }
        }
        \frac{
            \bra{\bpsi_t}
            (1 +i \op{H}{^\text{eff}_t}^\dag dt)
        }{
            \sqrt{
                1 
                - 
                \sum_\mu 
                    \langle \op{L}{_t^\mu}^\dag \op{L}{_t^\mu} \rangle_{\bpsi_t} dt 
            }
        }
        \\ & \qquad\qquad\qquad\qquad\qquad
        +
        \sum_\mu
            \underbrace{
                \text{Prob}\Big(
                    \exists^1 \mu, 
                    d N_t^\mu = 1, 
                \Big)
            }_{
                \braket{
                    \op{L}{_t^\mu}^\dag
                    \op{L}{_t^\mu}
                }_{\bpsi_t}
                dt
            }
            \frac{
                \op{L}{_t^\mu} \ket{\bpsi_t}
            }{
                \sqrt{
                    \langle \op{L}{_t^\mu}^\dag \op{L}{_t^\mu} \rangle_{\bpsi_t}
                }
            }
            \frac{
                \bra{\bpsi_t}
                \op{L}{_t^\mu}^\dag
            }{
                \sqrt{
                    \langle \op{L}{_t^\mu}^\dag \op{L}{_t^\mu} \rangle_{\bpsi_t}
                }
            }
        \\
        &=
        (1 - i \op{H}{^\text{eff}_t} dt) \ket{\bpsi_t} 
        \bra{\bpsi_t}(1 + i \op{H}{^\text{eff}_t}^\dag dt)
        + 
        \sum_\mu 
            \op{L}{_t^\mu} 
            \ket{\bpsi_t}\bra{\bpsi_t}
            \op{L}{_t^\mu}^\dag dt
        \\
        &=
        \ket{\bpsi_t}\bra{\bpsi_t}
        \\ & \quad
        +
        \Big(
        \underbrace{
        -i \Big[\op{H}{_t}, \ket{\bpsi_t}\bra{\bpsi_t}\Big]
        -\frac{1}{2} 
            \Big\{ 
                \sum_\mu \op{L}{_t^\mu}^\dag \op{L}{_t^\mu}
                ,
                \ket{\bpsi_t}\bra{\bpsi_t}
            \Big\}
        + \sum_\mu 
            \op{L}{_t^\mu} 
            \ket{\bpsi_t}\bra{\bpsi_t}
            \op{L}{_t^\mu}^\dag
        }_{
            \mc{L}{_t}\Big( \ket{\bpsi_t}\bra{\bpsi_t} \Big)
        }
        \Big) dt
        .
    \end{align*}
\end{proof}
\section{Proofs of the Main Theorems}
\label{sec:pauli}

This subsection provides the detailed explanation of theorems in the main text.

\subsection{Proof of \cref{thm:dlm_as_oqs}}

An important observation in this paper is that we can embed the usual diffusion process to the diagonal part of Lindblad equation \cref{eq:lindblad}.

\begin{mytheorem}{Pauli master equation}{pauli}
    Under the condition:
    \begin{align}
      \begin{array}{ll}
      &
      \op{H}=0,
      \quad
      \forall \k, \m, \n \in \vocab^\NumToken \ (\m \neq \n ), \
      \left\{
      \begin{array}{ll}
          0 
          &= 
          \sum_\mu 
            \bra{\m} 
              \op{L}{_t^\mu}
            \ket{\k} 
            \bra{\k} 
              \op{L}{_t^\mu}^\dagger 
            \ket{\n}, 
          \\
          0 
          &= 
          \sum_\mu 
            \bra{\m} 
              \op{L}{_t^\mu}^\dagger 
            \ket{\k} 
            \bra{\k}
              \op{L}{_t^\mu}
            \ket{\n},
      \end{array}
      \right.
      \end{array}
      \label{eq:red_cond}
    \end{align}
    then $\op{\rho}_t$ can be diagonal and \cref{eq:lindblad} reduces to the diffusion ODE \cref{eq:diff_eq}, i.e., 
    \begin{align}
      \op{\rho}_t
      &=
      \sum_{\m \in \vocab^\NumToken}
        p_t(\m) 
        \ket{\m}
        \bra{\m},
      \quad 
      \frac{d}{dt} 
      p_t(\m) 
      = 
      \sum_{\n \in \vocab^\NumToken} 
        Q_t(\m, \n) 
        p_t(\n),
    \end{align}
    where 
    \begin{align}
        Q_t(\m, \n)
        &=
        \sum_\mu 
          | \bra{\m} 
            \op{L}{_t^\mu}
            \ket{\n} |^2
        \quad 
        (\m \neq \n),
        \quad
        Q_t(\m, \m)
        =
        - \sum_{\n \in \vocab^\NumToken - \{\m\}}
            Q_t(\n, \m)
    \end{align}
\end{mytheorem}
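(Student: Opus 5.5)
Since $\op{H}=0$, the commutator term in \cref{eq:lindblad} vanishes, so we only need to handle the dissipative part $\sum_\mu \op{L}_t^\mu\op{\rho}_t\op{L}_t^{\mu\dagger}-\frac12\{\sum_\mu\op{L}_t^{\mu\dagger}\op{L}_t^\mu,\op{\rho}_t\}$. The plan is to insert the diagonal ansatz $\op{\rho}_t=\sum_{\k}p_t(\k)\ket{\k}\bra{\k}$ and compute the matrix element $\bra{\m}\mc{L}_t(\op{\rho}_t)\ket{\n}$ for all $\m,\n$. We then show that the off-diagonal elements vanish, so that $\mc{L}_t$ maps diagonal operators to diagonal operators and the diagonal ansatz is consistent. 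Finally, we read off the ODE for $p_t$ from the diagonal elements. By uniqueness of solutions of the linear ODE \cref{eq:lindblad}, a diagonal initial condition then stays diagonal for all $t$.

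\textbf{Jump term.} Its matrix element is
\[
\bra{\m}\sum_\mu \op{L}_t^\mu\op{\rho}_t\op{L}_t^{\mu\dagger}\ket{\n}=\sum_{\k}p_t(\k)\sum_\mu\bra{\m}\op{L}_t^\mu\ket{\k}\bra{\k}\op{L}_t^{\mu\dagger}\ket{\n}.
\]
For $\m\neq\n$, the first condition in \cref{eq:red_cond} kills each $\k$-summand individually, so the whole sum is zero. Note that the condition must be read for each fixed $\k$, which is how it is stated. For $\m=\n$, the summand becomes $\sum_\mu|\bra{\m}\op{L}_t^\mu\ket{\k}|^2 p_t(\k)$.

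\textbf{Anticommutator term.} Because $\op{\rho}_t$ is diagonal,
\[
\bra{\m}\op{A}\op{\rho}_t+\op{\rho}_t\op{A}\ket{\n}=(p_t(\n)+p_t(\m))\bra{\m}\op{A}\ket{\n},\qquad \op{A}=\sum_\mu\op{L}_t^{\mu\dagger}\op{L}_t^\mu .
\]
Expanding $\bra{\m}\op{A}\ket{\n}=\sum_{\k}\sum_\mu\bra{\m}\op{L}_t^{\mu\dagger}\ket{\k}\bra{\k}\op{L}_t^\mu\ket{\n}$, this vanishes for $\m\neq\n$ by the second condition in \cref{eq:red_cond}. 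Hence $\op{A}$ is diagonal, with $\bra{\m}\op{A}\ket{\m}=\sum_{\k}\sum_\mu|\bra{\k}\op{L}_t^\mu\ket{\m}|^2$.

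\textbf{Diagonal sector.} Combining the two terms gives
\[
\frac{d}{dt}p_t(\m)=\sum_{\k}\sum_\mu|\bra{\m}\op{L}_t^\mu\ket{\k}|^2p_t(\k)-\sum_{\k}\sum_\mu|\bra{\k}\op{L}_t^\mu\ket{\m}|^2p_t(\m).
\]
The $\k=\m$ contributions, $\sum_\mu|\bra{\m}\op{L}_t^\mu\ket{\m}|^2p_t(\m)$, appear in both sums and cancel. What remains is $\sum_{\k\neq\m}Q_t(\m,\k)p_t(\k)+Q_t(\m,\m)p_t(\m)$, with $Q_t$ exactly as defined in the statement. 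This $Q_t$ has nonnegative off-diagonal entries and vanishing column sums, so it is a valid generator in the sense of \cref{eq:q_conds_appdx}.

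The only delicate point is this cancellation of the $\k=\m$ diagonal elements of the jump operators, which is what makes the formula for $Q_t(\m,\m)$ come out cleanly. Otherwise the argument is bookkeeping of matrix elements.
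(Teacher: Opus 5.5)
Your proposal is correct and follows essentially the same route as the paper's proof. Like the paper, you substitute the diagonal ansatz $\op{\rho}_t=\sum_{\k}p_t(\k)\ket{\k}\bra{\k}$, use the two conditions to kill the off-diagonal matrix elements of the generator, and read off the master equation from the diagonal elements, where the $\k=\m$ terms cancel (the paper writes this directly as a sum over $\n\neq\m$); your appeal to uniqueness of solutions of the linear ODE just makes precise the paper's looser claim that ``$\op{\rho}_t$ can be diagonal.''
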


\begin{proof}
  Basic idea is summarized in the following steps, 
  1) regarding $p_t(\m) = \bra{\m} \op{\rho}_t \ket{\m}$, and 
  2) fixing $0 = \bra{\m} \op{\rho}_t \ket{\n}$ for $\m \neq \n$.
  Let us check the first step.
  The time evolution of $p_t(\m) = \bra{\m} \op{\rho}_t \ket{\m}$ under $0 = \bra{\m} \op{\rho}_t \ket{\n} \quad (\m\neq \n)$ reduces to the classical master equation as follows.
  \begin{align*}
    &\frac{d}{dt}p_t(\m) 
    =
    \bra{\m} \frac{d}{dt} \op{\rho}_t \ket{\m}
    =
    \sum_\mu
      \left(
        \bra{\m} 
          \op{L}{_t^\mu} 
          \op{\rho}_t 
          \op{L}{_t^\mu}^\dagger
        \ket{\m} 
        - 
        \frac{1}{2} 
        \begin{pmatrix}
          \bra{\m}
            \op{L}{_t^\mu}^\dagger 
            \op{L}{_t^\mu} 
            \op{\rho}_t
          \ket{\m}
        \\
        +
          \bra{\m}
            \op{\rho}_t 
            \op{L}{_t^\mu}^\dagger 
            \op{L}{_t^\mu} 
          \ket{\m} 
        \end{pmatrix}
      \right)
    \\
    &=
    \sum_\mu
    \sum_{\n, \k }
    \left(
      \bra{\m}
        \op{L}{_t^\mu}
      \ket{\n}
      \underbrace{
        \bra{\n}
          \op{\rho}_t
        \ket{\k}
      }_{
        p_t(\n) 1_{\n\k}
        }
      \bra{\k}
        \op{L}{_t^\mu}^\dagger 
      \ket{\m}
      - 
      \frac{1}{2} 
        \begin{pmatrix}
        \bra{\m}
          \op{L}{_t^\mu}^\dagger
        \ket{\n}
        \bra{\n}
          \op{L}{_t^\mu}
        \ket{\k}
        \underbrace{
          \bra{\k}
            \op{\rho}_t
          \ket{\m}
        }_{
          p_t(\k) 1_{\k\m}
          }
        \\
        +
            \underbrace{
              \bra{\m}
                \op{\rho}_t
              \ket{\n}
              }_{
                p_t(\m)1_{\m\n}
                }  
            \bra{\n}
              \op{L}{_t^\mu}^\dagger 
            \ket{\k}
            \bra{\k}
              \op{L}{_t^\mu} 
            \ket{\m}
        \end{pmatrix}
    \right)\\
    &=
    \sum_{\n \in \vocab^\NumToken - \m}
    \left(
      \underbrace{
        \sum_\mu
          |\bra{\m} 
          \op{L}{_t^\mu}
          \ket{\n}|^2
      }_{
        Q_t(\m, \n)
        }
      p_t(\n) 
      - 
      \underbrace{
        \sum_\mu
          |\bra{\n}
          \op{L}{_t^\mu}
          \ket{\m}|^2
      }_{
        Q_t(\n, \m)
        }
      p_t(\m)
    \right)
    =
    \sum_{\n \in \vocab^\NumToken}
      Q_t(\m, \n)
      p_t(\n), 
  \end{align*}
  where we define $Q_t(\m, \m) = - \sum_{\n \in \vocab^\NumToken - \{\m\}} Q_t(\n, \m)$.
  Next, we consider the condition for $\frac{d}{dt}\bra{\m} \op{\rho}_t \ket{\n} = 0$ with $\m \neq \n$.
  \begin{align*}
    0 
    &= 
    \bra{\m} 
      \frac{d}{dt}
        \op{\rho}_t 
    \ket{\n}
    =
    \sum_\mu
      \left(
        \bra{\m} 
          \op{L}{_t^\mu} 
          \op{\rho}_t 
          \op{L}{_t^\mu}^\dagger
        \ket{\n} 
        - 
        \frac{1}{2} 
          \begin{pmatrix}
            \bra{\m} 
              \op{L}{_t^\mu}^\dagger 
              \op{L}{_t^\mu} 
              \op{\rho}_t 
            \ket{\n} 
            \\
            +
            \bra{\m} 
              \op{\rho}_t 
              \op{L}{_t^\mu}^\dagger 
              \op{L}{_t^\mu} 
            \ket{\n}  
          \end{pmatrix}
      \right) 
    \\
    &=
    \sum_\mu
      \sum_{\l, \k}
        \left(
          \bra{\m} 
            \op{L}{_t^\mu} 
          \ket{\l}
          \underbrace{
            \bra{\l} 
              \op{\rho}_t 
            \ket{\k}
          }_{
            p_t(\l) 1_{\l\k}
            }
          \bra{\k}
            \op{L}{_t^\mu}^\dagger 
          \ket{\n} 
          - 
          \frac{1}{2} 
            \begin{pmatrix}
              \bra{\m} 
                \op{L}{_t^\mu}^\dagger 
              \ket{\l}
              \bra{\l} 
                \op{L}{_t^\mu} 
              \ket{\k}
              \underbrace{
                \bra{\k} 
                  \op{\rho}_t 
                \ket{\n} 
              }_{
                p_t(\k) 1_{\k\n}
                }
              \\
              +
              \underbrace{
                \bra{\m} 
                  \op{\rho}_t 
                \ket{\l}
              }_{
                p_t(\m)1_{\m\l}
                }  
              \bra{\l}
                \op{L}{_t^\mu}^\dagger 
              \ket{\k}  
              \bra{\k}
                \op{L}{_t^\mu} 
              \ket{\n}  
            \end{pmatrix}
        \right) 
    \\
    &=
    \sum_\mu
      \sum_{\k}
        \left(
          \bra{\m} 
            \op{L}{_t^\mu} 
          \ket{\k} 
          \bra{\k}
            \op{L}{_t^\mu}^\dagger 
          \ket{\n} 
          p_t(\k)
          - 
          \frac{1}{2} 
            \begin{pmatrix}
              \bra{\m} 
                \op{L}{_t^\mu}^\dagger 
              \ket{\k}
              \bra{\k} 
                \op{L}{_t^\mu} 
              \ket{\n}
              p_t(\n)
              \\
              +
              \bra{\m}
                \op{L}{_t^\mu}^\dagger 
              \ket{\k}  
              \bra{\k}
                \op{L}{_t^\mu} 
              \ket{\n}  
              p_t(\m)
            \end{pmatrix}
        \right)
  \end{align*}
  Sufficient condition for vanishing it is, therefore
  \begin{align*}
    \forall \k, \m, \n \in \vocab^\NumToken \ (\m \neq \n) \quad 
    \left\{
      \begin{array}{l}
        0 = \sum_\mu \bra{\m} \op{L}{_t^\mu} \ket{\k} \bra{\k} \op{L}{_t^\mu}^\dagger \ket{\n}\\
        0 = \sum_\mu \bra{\m} \op{L}{_t^\mu}^\dagger \ket{\k} \bra{\k}\op{L}{_t^\mu} \ket{\n}
      \end{array}
    \right.
    .
  \end{align*}
\end{proof}

We can prove the \cref{thm:dlm_as_oqs} as a corollary of this theorem.
Under the constraint of jump operators in \cref{eq:full_jump_operators} \footnote{
  For computational purposes, we will represent the tensor product of tensors along each dimension vertically.
} and \cref{eq:mLk-main}, i.e., 
\begin{align}
  &\op{L}{_t^{\mu=(p,m',n')}}
  =
  \sqrt{\sigma(t)}
  \begin{array}{c}
    I
    \\
    \otimes 
    \\
    \vdots 
    \\
    \otimes
    \\
    \op{L}{_{m', n'}^\text{$(p)$tok}}
    \\
    \otimes
    \\
    \vdots
    \\
    \otimes 
    \\
    I
  \end{array}
  \quad
  \text{where }
      \op{L}{_{m', n'}^\text{$(p)$tok}}
    =
    \sqrt{
      Q^\text{tok}(m', n')
    }
      \ket{m'}\bra{n'}
    .
    \label{eq:constraint_appdx}
\end{align}

Under these conditions, we have 
\begin{align*}
  &
  \sum_\mu
    \bra{\m}
      \op{L}{_t^\mu}
    \ket{\k}
    \bra{\k}
      \op{L}{_t^\mu}^\dag
    \ket{\n}
  \notag
  \\
  &=
  \sigma(t)
  \sum_{p, m', n'}
      \left(
        \sqrt{
          Q^\text{tok}(m', n')^2
        }
      \begin{array}{ccc}
          \bra{m_1} I \ket{k_1} 
          & 
          \bra{k_1} I \ket{n_1}
          &
          =
          {\color{red}
          1_{m_1 n_1}
          }
          1_{m_1 k_1}
        \\
          \times & \times
        \\
          \vdots & \vdots & \vdots
        \\
          \times & \times
        \\
          \braket{m_p|m'} 
          \braket{n'|k_p}
          &
          \braket{k_p|n'} 
          \braket{m'|n_p}
          &
          =
          {\color{red}
          1_{m_p n_p}
          }
          1_{m_p m'}
          1_{k_p n'}
        \\
          \times & \times
        \\
          \vdots & \vdots & \vdots
        \\
          \times & \times
        \\
          \bra{m_\NumToken} I \ket{k_\NumToken}
          &
          \bra{k_\NumToken} I \ket{n_\NumToken}
          &
          =
          {\color{red}
          1_{m_\NumToken n_\NumToken}
          }
          1_{m_\NumToken k_\NumToken}
      \end{array}
      \right)
  \notag
  \\
  &
  \propto 
    {\color{red}
    1_{\m\n}
    }
  = 0 \quad (\text{for } \m \neq \n)
  .
\end{align*}
We can repeat same procedure for checking another condition:
\begin{align}
  &
  \sum_\mu
    \bra{\m}
      \op{L}{_t^\mu}^\dag
    \ket{\k}
    \bra{\k}
      \op{L}{_t^\mu}
    \ket{\n}
  \notag
  \\
  &=
  \sigma(t)
  \sum_{p, m', n'}
    \left(
      \sqrt{
        Q^\text{tok}(m', n')^2
      }
      \begin{array}{ccc}
          \bra{m_1} I \ket{k_1} 
          & 
          \bra{k_1} I \ket{n_1}
          &
          =
          {\color{red} 
          1_{m_1 n_1}
          }
          1_{m_1 k_1}
        \\
          \times & \times
        \\
          \vdots & \vdots & \vdots
        \\
          \times & \times
        \\
          \braket{m_p|n'} 
          \braket{m'|k_p}
          &
          \braket{k_p|m'} 
          \braket{n'|n_p}
          &
          =
          {\color{red}
            1_{m_p n_p}
          }
          1_{m_p n'}
          1_{k_p m'}
        \\
          \times & \times
        \\
          \vdots & \vdots & \vdots
        \\
          \times & \times
        \\
          \bra{m_\NumToken} I \ket{k_\NumToken}
          &
          \bra{k_\NumToken} I \ket{n_\NumToken}
          &
          =
          {\color{red}
            1_{m_\NumToken n_\NumToken}
          }
          1_{m_\NumToken n_\NumToken}
      \end{array}
      \right)
  \notag
  \\
  &
  \propto 
  {\color{red}
  1_{\m\n}}
  =
  0 \quad (\text{for } \m \neq \n)
  .
  \notag
\end{align}
Therefore, we can apply \cref{thm:pauli} and get the reduction of Lindblad equation to the diffusion equation.
In the end, we calculate the value of $Q_t(\m, \n)$ in \cref{thm:pauli} with the jump operators in \cref{eq:constraint_appdx}
\begin{align}
  \sum_\mu 
    | \bra{\m}
      \op{L}{_t^\mu}
      \ket{\n} |^2
  &=
  \sum_{p=1}^\NumToken
    \sum_{m', n'}
      \begin{array}{c}
          | \bra{m_1} I \ket{n_1} |^2
        \\
          \times 
        \\
          \vdots 
        \\
          \times 
        \\
          | \bra{m_p} \op{L}{_{m', n'}^\text{$(p)$tok}} \ket{n_p} |^2
        \\
          \times 
        \\
          \vdots 
        \\
          \times 
        \\
          | \bra{m_\NumToken} I \ket{n_\NumToken} |^2
      \end{array}
  =
  \sum_{p=1}^\NumToken
    1_{\m_{-p} \n_{-p}}Q^\text{tok}(m_p,n_p),
\end{align}
which proves \cref{thm:dlm_as_oqs}.

\subsection{Proof of \cref{thm:tilde_L}}

First, we prove the time-reversal Hamiltonian $\tilde{H}$ defined in \cref{thm:thm_time_rev} is zero.

\begin{mytheorem}{}{zero_hamiltonian_rev}
  Under the following conditions:
  \begin{align}
    &
    \op{H} = 0,
    \label{eq:zero_hamiltonian}
    \\
    &
    \op{\rho}_t \text{ is diagonal},
    \label{eq:rho_diag}
    \\
    &
    \forall \k \in \vocab^\NumToken,\quad 
    0 = \sum_{\mu} \bra{\m} \op{L}{_t^\mu}^\dag \ket{\k}\bra{\k} \op{L}{_t^\mu} \ket{\n}
    \quad 
    \text{for } \m \neq \n,
    \label{eq:vanishing}
  \end{align}
  then the time-reversal Hamiltonian 
  \begin{align}
    \tilde{H}
    &=
    -\frac{1}{2} \left(
        \sqrt{\op{\rho}_t} \op{H} 
        + i\frac{d \sqrt{\op{\rho}_t}}{dt} 
        + \frac{i}{2}\sum_\mu \sqrt{\op{\rho}_t} \op{L}{_t^\mu}^\dagger \op{L}{_t^\mu} 
        \right)\frac{1}{\sqrt{\op{\rho}_t}}
        +(\text{1st term})^\dag
  \end{align}
  is zero.
\end{mytheorem}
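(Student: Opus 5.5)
With $\op{H}=0$, the first term of $\tilde{H}$ is $-\frac{1}{2}A$, where
\[
A = \Bigl(i\tfrac{d\sqrt{\op{\rho}_t}}{dt} + \tfrac{i}{2}\sqrt{\op{\rho}_t}\,M\Bigr)\op{\rho}_t^{-1/2},
\qquad
M:=\sum_\mu \op{L}{_t^\mu}^\dagger\op{L}{_t^\mu},
\]
and the full expression is $\tilde{H}=-\frac{1}{2}(A+A^\dagger)$. The plan is to show that, under \cref{eq:rho_diag} and \cref{eq:vanishing}, every operator appearing in $A$ is diagonal in the sentence basis $\{\ket{\m}\}$. The whole computation then reduces to scalars, and $A$ turns out to be anti-Hermitian, so $A+A^\dagger=0$.

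The first step is to show that $M$ is diagonal. Its off-diagonal matrix elements are
\[
\bra{\m}M\ket{\n}=\sum_\mu\sum_{\k}\bra{\m}\op{L}{_t^\mu}^\dagger\ket{\k}\bra{\k}\op{L}{_t^\mu}\ket{\n},
\]
where we have inserted a resolution of the identity $\sum_\k\ket{\k}\bra{\k}$. Each $\k$-summand vanishes for $\m\neq\n$ by \cref{eq:vanishing}, so $M=\sum_\m c_\m\ket{\m}\bra{\m}$. The diagonal entries satisfy $c_\m=\sum_\mu\|\op{L}{_t^\mu}\ket{\m}\|^2\ge 0$, so they are real.

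The second step handles the density operator. Write $\op{\rho}_t=\sum_\m p_t(\m)\ket{\m}\bra{\m}$, assuming $p_t(\m)>0$ as in \cref{thm:reverse_process} so that $\op{\rho}_t^{-1/2}$ exists. Then $\sqrt{\op{\rho}_t}$, its inverse, and $\frac{d}{dt}\sqrt{\op{\rho}_t}$ are all diagonal with real entries. The basis is time-independent, so only the eigenvalues $p_t(\m)$ change with $t$. Here one needs diagonality to persist in time, which follows from \cref{thm:pauli} or simply from the hypothesis that $\op{\rho}_t$ is diagonal for all $t$.

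The third step assembles $A$. Since all factors are diagonal they commute, and
\[
A=\sum_\m i\Bigl(\frac{\dot{p}_t(\m)}{2p_t(\m)}+\frac{c_\m}{2}\Bigr)\ket{\m}\bra{\m}.
\]
This is $i$ times a real diagonal operator, so $A^\dagger=-A$ and hence $\tilde{H}=-\frac{1}{2}(A+A^\dagger)=0$.

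The only delicate point is the reality and commutation of the diagonal entries; everything else is bookkeeping. In particular, $\op{H}=0$ is what removes the non-diagonal Hamiltonian contribution. No use of the specific Lindblad dynamics, beyond keeping $\op{\rho}_t$ diagonal, is needed.
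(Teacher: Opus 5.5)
Your proposal is correct and follows essentially the same route as the paper. With $H=0$, you show that $\sum_\mu L_t^{\mu\dagger}L_t^\mu$ is diagonal by summing the vanishing condition over $\mathbf{k}$, note that all remaining factors are real diagonal operators that commute, and conclude that the bracketed operator is $i$ times a Hermitian operator, so it cancels against its adjoint. You also make explicit two assumptions the paper leaves implicit: $p_t(\mathbf{m})>0$ so that $\rho_t^{-1/2}$ exists, and diagonality on a time interval so that $d\sqrt{\rho_t}/dt$ is diagonal.
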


\begin{proof}
  From \cref{eq:zero_hamiltonian} in the conditions, we have 
  \begin{align}
    \tilde{H}
    &=
    -\frac{i}{2} \left(
        + \frac{d \sqrt{\op{\rho}_t}}{dt} \frac{1}{\sqrt{\op{\rho}_t}}
        + \frac{1}{2}\sum_\mu \sqrt{\op{\rho}_t} \op{L}{_t^\mu}^\dagger \op{L}{_t^\mu} \frac{1}{\sqrt{\op{\rho}_t}}
        \right)
        +(\text{1st term})^\dag
    ,
  \end{align}
  and it is sufficient to show that the operator inside $()$ is Hermite.
  From \cref{eq:rho_diag} in the given conditions, we have 
  \begin{align}
    \frac{d \sqrt{\op{\rho}_t}}{dt} \frac{1}{\sqrt{\op{\rho}_t}}
    &=
    \sum_{\m \in \vocab^\NumToken}
      \frac{
        \frac{d}{dt} \sqrt{p_t(\m)}
      }{
        \sqrt{p_t(\m)}
      }
      \ket{\m}
      \bra{\m}
      ,
  \end{align}
  which is Hermite.
  To demonstrate the Hermiticity of 2nd term, we sum over $\k$ for \cref{eq:vanishing} that gives
  \begin{align}
    0
    &=
    \sum_\mu   
      \bra{\m}
        \op{L}{_t^\mu}^\dag 
        \op{L}{_t^\mu}
      \ket{\n} 
    .
  \end{align}
  This means the Hermitian operator $\sum_\mu \op{L}{_t^\mu}^\dag  \op{L}{_t^\mu}$ is diagonal.
  Then, the 2nd term is Hermite 
  \begin{align}
    \sum_\mu 
        \sqrt{\op{\rho}_t}
        \op{L}{_t^\mu}^\dag 
        \op{L}{_t^\mu}
        \frac{1}{
            \sqrt{\op{\rho}_t}
        }
    &=
    \underbrace{\sqrt{\op{\rho}_t}}_\text{diagonal}
    \underbrace{\left(\sum_\mu \op{L}{_t^\mu}^\dag \op{L}{_t^\mu}\right)}_\text{diagonal}
    \underbrace{\frac{1}{\sqrt{\op{\rho}_t}}}_\text{diagonal}
    =\underbrace{\sum_\mu \op{L}{_t^\mu}^\dag \op{L}{_t^\mu}}_\text{Hermite}
    ,
  \end{align}
  where we used that diagonal operators commute.
\end{proof}

Next, we show the time-reversal jump operators satisfies the condition for \cref{thm:pauli}.
We use $\op{\rho}_t = \sum_{\m \in \vocab^\NumToken} p_t(\m) \ket{\m}\bra{\m}$ in the following calculations.
\begin{align}
  \sum_\mu
    \bra{\m}
      \tilde{L}{_t^\mu} 
    \ket{\k}
    \bra{\k}
      \tilde{L}{_t^\mu}^\dag 
    \ket{\n}
  &=
  \frac{
    \sqrt{
      p_t(\m)
      p_t(\n)
      }
  }{
    p_t(\k)
  }
  \underbrace{
  \sum_\mu 
    \bra{\m}
      \op{L}{_t^\mu}^\dag
    \ket{\k}
    \bra{\k}
      \op{L}{_t^\mu} 
    \ket{\n}
  }_{=0 \ \forall \k, \m, \n \ \m \neq \n}
  .
\\
  \sum_\mu
    \bra{\m}
      \tilde{L}{_t^\mu}^\dag
    \ket{\k}
    \bra{\k}
      \tilde{L}{_t^\mu}
    \ket{\n}
  &=
  \frac{
    p_t(\k)
  }
  {
    \sqrt{
      p_t(\m)
      p_t(\n)
      }
  }
  \underbrace{
  \sum_\mu 
    \bra{\m}
      \op{L}{_t^\mu}
    \ket{\k}
    \bra{\k}
      \op{L}{_t^\mu}^\dag
    \ket{\n}
  }_{=0 \ \forall \k, \m, \n \ \m \neq \n}
  .
\end{align}
Therefore, we can apply \cref{thm:pauli} to the time-reversal Lindblad equation.

We can prove the \cref{thm:tilde_L} as a corollary of this theorem.
The remaining problem is the explicit form of the time-reversal jump operators, which is shown in the next statement, and it proves \cref{thm:tilde_L}.

\begin{mytheorem}{Time-reversal Jump Operators}{tilde_L-appdx}
  Under our setting in \cref{eq:full_jump_operators}, we have 
  \begin{align*}
    {\small
    \tilde{L}{_t^{\mu=(p, m, n)}}
    =
      \sum_{\k}
        \begin{array}{c}
          \ket{k_1}\bra{k_1}
          \\
          \otimes 
          \\
          \vdots 
          \\
          \otimes 
          \\
          \sqrt{\sigma(t)}
          \tilde{L}{_{m, n, t}^\text{$(p)$tok}}(\k)
          \\
          \otimes 
          \\
          \vdots 
          \\
          \otimes 
          \\
          \ket{k_\NumToken}\bra{k_\NumToken}
        \end{array}
    }
        \quad 
          \text{where }
          \tilde{L}{_{m, n, t}^\text{$(p)$tok}}(\k)
          =
          \sqrt{
              \tilde{Q}^\text{$(p)$tok}(n, \k)
            }
            1_{m k_p}
            \ket{n}\bra{k_p}
          .
  \end{align*}
\end{mytheorem}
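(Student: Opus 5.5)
The plan is a direct computation from the definition $\tilde{L}{_t^\mu}=\sqrt{\op{\rho}_t}\,\op{L}{_t^\mu}^\dagger\,\op{\rho}_t^{-1/2}$ in \cref{thm:thm_time_rev}. We use the diagonal state $\op{\rho}_t=\sum_{\k}p_t(\k)\ket{\k}\bra{\k}$, which is justified because the forward dynamics stays diagonal by \cref{thm:dlm_as_oqs} and \cref{thm:pauli}, and we assume $p_t>0$ throughout. Since $\op{\rho}_t$ is diagonal in the sentence basis, $\sqrt{\op{\rho}_t}=\sum_{\k}\sqrt{p_t(\k)}\ket{\k}\bra{\k}$ and $\op{\rho}_t^{-1/2}=\sum_{\k}p_t(\k)^{-1/2}\ket{\k}\bra{\k}$. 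Hence conjugation acts on matrix elements as
\begin{align*}
\bra{\l}\tilde{L}{_t^\mu}\ket{\k}=\sqrt{\frac{p_t(\l)}{p_t(\k)}}\,\overline{\bra{\k}\op{L}{_t^\mu}\ket{\l}}.
\end{align*}
So it suffices to compute the matrix elements of $\op{L}{_t^{\mu}}^\dagger$ in the sentence basis and then reweight them.

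The first step uses \cref{eq:full_jump_operators}. For $\mu=(p,m,n)$, we have $\op{L}{_t^{\mu}}^\dagger=\sqrt{\sigma(t)}\,I\otimes\cdots\otimes\sqrt{Q^\text{tok}(m,n)}\ket{n}\bra{m}\otimes\cdots\otimes I$, with the nontrivial factor in slot $p$. Its only nonzero matrix elements are therefore
\begin{align*}
\bra{\k_{p\to n}}\op{L}{_t^\mu}^\dagger\ket{\k}=\sqrt{\sigma(t)Q^\text{tok}(m,n)}\,1_{mk_p}
\end{align*}
for every $\k$. Inserting the identity $\sum_{\k}\ket{\k}\bra{\k}$ on the right, we write $\op{L}{_t^\mu}^\dagger=\sum_{\k}\sqrt{\sigma(t)Q^\text{tok}(m,n)}\,1_{mk_p}\ket{\k_{p\to n}}\bra{\k}$. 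Each term factorizes as $\ket{k_1}\bra{k_1}\otimes\cdots\otimes\ket{n}\bra{k_p}\otimes\cdots\otimes\ket{k_\NumToken}\bra{k_\NumToken}$.

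The second step applies the reweighting formula term by term. Each term picks up the factor $\sqrt{p_t(\k_{p\to n})/p_t(\k)}$, which gives
\begin{align*}
\tilde{L}{_t^\mu}=\sum_{\k}\sqrt{\sigma(t)}\sqrt{Q^\text{tok}(m,n)\frac{p_t(\k_{p\to n})}{p_t(\k)}}\,1_{mk_p}\,\ket{k_1}\bra{k_1}\otimes\cdots\otimes\ket{n}\bra{k_p}\otimes\cdots\otimes\ket{k_\NumToken}\bra{k_\NumToken}.
\end{align*}
The final identification is where care is needed. On the support of $1_{mk_p}$ we have $Q^\text{tok}(m,n)=Q^\text{tok}(k_p,n)$. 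Comparing with \cref{eq:tdelta_p} under the relabeling $\n\to\k$, $m_p\to n$, we get $\tilde{Q}^{(p)\text{tok}}(n,\k)=Q^\text{tok}(k_p,n)\,p_t(\k_{p\to n})/p_t(\k)$. This is exactly the expression stated in \cref{thm:tilde_L-appdx}.

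The main obstacle is bookkeeping rather than anything conceptual. One must check that the insertion of $\sum_{\k}\ket{\k}\bra{\k}$ yields exactly the projectors $\ket{k_q}\bra{k_q}$ in the slots $q\neq p$. The reason is that the reweighting factor depends on the whole configuration $\k$, so the identity factors can no longer be pulled out as plain $I$. One must also confirm that the argument order in $\tilde{Q}^{(p)\text{tok}}$ matches \cref{eq:tdelta_p}. We would also note in passing that the diagonal terms $m=n$ are excluded by \cref{eq:mLk-main}, so no diagonal contributions arise.
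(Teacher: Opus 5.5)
Your proposal is correct and is essentially the paper's own proof. Like you, the paper substitutes the diagonal $\rho_t=\sum_{\k}p_t(\k)\ket{\k}\bra{\k}$ and the tensor-product jump operators into $\tilde{L}_t^{\mu}=\sqrt{\rho_t}\,(L_t^{\mu})^\dagger\,\rho_t^{-1/2}$, expands in the sentence basis so that the configuration-dependent factor $\sqrt{p_t(\k_{p\to n})/p_t(\k)}$ produces the projectors $\ket{k_q}\bra{k_q}$ in the slots $q\neq p$, and then uses $1_{mk_p}$ to rewrite $Q^\text{tok}(m,n)$ as $Q^\text{tok}(k_p,n)$, which identifies $\tilde{Q}^{(p)\text{tok}}(n,\k)$.
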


\begin{proof}
  By just substituting our conditions $\op{L}{_t^{\mu=(p, m, n)}} = \sqrt{\sigma(t)} I \otimes \cdots \otimes I \otimes {\op{L}_{m, n}^\text{$(p)$tok}}\otimes I \otimes \cdots \otimes I$, $\op{\rho}_t = \sum_{\k \in \vocab^\NumToken} p_t(\k) \ket{\k}\bra{\k}$ to the explicit form of $\tilde{L}{_t^\mu}$ defined in \cref{eq:backward_operators}, we get
  {\small
  \begin{align*}
    &
      \sqrt{\op{\rho}_t} 
      \op{L}{_t^{\mu=(p, m, n)}}^\dag
      \frac{1}{
        \sqrt{\op{\rho}_t} 
      }
    =
      \sum_{\l, \k \in \vocab^\NumToken}
      \sqrt{
        \frac{
          p_t(\l)
        }{
          p_t(\k)
        }
        \sigma(t)
        }
      \ket{\l}
      \bra{\k}
      1_{\l_{-p} \k_{-p}}
      {
        \bra{l_p}
          \op{L}{_{m, n}^\text{$(p)$tok}}^\dag
        \ket{k_p}
      }
    \\
    &=
    \sum_{\k \in \vocab^\NumToken}
      \left(
      \begin{array}{lc}
        &\ket{k_1}\bra{k_1}
        \\
        &\otimes
        \\
        &\vdots 
        \\
        &\otimes
        \\
        \sum_{l_p \in \vocab}
          \sqrt{
            Q^\text{tok}(m, n)
            \frac{
              p_t(\k_{p \to l_p})
            }{
              p_t(\k)
            }
            \sigma(t)
          }
          1_{l_p n} 1_{m k_p}
          &
          \ket{l_p}\bra{k_p}
        \\
        &\otimes 
        \\
        &\vdots 
        \\
        &\otimes
        \\
        &\ket{k_\NumToken}\bra{k_\NumToken}
      \end{array}
      \right)
    \\
    &=
    \sum_{\k\in \vocab^{\NumToken}}
          \sqrt{
            \underbrace{
              Q^\text{tok}(k_p, n)
              \frac{
                p_t(\k_{p \to n})
              }{
                p_t(\k)
              }
            }_{
              = \tilde{Q}^\text{$(p)$tok}(n, \k)
            }
            \sigma(t)
          }
          1_{m k_p}
      \begin{array}{lc}
        &\ket{k_1}\bra{k_1}
        \\
        &\otimes
        \\
        &\vdots 
        \\
        &\otimes
        \\
          &
          \ket{n}\bra{k_p}
        \\
        &\otimes 
        \\
        &\vdots 
        \\
        &\otimes
        \\
        &\ket{k_\NumToken}\bra{k_\NumToken}
      \end{array}
    .
  \end{align*}
  }
\end{proof}

The operator $\tilde{L}{_{m, n, t}^\text{$(p)$tok}}(\k)$ satisfies the following conditions.

\begin{mytheorem}{}{prod}
  The following holds:
  \begin{align*}
    \tilde{L}{_{m, n, t}^\text{$(p)$tok}}(\k)^\dag 
    \tilde{L}{_{m, n, t}^\text{$(p)$tok}}(\l)
    &
    \propto 
    1_{k_p l_p}
    ,
    \quad
    \tilde{L}{_{m, n, t}^\text{$(p)$tok}}(\k)^\dag 
    \tilde{L}{_{m, n, t}^\text{$(p)$tok}}(\k)
    =
        \tilde{Q}^\text{$(p)$tok}(n, \k)
        1_{m k_p}
      \ket{k_p}
      \bra{k_p}
    .
  \end{align*}
  
\end{mytheorem}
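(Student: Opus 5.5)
The plan is to compute both products directly from the explicit rank-one form of the token-level operator in \cref{thm:tilde_L-appdx},
\[
\tilde{L}{_{m, n, t}^\text{$(p)$tok}}(\k)=c(\k)\,\ket{n}\bra{k_p},
\qquad
c(\k)=\sqrt{\tilde{Q}^\text{$(p)$tok}(n, \k)}\,1_{m k_p}.
\]
The scalar $c(\k)$ is real and nonnegative: $\tilde{Q}^\text{$(p)$tok}(n,\k)=Q^\text{tok}(k_p,n)\,p_t(\k_{p\to n})/p_t(\k)\geq 0$, and the indicator is $0$ or $1$.

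First, take the adjoint: $\tilde{L}{_{m, n, t}^\text{$(p)$tok}}(\k)^\dag=c(\k)\,\ket{k_p}\bra{n}$. For two sequences $\k,\l$, the product is then
\[
\tilde{L}{_{m, n, t}^\text{$(p)$tok}}(\k)^\dag\,\tilde{L}{_{m, n, t}^\text{$(p)$tok}}(\l)
=c(\k)c(\l)\,\ket{k_p}\braket{n|n}\bra{l_p}
=c(\k)c(\l)\,\ket{k_p}\bra{l_p}.
\]
This does not immediately give a $1_{k_pl_p}$ factor. The factor comes from the indicators instead: $c(\k)c(\l)$ contains $1_{mk_p}1_{ml_p}$, which vanishes unless $k_p=l_p=m$. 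Hence the product is proportional to $1_{k_pl_p}$, and the first claim follows.

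Second, set $\l=\k$. Then $c(\k)^2=\tilde{Q}^\text{$(p)$tok}(n,\k)\,1_{mk_p}^2=\tilde{Q}^\text{$(p)$tok}(n,\k)\,1_{mk_p}$, since the indicator is idempotent. This gives the second formula, $\tilde{Q}^\text{$(p)$tok}(n,\k)\,1_{mk_p}\ket{k_p}\bra{k_p}$.

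The only point needing care is the first claim. The $1_{k_pl_p}$ factor is supplied by the shared indicator $1_{m\,\cdot}$, not by orthogonality of kets, so it must be made explicit. Otherwise the argument is a two-line computation with no substantive obstacle.
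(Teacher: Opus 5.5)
Your proposal is correct and matches the paper's proof: a direct computation of the rank-one product $c(\k)c(\l)\ket{k_p}\braket{n|n}\bra{l_p}$, with $1_{k_pl_p}$ extracted from the identity $1_{mk_p}1_{ml_p}=1_{k_pl_p}1_{mk_p}$ rather than from orthogonality, and the second formula obtained by setting $\l=\k$. One small imprecision is harmless: your blanket claim $\tilde{Q}^{(p)\mathrm{tok}}(n,\k)\geq 0$ fails when $n=k_p$ (a diagonal entry), but then $m\neq n$ forces $1_{mk_p}=0$, so $c(\k)$ is still real and nonnegative.
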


\begin{proof}
  By direct calculation,
  \begin{align*}
    &
    \tilde{L}{_{m, n, t}^\text{$(p)$tok}}(\k)^\dag 
    \tilde{L}{_{m, n, t}^\text{$(p)$tok}}(\l)
    =
      \sqrt{
        \tilde{Q}^\text{$(p)$tok}(n, \k)
        \tilde{Q}^\text{$(p)$tok}(n, \l)
      }
      \underbrace{
      1_{m k_p}
      1_{m l_p}
      }_{
        1_{k_p l_p}
        1_{m k_p}
      }
      \ket{k_p}
      \underbrace{
        \braket{n|n}
      }_{1}
      \bra{l_p}
    ,
  \end{align*}
  which shows the statement.
\end{proof}

\subsection{Sampling by Stochastic Unraveling and Measurement}
\label{seq:prf_sampling}

In the main text, we mentioned the fact that the diagram in \cref{fig:unraveling_projection} commutes without providing a proof. Here, we state it as a theorem and provide a proof.

\begin{mytheorem}{Sampling based on Stochastic Unraveling}{sampling_appdx}
    Suppose $t \geq s$. Given a state $\ket{\n}$ at time $s$, we consider sampling $\m$ at time $t$ by first performing stochastic unraveling from $s$ to $t$ and then projective measurement at $t$. Specifically:
    \begin{enumerate}
    \item Iteratively update a pure state from $\ket{\n}$ to $\ket{\bpsi_{t}}$ using one of unraveling operators.
    \item Sample sentence $\m$ from the probability $P_\text{proj}\big(\m \big| \bpsi_{t}\big) = |\braket{\m|\bpsi_{t}}|^2$.
    \end{enumerate}

    This procedure yields a sample $\m$ distributed according to the conditional probability $p_{t|s}(\m|\n)$ defined in \cref{eq:solution-appdx}.
\end{mytheorem}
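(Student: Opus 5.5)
The plan is to reduce the statement to a property of the averaged density operator, combining \cref{thm:qjump} (or the corresponding $\tau$-leaping unraveling) with \cref{thm:dlm_as_oqs} and \cref{thm:pauli}. The key observation is that the measurement probability is linear in the projector $\ket{\bpsi_t}\bra{\bpsi_t}$. Averaging over the trajectory randomness $\{dN\}$ therefore gives
\begin{align}
  \mathop{\mb{E}}\limits_{\text{traj}} |\braket{\m|\bpsi_t}|^2
  =
  \bra{\m}\, \mathop{\mb{E}}\limits_{\text{traj}}\ket{\bpsi_t}\bra{\bpsi_t}\, \ket{\m}
  =:
  \bra{\m}\bar{\rho}_t\ket{\m}.
\end{align}
So it suffices to show that $\bar{\rho}_t$ is the solution of the Lindblad equation started from $\ket{\n}\bra{\n}$ at time $s$, and that its diagonal equals $p_{t|s}(\cdot|\n)$.

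\textbf{Step 1.} I would show that $\bar\rho_\tau$ obeys \cref{eq:lindblad} for $\tau\in[s,t]$. \Cref{thm:qjump} gives the one-step identity $\mathbb{E}_{dN_\tau}\ket{\bpsi_{\tau+d\tau}}\bra{\bpsi_{\tau+d\tau}}=\ket{\bpsi_\tau}\bra{\bpsi_\tau}+\mc{L}_\tau(\ket{\bpsi_\tau}\bra{\bpsi_\tau})d\tau$. Conditioning on the past (tower property) and using linearity of $\mc{L}_\tau$, this yields $\bar\rho_{\tau+d\tau}=\bar\rho_\tau+\mc{L}_\tau(\bar\rho_\tau)d\tau$. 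The initial condition is $\bar\rho_s=\ket{\n}\bra{\n}$. For the $\tau$-leaping operator, \cref{thm:tau-qjump} gives the same identity up to $O(\Delta t^2)$, and these errors vanish after summing $O(1/\Delta t)$ steps.

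\textbf{Step 2.} I would identify the solution. The initial state $\ket{\n}\bra{\n}$ is diagonal. The jump operators \cref{eq:full_jump_operators} satisfy the conditions \cref{eq:red_cond}, which was verified in the proof of \cref{thm:dlm_as_oqs}. Hence the diagonal ansatz $\rho_\tau=\sum_\k p_\tau(\k)\ket{\k}\bra{\k}$ solves the Lindblad equation, with $p_\tau$ obeying \cref{eq:diff_eq} and $p_s=1_{\cdot\,\n}$. Uniqueness of solutions to the finite-dimensional linear ODE then forces $\bar\rho_\tau$ to equal this diagonal solution. By \cref{thm:integral2}, $p_t(\m)=p_{t|s}(\m|\n)$, so $\bra{\m}\bar\rho_t\ket{\m}=p_{t|s}(\m|\n)$, which is the claim. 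The reverse-time case goes the same way, using the operators of \cref{thm:tilde_L} and \cref{eq:lindblad_rev-main}, whose diagonal sector reproduces \cref{eq:inv_diff_eq} by the argument given after \cref{thm:zero_hamiltonian_rev}.

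\textbf{Main obstacle.} The hardest step will be making Step 1 rigorous, that is, passing from the infinitesimal statement of \cref{thm:qjump} to a finite interval. The trajectory map $\rho\mapsto\ket{\bpsi}\bra{\bpsi}$ is nonlinear because of the normalization, so the averaging must be done step by step conditioned on the past, not all at once. My approach is to treat the discretized process as a Markov chain on pure states whose one-step averaged action on $\ket{\bpsi}\bra{\bpsi}$ equals $1+\mc{L}_\tau d\tau+O(d\tau^2)$, a linear map. Linearity then lets the average pass through all steps. Gronwall-type bounds on the bounded generator control the accumulated $O(d\tau^2)$ errors as the mesh goes to zero.

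A side remark: a diagonal initial state is mapped to basis states by the jumps, and the no-jump evolution is diagonal because $\sum_\mu L^{\mu\dagger}L^\mu$ is diagonal. So each trajectory actually stays a basis vector up to phase. This gives an alternative, purely classical check that the procedure is exactly the Gillespie simulation of $Q_t$.
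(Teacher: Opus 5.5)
Your proposal is correct and is essentially the paper's proof: linearity of the Born rule turns the marginal into $\bra{\m}\op{\rho}_{t|s}(\n)\ket{\m}$ for the trajectory-averaged state, which obeys the Lindblad equation by \cref{thm:qjump}, reduces on the diagonal to the diffusion equation by \cref{thm:dlm_as_oqs}, and starts from $\ket{\n}\bra{\n}$, so ODE uniqueness identifies it with $p_{t|s}(\m|\n)$. Your tower-property/Gronwall passage from the one-step identity to a finite interval, and your use of uniqueness for the full Lindblad equation (which derives, rather than tacitly assumes, the diagonality needed to invoke \cref{thm:pauli}), make explicit what the paper's proof leaves implicit.
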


\begin{proof}
  The joint probability for realizing $\m, \ket{\bpsi_{t}}$ during this 2 steps is 
  \begin{align}
      P_{t|s}\big(
        \m, {\bpsi_{t}}
        \big| 
        \n
        \big)
      &=
      \underbrace{
        P_\text{proj}\big(\m_t \big| {\bpsi_{t}} \big)
      }_{\text{Process 2}}
      \underbrace{
      P_{t|s}^\text{unraveling} \big({\bpsi_{t}} \big| {\n} \big)
      }_{\text{Process 1}}
      , \label{eq:2steps}
  \end{align}
  its marginal to $\m$ is 
  \begin{align}
      P_{t|s}(\m|\n)
      &=
      \mathop{\mb{E}}\limits_{{\bpsi_{t}} \sim P_{t|s}^\text{unraveling}(\cdot | \n)} \Big[
        P_\text{proj}\big(\m \big| \bpsi_{t} \big)
      \Big]
      =
      \mathop{\mb{E}}\limits_{{\bpsi_{t}} \sim P_{t|s}^\text{unraveling}(\cdot | \n)} \Big[
        \braket{\m|\bpsi_{t}}
        \braket{\bpsi_{t}|\m}
      \Big]
      \notag \\
      &=
      \bra{\m}
      \underbrace{
      \left(
        \mathop{\mb{E}}\limits_{{\bpsi_{t}} \sim P_{t|s}^\text{unraveling}(\cdot | \n)} \Big[
          \ket{\bpsi_{t}}\bra{\bpsi_{t}}
        \Big]
      \right)
      }_{=: \op{\rho}_{t|s}(\n)}
      \ket{\m}
      . \label{eq:marginal_2steps}
  \end{align}
  Now, thanks to \cref{thm:qjump}, the density operator $\op{\rho}_{t|s}(\n)$ satisfies the Lindblad equation for $t$.
  In addition, under the setting of \cref{thm:dlm_as_oqs}, the Lindblad equation reduces to the equation for $\bra{\m} \op{\rho}_{t|s}(\n) \ket{\m}$ with the diffusion equation in \cref{eq:diff_eq}, which means \cref{eq:marginal_2steps} satisfies the diffusion equation in \cref{eq:diff_eq}.
  In addition, $\op{\rho}_{t|s}(\n)$ reduces to $\ket{\n}\bra{\n}$ at time $t=s$, which means 
  \begin{align}
    P_{t=s|s}(\m|\n)
    &=
    \braket{\m|\n}
    \braket{\n|\m}
    =
    1_{\m=\n},
  \end{align}
  and it is identical to the initial condition of $p_{t=s|s}(\m|\n)$ defined in \cref{eq:solution-appdx}.
  We show $P_{t|s}(\m|\n)$ satisfies: 1) same ODE to $p_{t|s}(\m|\n)$, and 2) same initial condition to $p_{t|s}(\m|\n)$, which means $P_{t|s} = p_{t|s}$ by uniqueness of the ODE solution.
\end{proof}

\subsection{Explicit Forms of $\tau$-leaping Quantum States}

This section provides the necessary information for executing simulations.
Here, token-level jump operators must satisfy the following format as in \cref{eq:mLk-main}:
\begin{align}
    \op{L}{_{m, n}^\text{tok}}
  =
  \sqrt{
    Q^\text{tok}(m, n)
  }
  \ket{m}\bra{n}
  ,
  \quad
  (m \neq n)
  ,
  \label{eq:mLk-appdx}
\end{align}

\subsubsection{Explicit forms of Token-level Time-reversal Effective Hamiltonian}

We calculate the time-reversal effective Hamiltonian within our setup.

\begin{mytheorem}{Time-reversal effective Hamiltonian}{H_eff_rev}
  We have 
  \begin{align*}
    \tilde{H}_\text{eff}
    &=
    \sum_{p=1}^\NumToken
    \sum_{\k}
      \begin{array}{c}
        \ket{k_1}\bra{k_1}
        \\
        \otimes 
        \\
        \vdots 
        \\
        \otimes 
        \\
        \sigma(t)
        \tilde{H}{_t^\text{$(p)$tok}}(\k)
        \\
        \otimes 
        \\
        \vdots 
        \\
        \otimes 
        \\
        \ket{k_\NumToken}\bra{k_\NumToken}
      \end{array}
      \
      \text{where}
      \left\{
      \begin{array}{l}
        \tilde{H}{_t^\text{$(p)$tok}}(\k)
        \displaystyle
        =
          -\frac{i}{2}
          \sum_{m, n}
            \tilde{L}{_{m, n, t}^\text{$(p)$tok}}(\k)^\dag 
            \tilde{L}{_{m, n, t}^\text{$(p)$tok}}(\k)
        \\
        \displaystyle
        =
          -\frac{i}{2}
            \sum_{m, n}
              \tilde{Q}^\text{$(p)$tok}(n, \k)
              1_{m k_p}
              \ket{k_p}
              \bra{k_p}
      \end{array}
      \right.
    .
  \end{align*}  
\end{mytheorem}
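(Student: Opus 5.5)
We will compute $\tilde{H}_\text{eff} = \tilde{H}_t - \frac{i}{2}\sum_\mu \tilde{L}{_t^\mu}^\dagger \tilde{L}{_t^\mu}$ directly. By \cref{thm:zero_hamiltonian_rev}, whose hypotheses hold in the setting of \cref{thm:dlm_as_oqs} ($\op{H}=0$, $\op{\rho}_t$ diagonal, and the vanishing condition verified in the proof of \cref{thm:dlm_as_oqs}), we have $\tilde{H}_t=0$. The task therefore reduces to evaluating $\sum_{\mu=(p,m,n)} \tilde{L}{_t^\mu}^\dagger \tilde{L}{_t^\mu}$ using the explicit form from \cref{thm:tilde_L-appdx}.

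For fixed $\mu=(p,m,n)$, we write $\tilde{L}{_t^\mu}=\sum_{\k} A(\k)$, where $A(\k)$ is the tensor product with $\ket{k_q}\bra{k_q}$ at each $q\neq p$ and $\sqrt{\sigma(t)}\tilde{L}{_{m,n,t}^{(p)\text{tok}}}(\k)$ at slot $p$. Then
\begin{align*}
  \tilde{L}{_t^\mu}^\dagger\tilde{L}{_t^\mu}=\sum_{\k,\l}A(\k)^\dagger A(\l).
\end{align*}
This product factorizes slotwise. At each $q\neq p$ it gives $\ket{k_q}\braket{k_q|l_q}\bra{l_q}=1_{k_q l_q}\ket{k_q}\bra{k_q}$. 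At slot $p$ it gives $\sigma(t)\tilde{L}(\k)^\dagger\tilde{L}(\l)$, which by \cref{thm:prod} is proportional to $1_{k_p l_p}$. Together these force $\k=\l$, so the double sum collapses to the diagonal. Applying the second identity in \cref{thm:prod}, each surviving term becomes $\sigma(t)\tilde{Q}^{(p)\text{tok}}(n,\k)1_{mk_p}\ket{k_p}\bra{k_p}$ at slot $p$ and projectors $\ket{k_q}\bra{k_q}$ elsewhere.

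It remains to sum over $\mu=(p,m,n)$ and multiply by $-\frac{i}{2}$. The sums over $m,n$ pass inside slot $p$ because the other slots do not depend on them, and this yields exactly $\sigma(t)\tilde{H}_t^{(p)\text{tok}}(\k)$ in both of its stated forms. The sum over $p$ stays outside.

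The main subtlety is the collapse $\k=\l$ at slot $p$. There the projector structure is absent, and one must rely on the Kronecker factor $1_{mk_p}1_{ml_p}$. A related point is that the dependence of $\tilde{Q}$ on $\k$ through $p_t(\k_{p\to n})/p_t(\k)$ only involves components $k_q$ with $q\neq p$ besides $k_p$ itself. This dependence is consistent with the diagonal projectors, so no cross terms survive. Once this is checked, the rest is bookkeeping.
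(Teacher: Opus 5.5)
Your proposal is correct and follows the paper's proof essentially step for step. You eliminate $\tilde{H}_t$ via \cref{thm:zero_hamiltonian_rev} and expand $\sum_\mu (\tilde{L}_t^\mu)^\dagger \tilde{L}_t^\mu$ as a double sum over $\k,\l$. You then collapse this sum to $\k=\l$ using $\braket{k_q|l_q}=1_{k_q l_q}$ off slot $p$ and the Kronecker factor from \cref{thm:prod} at slot $p$, and read off the slot-$p$ factor with the second identity of \cref{thm:prod}. Your extra remark that $\tilde{Q}^{(p)\text{tok}}(n,\k)$ depends on all of $\k$ is harmless: it is a scalar prefactor, so it factors out of the tensor product before the Kronecker deltas force $\k=\l$.
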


\begin{proof}
  \begin{align*}
      \tilde{H}_\text{eff}
      \notag
    &=
      \underbrace{
        \tilde{H}
      }_{0 \ \text{by \cref{thm:zero_hamiltonian_rev}}}
      - 
      \frac{i}{2}
      \sum_\mu 
        \underbrace{
          \tilde{L}{_t^\mu}^\dag
        }_{\text{\cref{thm:tilde_L-appdx}}}
        \underbrace{
          \tilde{L}{_t^\mu}
        }_{\text{\cref{thm:tilde_L-appdx}}}
    \\
    &=
      - 
      \frac{i}{2}
          \sum_{p=1}^\NumToken
          \sum_{m, n}
          \sum_{\k, \l}
          \begin{array}{cc}
              \ket{k_1}
              \braket{k_1|l_1} 
              \bra{l_1}
            \\
              \otimes 
            \\
              \vdots 
            \\
              \otimes 
            \\
            \sigma(t)
              \tilde{L}{_{m, n, t}^\text{$(p)$tok}}(\k)^\dag 
              \tilde{L}{_{m, n, t}^\text{$(p)$tok}}(\l)
            \\
              \otimes 
            \\
              \vdots 
            \\
              \otimes 
            \\
              \ket{k_\NumToken}
              \braket{k_\NumToken|l_\NumToken}
              \bra{l_\NumToken}
          \end{array}
      .
  \end{align*}
  Thanks to $\braket{k|l} = 1_{k l}$, we get $1_{\k_{-p} \l_{-p}}$ in the summation.
  In addition, $\tilde{L}{_{m, n, t}^\text{$(p)$tok}}(\k)^\dag\tilde{L}{_{m, n, t}^\text{$(p)$tok}}(\l) \propto 1_{k_p l_p}$ by \cref{thm:prod}, which reduces the sum only for $\k = \l$.
  Consequently, the sum turns into a single expression as follows:
  \begin{align*}
    \tilde{H}_\text{eff}
    &=
      - 
      \frac{i}{2}
          \sum_{p=1}^\NumToken
          \sum_{m, n}
          \sum_{\k}
          \begin{array}{c}
              \ket{k_1}\bra{k_1} 
            \\
              \otimes 
            \\
              \vdots 
            \\
              \otimes 
            \\
            \sigma(t)
              \tilde{L}{_{m, n, t}^\text{$(p)$tok}}(\k)^\dag 
              \tilde{L}{_{m, n, t}^\text{$(p)$tok}}(\k)
            \\
              \otimes 
            \\
              \vdots 
            \\
              \otimes 
            \\
              \ket{k_\NumToken}\bra{k_\NumToken}
          \end{array}
  \end{align*}
  Finally, we compute the $p$-component by \cref{thm:prod}:
  \begin{align*}
    \underbrace{
      -\frac{i}{2}
        \sum_{m, n}
        \tilde{L}{_{m, n, t}^\text{$(p)$tok}}(\k)^\dag 
        \tilde{L}{_{m, n, t}^\text{$(p)$tok}}(\k)
    }_{
      =: 
      \tilde{H}{_t^\text{$(p)$tok}}(\k)
    }
    &=
      -\frac{i}{2}
        \sum_{m, n}
        \tilde{Q}^\text{$(p)$tok}(n, \k)
        1_{m k_p}
        \ket{k_p}
        \bra{k_p}
  \end{align*}
\end{proof}

As a lemma:

\begin{mytheorem}{}{H-lem}
  The following holds.

  \begin{align*}
    \ket{k_p}\bra{k_p}
    -
    i 
      \tilde{H}{_t^\text{$(p)$tok}}(\k) \Delta t
    &=
    \sqrt{
      1
      +
      \sum_m 
          \tilde{Q}^\text{$(p)$tok}(k_p, \k)
          \sigma(t)
          1_{m k_p}
          \Delta t
    }
    \ket{k_p}\bra{k_p}
    ,
  \end{align*}

  up to $O(\Delta t^2)$, where 

  \begin{align}
    \tilde{Q}^\text{$(p)$tok}(k_p, \k)
    &=
    -
    \sum_{n \neq k_p} 
      \tilde{Q}^\text{$(p)$tok}(n, \k)
    .
    \label{eq:s-sum}
  \end{align}

\end{mytheorem}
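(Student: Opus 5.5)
The plan is a direct first-order computation on the single token space $\mc{H}_\vocab$, using the explicit form of $\tilde{H}{_t^\text{$(p)$tok}}(\k)$ from \cref{thm:H_eff_rev}. I read the $\Delta t$ term on the left-hand side as carrying the factor $\sigma(t)$ that multiplies $\tilde{H}{_t^\text{$(p)$tok}}(\k)$ in $\tilde{H}_\text{eff}$, since that is how the lemma will be used inside $1-i\tilde{H}_\text{eff}\Delta t$. Fixing this convention first is the main bookkeeping issue. The other subtlety is that the jump labels $(m,n)$ range only over $m\neq n$, as in \cref{eq:mLk-main}.

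\textbf{Step 1: simplify the left-hand side.} Substitute the second expression in \cref{thm:H_eff_rev}:
\begin{align*}
  -i\,\sigma(t)\tilde{H}{_t^\text{$(p)$tok}}(\k)\Delta t
  =
  -\frac{1}{2}\sigma(t)\Delta t
  \sum_{m}\sum_{n\neq m}
  \tilde{Q}^\text{$(p)$tok}(n,\k)\,1_{m k_p}\,\ket{k_p}\bra{k_p}.
\end{align*}
The indicator $1_{m k_p}$ collapses the $m$-sum to the single term $m=k_p$. The inner sum then becomes $\sum_{n\neq k_p}\tilde{Q}^\text{$(p)$tok}(n,\k)$. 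By the definition \cref{eq:s-sum}, this equals $-\tilde{Q}^\text{$(p)$tok}(k_p,\k)$. Hence the left-hand side equals
\begin{align*}
  \Bigl(1+\tfrac{1}{2}\tilde{Q}^\text{$(p)$tok}(k_p,\k)\sigma(t)\Delta t\Bigr)\ket{k_p}\bra{k_p}.
\end{align*}

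\textbf{Step 2: expand the right-hand side.} On the right-hand side, the factor $\sum_m 1_{m k_p}$ likewise equals $1$. The square root is therefore $\sqrt{1+x}$ with $x=\tilde{Q}^\text{$(p)$tok}(k_p,\k)\sigma(t)\Delta t$. Taylor expansion gives $\sqrt{1+x}=1+x/2+O(x^2)$, and $x=O(\Delta t)$. This matches Step 1 up to $O(\Delta t^2)$, which proves the claim.

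\textbf{Remark on the role of the lemma.} The point is that the no-jump factor on position $p$ is a scalar multiple of the projector $\ket{k_p}\bra{k_p}$. This is the form that will later combine with the normalization in \cref{eq:norm_qjump} to produce the $\tau$-leaping weights. The expression under the root is $1$ minus the total escape rate times $\Delta t$, since $\tilde{Q}^\text{$(p)$tok}(k_p,\k)\le 0$. It is therefore nonnegative for small $\Delta t$, so no branch issue arises in the square root.
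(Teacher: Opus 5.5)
Your proposal is correct and follows the paper's own proof. Both substitute the explicit form of $\tilde{H}{_t^\text{$(p)$tok}}(\k)$ from \cref{thm:H_eff_rev}, use the zero-column-sum definition \cref{eq:s-sum} to rewrite $\sum_{n\neq k_p}\tilde{Q}^\text{$(p)$tok}(n,\k)$ as $-\tilde{Q}^\text{$(p)$tok}(k_p,\k)$, and match the result against $\sqrt{1+x}=1+x/2+O(x^2)$; the paper also silently inserts $\sigma(t)$ into the $\Delta t$ term, and your explicit use of the restriction $m\neq n$ is a slightly more careful version of the paper's underbraced column-sum step.
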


\begin{proof}
  
  By direct calculation,
  
  \begin{align*}
    \ket{k_p}\bra{k_p}
    -
    i 
    \underbrace{
      \tilde{H}{_t^\text{$(p)$tok}}(\k) 
    }_\text{\cref{thm:H_eff_rev}}
    \Delta t
    &=
      \ket{k_p}\bra{k_p}
      -
      i 
      \left(
        - \frac{i}{2}
          \sum_{m, n}
            \tilde{Q}^\text{$(p)$tok}(n, \k)
              \sigma(t)
            1_{m k_p}
            \ket{k_p}
            \bra{k_p}
      \right)
      \Delta t
    \\
    &=
      \left(
        1
        -
        \frac{1}{2}
        \underbrace{
        \sum_{m, n}
            \tilde{Q}^\text{$(p)$tok}(n, \k)
        }_{
          \sum_m 
          - \tilde{Q}^\text{$(p)$tok}(k_p, \k)
        }
              \sigma(t)
            1_{m k_p}
        \Delta t
      \right)
      \ket{k_p}\bra{k_p}
    \\
    &=
    \sqrt{
      1
      +
      \sum_m 
          \tilde{Q}^\text{$(p)$tok}(k_p, \k)
          \sigma(t)
          1_{m k_p}
      \Delta t
    }
    \ket{k_p}\bra{k_p}
    +
    O(\Delta t^2)
  \end{align*}
\end{proof}

Here, we shall refer to $\tilde{H}{_t^\text{$(p)$tok}}(\k)$ as the token-level Hamiltonian and $\tilde{L}{_{m, n, t}^\text{$(p)$tok}}(\k)$ as the token-level jump operator. 

\subsubsection{$\tau$-leaping unraveling}
\label{sec:tau-unraveling}

The naive unravelings are exact up to order dt, however, when attempting to use these methods for inferring language diffusion models, they are inefficient because of the design of $\op{L}{_t^\mu}$.
With this configuration, only one token can be jumped in a single step, making it inefficient.
One approach to solving this problem is the \textit{$\tau$-leaping-like} method.
We consider the quantum-jump unraveling in \cref{eq:qjump_apdx}.

Under this decomposition, we can introduce the corresponding concept for the unraveling of the $\tau$-leaping sampling:

\begin{mytheorem}{$\tau$-leaping of quantum jump}{tau-qjump}
  We define
  \begin{align}
    &
    \tilde{U}_{t - \Delta t | t}^\text{$\tau$-leap}
    \ket{\bpsi_t} 
    \notag
    \\
    &=
      \sum_{\k \in \vocab^\NumToken}
          \bigotimes_{p=1}^N
            \Big(
              \ket{k_p}\bra{k_p}
              - i \sigma(t)\tilde{H}_{t}^\text{$(p)$tok}(\k) \Delta t
            \notag
            \\ & \qquad \qquad \qquad
              +
              \sum_{m, n} 
                (
                  \sqrt{\sigma(t)}
                  \tilde{L}{_{m, n, t}^\text{$(p)$tok}}(\k)
                  -
                  \ket{k_p}\bra{k_p}
                )
                \Delta N_t^{m, n|p}
            \Big) 
      \ket{\bpsi_t}
          / (\text{numerator norm})
      ,
  \end{align}
  where $\Delta N_t^{m, n|p}$ is a new process defined by 
  \begin{align}
    \text{Prob}\Big(
      \forall (m, n), 
      {\color{blue}\Delta N_t^{m, n|p}
      =
      0}
      \Big)
    &= 1 - \sum_{m, n}\braket{\tilde{L}{_t^{\mu=(p, m, n)}}^\dag \tilde{L}{_t^{\mu=(p, m, n)}}}_{\bpsi_t} \Delta t
    \\
    \text{Prob}\Big(
      \exists^1 (m, n), 
      {\color{blue}\Delta N_t^{m, n|p}
      =
      1}
    \Big)
    &= \braket{\tilde{L}{_t^{\mu=(p, m, n)}}^\dag \tilde{L}{_t^{\mu=(p, m, n)}}}_{\bpsi_t} \Delta t
    .
  \end{align}
  Then, the operation becomes stochastic unraveling up to first order in $\Delta t$. 
  \begin{align}
    \mb{E}_{\Delta N} \left[
      \tilde{U}_{t - \Delta t | t}^\text{$\tau$-leap}
      \ket{\bpsi_t}
      \bra{\bpsi_t}
      \tilde{U}{_{t - \Delta t | t}^\text{$\tau$-leap}}^\dag
    \right]
    &=
    \ket{\bpsi_t}
    \bra{\bpsi_t}
    +
    \tilde{\mc{L}}_t (\ket{\bpsi_t}\bra{\bpsi_t}) \Delta t
    +
    O(\Delta t^2)
    ,
  \end{align}
  with the reverse-time Lindblad operator $\tilde{\mc{L}}_t$.
\end{mytheorem}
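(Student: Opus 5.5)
The plan is to reduce the $\tau$-leaping update to the single-jump quantum-jump update of \cref{thm:qjump}, applied to the reverse-time operators $\tilde{H}_t=0$ (\cref{thm:zero_hamiltonian_rev}) and $\tilde{L}_t^\mu$ (\cref{thm:tilde_L-appdx}), up to corrections of order $\Delta t^2$. First I rewrite the action of $\tilde{U}^{\text{$\tau$-leap}}_{t-\Delta t|t}$ on $\ket{\bpsi_t}=\sum_{\k}\psi_{\k}\ket{\k}$. Each position-$p$ factor in the $\k$-summand contains the projector $\ket{k_p}\bra{k_p}$ on its right, so only the coefficient $\psi_{\k}$ survives. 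The update is therefore $\sum_{\k}\psi_{\k}\bigotimes_p A_p(\k)\ket{k_p}$, where $A_p(\k)$ denotes the bracketed per-position factor. Every multi-position term can thus be analysed position by position.

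Next I expand the product over $p$ according to the realisation of the increments $\Delta N_t^{m,n|p}$. The increments at different positions are sampled independently, and each position jumps with probability $O(\Delta t)$. Hence, up to $O(\Delta t^2)$ in probability, the only events are:
\begin{itemize}
\item no jump at any position, with probability $\prod_p(1-r_p\Delta t)=1-\sum_{p,m,n}\braket{\tilde{L}{_t^{\mu}}^\dag\tilde{L}{_t^{\mu}}}_{\bpsi_t}\Delta t+O(\Delta t^2)$;
\item exactly one jump, in channel $\mu=(p,m,n)$, with probability $\braket{\tilde{L}{_t^{\mu}}^\dag\tilde{L}{_t^{\mu}}}_{\bpsi_t}\Delta t+O(\Delta t^2)$.
\end{itemize}
These are exactly the jump statistics of \cref{thm:qjump} with $\mu=(p,m,n)$.

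In the no-jump event I multiply out $\bigotimes_p(\ket{k_p}\bra{k_p}-i\sigma\tilde{H}^{(p)\text{tok}}_t(\k)\Delta t)$ to first order. This gives $\sum_{\k}\ket{\k}\bra{\k}-i\Delta t\sum_p\sum_{\k}(\cdots\otimes\sigma\tilde{H}^{(p)\text{tok}}_t(\k)\otimes\cdots)$, which by \cref{thm:H_eff_rev} equals $(1-i\tilde{H}^{\text{eff}}_t\Delta t)\ket{\bpsi_t}$. In the single-jump event at channel $(p,m,n)$, the $-\ket{k_p}\bra{k_p}$ cancels the leading projector at position $p$, leaving $\sqrt{\sigma}\tilde{L}^{(p)\text{tok}}_{m,n,t}(\k)$. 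The other positions contribute $\ket{k_q}\bra{k_q}+O(\Delta t)$. Summing over $\k$ reproduces $\tilde{L}_t^{\mu}\ket{\bpsi_t}$ exactly by the tensor form in \cref{thm:tilde_L-appdx}. The $O(\Delta t)$ corrections in this branch multiply a probability $O(\Delta t)$ and drop out after normalisation.

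With these pieces, the normalised post-step state and its probabilities coincide with \cref{eq:norm_qjump} for the reverse operators, to first order. The expectation computation at the end of the proof of \cref{thm:qjump} then gives $\ket{\bpsi_t}\bra{\bpsi_t}+\tilde{\mc{L}}_t(\ket{\bpsi_t}\bra{\bpsi_t})\Delta t+O(\Delta t^2)$. I expect the main obstacle to be the bookkeeping in the normalisation and error estimates. Specifically, I must check three things:
\begin{itemize}
\item the events with two or more jumps, of probability $O(\Delta t^2)$, cannot contribute at first order;
\item the no-jump normalisation $1-\sum_\mu\braket{\tilde{L}{_t^{\mu}}^\dag\tilde{L}{_t^{\mu}}}\Delta t$ comes out correctly; this is where \cref{thm:H-lem} and the diagonal structure of $\sum_\mu\tilde{L}{_t^\mu}^\dag\tilde{L}{_t^\mu}$ from \cref{thm:prod} are used;
\item the jump branch norm $\braket{\tilde{L}{_t^{\mu}}^\dag\tilde{L}{_t^{\mu}}}_{\bpsi_t}$ matches the jump probability, so that the prefactors cancel exactly as in \cref{thm:qjump}.
\end{itemize}
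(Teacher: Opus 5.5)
Your proposal is correct and follows essentially the same route as the paper. The paper also splits the product over positions into the all-no-jump event and the exactly-one-jump events, discarding multi-jump events as $O(\Delta t^2)$, computes the branch norms $\prod_p\bigl(1-\sum_{m,n}\braket{\tilde{L}_t^{\mu\dag}\tilde{L}_t^{\mu}}_{\bpsi_t}\Delta t\bigr)$ and $\braket{\tilde{L}_t^{\mu\dag}\tilde{L}_t^{\mu}}_{\bpsi_t}+O(\Delta t)$ so that they cancel against the event probabilities, and matches the resulting expectation to the final expression in the proof of \cref{thm:qjump}; your explicit identification of the two branches with $(1-i\tilde{H}^{\text{eff}}_t\Delta t)\ket{\bpsi_t}$ and $\tilde{L}_t^\mu\ket{\bpsi_t}$ is just a cleaner packaging of that same computation.
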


\begin{proof}
  The definition of Poisson noise differs from conventional quantum jumps because it decomposes the jumps by token position.  
  Therefore, the direct method of calculating the expectation value of the density matrix would be the most straightforward approach.

  To conduct this, explicitly writing the numerator component of the vector state yields the following:
  \begin{align*}
    &\sum_{\k \in \vocab^\NumToken}
          \bigotimes_{p=1}^N
            \Big(
              \ket{k_p}\bra{k_p}
              - i \sigma(t)\tilde{H}_{t}^\text{$(p)$tok}(\k) {\color{red}\Delta t}
              \\ & \quad 
              +
              \sum_{m, n} 
                (
                  \sqrt{\sigma(t)}
                  \tilde{L}{_{m, n, t}^\text{$(p)$tok}}(\k)
                  -
                  \ket{k_p}\bra{k_p}
                )
                {\color{blue}\Delta N_t^{m, n|p}}
            \Big) 
      \ket{\bpsi_t}
    \notag \\
    &=
    \sum_{\k}
        \bigotimes_{p=1}^N
        \left\{
          \begin{array}{cl}
            \ket{k_p}\bra{k_p} - i \sigma(t)\tilde{H}{_t^\text{$(p)$tok}}(\k) {\color{red}\Delta t}
            &
            \text{w/ }
            \text{Pr}
            \left(
            \text{$\forall (m, n), {\color{blue} \Delta N{_t^{m, n|p}}=0}$}
            \right)
            \\
            \sqrt{\sigma(t)}
            \tilde{L}{_{m_p, n_p, t}^\text{$(p)$tok}}(\k) 
            &
            \text{w/ }
            \text{Pr}
            \left(
            \text{$\exists^1 (m_p, n_p), {\color{blue} \Delta N{_t^{m_p, n_p|p}}=1}$}
            \right)
          \end{array}
        \right\}
      \ket{\bpsi_t}
  \end{align*}
  In this expression, operators acting on individual tokens in their respective Hilbert spaces $\mc{H}_\vocab$ follow independent probability distributions for each token.
  To obtain the full form of the time evolution, we need to divide by the norm of each operator's action.
  {\small
  \begin{align*}
    &
    \text{norm}
    \\
    &=
      \bra{\bpsi_t}
        \sum_{\k}
          \bigotimes_{p=1}^N
            \left\{
              \begin{array}{cl}
                \left(
                \ket{k_p}\bra{k_p} + i \sigma(t)\tilde{H}{_t^\text{$(p)$tok}}(\k)^\dag {\color{red}\Delta t}
                \right)
                \left(
                \ket{k_p}\bra{k_p} - i \sigma(t)\tilde{H}{_t^\text{$(p)$tok}}(\k) {\color{red}\Delta t}
                \right)
                \\
                \sigma(t)
                \tilde{L}{_{m_p, n_p, t}^\text{$(p)$tok}}(\k)^\dag 
                \tilde{L}{_{m_p, n_p, t}^\text{$(p)$tok}}(\k) 
              \end{array}
            \right\}
      \ket{\bpsi_t}
  \end{align*}
  }
  Now, we use the expression of $\tilde{H}{_t^\text{$(p)$tok}}(\k)$ in \cref{thm:H_eff_rev} as 
  \begin{align*}
    &=
      \bra{\bpsi_t}
        \sum_{\k}
          \bigotimes_{p=1}^N
            \left\{
              \begin{array}{cl}
                \left(
                \ket{k_p}\bra{k_p} 
                - 
                  \sum_{m, n}
                    \sigma(t)
                    \tilde{L}{_{m, n, t}^\text{$(p)$tok}}(\k)^\dag
                    \tilde{L}{_{m, n, t}^\text{$(p)$tok}}(\k)
                {\color{red}\Delta t}
                \right)
                \\
                \sigma(t)
                \tilde{L}{_{m_p, n_p, t}^\text{$(p)$tok}}(\k)^\dag 
                \tilde{L}{_{m_p, n_p, t}^\text{$(p)$tok}}(\k) 
              \end{array}
            \right\}
      \ket{\bpsi_t}
    ,
  \end{align*}
  which means if all tokens show $\Delta N_t^{m, n|p} = 0$, the corresponding norm of the vector is 
  \begin{align}
    &
    \text{norm($\forall p, \forall (m, n), {\color{blue} \Delta N{_t^{m, n|p}}=0}$)}
    \notag
    \\
    &=
      \bra{\bpsi_t}
        \sum_{\k}
          \bigotimes_{p=1}^N
                \left(
                \ket{k_p}\bra{k_p} 
                - 
                  \sum_{m, n}
                  \sigma(t)
                    \tilde{L}{_{m, n, t}^\text{$(p)$tok}}(\k)^\dag
                    \tilde{L}{_{m, n, t}^\text{$(p)$tok}}(\k)
                {\color{red}\Delta t}
                \right)
      \ket{\bpsi_t}
    \notag
    \\
    &=
    {\small
    \bra{\bpsi_t}
      \left(
        1
        - 
        \sum_p 
          \sum_{m, n} 
          \sum_{\k}
            \ket{k_1}\bra{k_1} \otimes \cdots \otimes 
            \sigma(t)
            \tilde{L}{_{m, n, t}^\text{$(p)$tok}}(\k)^\dag 
            \tilde{L}{_{m, n, t}^\text{$(p)$tok}}(\k)  
            \otimes \cdots \otimes \ket{k_\NumToken}\bra{k_\NumToken}
          {\color{red} \Delta t}
      \right)
    \ket{\bpsi_t}
    }
    \notag
    \\
    &=
    \prod_{p}
    \left(
      1 
      - 
      \sum_{m, n} 
        \braket{
          \tilde{L}{_t^{\mu=(p, m, n)}}^\dag 
          \tilde{L}{_t^{\mu=(p, m, n)}}
        }_{
          \bpsi_t
        }
        {\color{red} \Delta t}
    \right)
    ,
    \label{eq:norm-all0}
  \end{align}
  where we used $\braket{\bpsi_t|\bpsi_t} = 1$.
  If only with $p$-th token $\Delta N_t^{m_p, n_p|p} = 1$, the norm is 
  \begin{align}
    &
    \text{norm($\exists^1 p, (m_p, n_p), {\color{blue} \Delta N{_t^{m_p, n_p|p}}=1}$)}
    \notag
    \\
    &=
    \bra{\bpsi_t}
        \sum_{\k}
        \bigotimes_{q=1}^N
        \left(
        \begin{array}{cl}
          \ket{k_q}\bra{k_q} 
          - \sum_{m, n} 
            \sigma(t)
              \tilde{L}{_{m, n, t}^\text{$(q)$tok}}(\k)^\dag 
              \tilde{L}{_{m, n, t}^\text{$(q)$tok}}(\k)  
            {\color{red} \Delta t}
          &
          (q \neq p)
          \\
          \sigma(t)
          \tilde{L}{_{m_p, n_p,t}^\text{$(p)$tok}}(\k)^\dag
          \tilde{L}{_{m_p, n_p,t}^\text{$(p)$tok}}(\k)
          &
          (q = p)
        \end{array}
      \right)
    \ket{\bpsi_t}
    \notag
    \\
    &=
    {\small
    \bra{\bpsi_t}
      \left(
        \sum_{\k}
          \ket{k_1}\bra{k_1}
          \otimes 
          \cdots 
          \otimes
          \sigma(t)
          \tilde{L}{_{m_p, n_p,t}^\text{$(p)$tok}}(\k)^\dag
          \tilde{L}{_{m_p, n_p,t}^\text{$(p)$tok}}(\k)
          \otimes 
          \cdots 
          \otimes 
          \ket{k_{\NumToken}}\bra{k_{\NumToken}}
        +
        O({\color{red} \Delta t})
      \right)
    \ket{\bpsi_t}
    }
    \notag
    \\
    &=
    \braket{
      \tilde{L}{_t^{\mu=(p, m_p, n_p)}}^\dag
      \tilde{L}{_t^{\mu=(p, m_p, n_p)}}
    }_{\bpsi_t}
    + O({\color{red} \Delta t})
    .
    \label{eq:norm-one1}
  \end{align}
  With these preparations complete, we can calculate the expectation value of the density matrix up to first order in ${\color{red}\Delta t}$ as follows:
  \begin{align*}
    &
    \mb{E}_{\Delta N} \left[
      \tilde{U}_{t - \Delta t | t}^\text{$\tau$-leap}
      \ket{\bpsi_t}
      \bra{\bpsi_t}
      \tilde{U}{_{t - \Delta t | t}^\text{$\tau$-leap}}^\dag
    \right]
    \\
    &=
      \sum_{\k, \l}
      \bigotimes_{p=1}^N
        \left(
        \ket{k_p}\bra{k_p} - i \sigma(t)\tilde{H}{_t^\text{$(p)$tok}}(\k) {\color{red} \Delta t}
        \right)
      \ket{\bpsi_t}
      \bra{\bpsi_t}
      \bigotimes_{q=1}^N
        \left(
        \ket{l_q}\bra{l_q} + i \sigma(t)\tilde{H}{_t^\text{$(q)$tok}}(\l)^\dag {\color{red} \Delta t}
        \right)
    \\
    &\quad 
    \times 
    \underbrace{
      \prod_{p=1}^\NumToken
      \text{Prob}
            \left(
            \text{$\forall (m, n), {\color{blue} \Delta N{_t^{m, n|p}}=0}$}
            \right)
    }_{
      \prod_{p}
        \left(
          1 
          - 
          \sum_{m, n} 
            \braket{
              \tilde{L}{_t^{\mu=(p, m, n)}}^\dag 
              \tilde{L}{_t^{\mu=(p, m, n)}}
            }_{
              \bpsi_t
            }
            {\color{red} \Delta t}
        \right)
    }
    / 
    \underbrace{
      \text{norm($\forall p, \forall (m, n), {\color{blue} \Delta N{_t^{m, n|p}}=0}$)}
    } 
    _{
      \prod_{p}
        \left(
          1 
          - 
          \sum_{m, n} 
            \braket{
              \tilde{L}{_t^{\mu=(p, m, n)}}^\dag 
              \tilde{L}{_t^{\mu=(p, m, n)}}
            }_{
              \bpsi_t
            }
            {\color{red} \Delta t}
        \right)
    }
    \\
    & +
    \sum_{p, m_p, n_p}
        \sum_{\k, \l}
        \bigotimes_{q=1}^N
        \left(
        \begin{array}{cl}
          \ket{k_q}\bra{k_q} - i \sigma(t) \tilde{H}{_t^\text{$(q)$tok}}(\k) {\color{red} \Delta t}
          &
          (q \neq p)
          \\
          \sqrt{\sigma(t)}
          \tilde{L}{_{m_p, n_p, t}^\text{$(p)$tok}}(\k)
          &
          (q = p)
        \end{array}
        \right)
      \ket{\bpsi_t}
      \\ & \qquad \qquad \qquad
      \bra{\bpsi_t}
        \bigotimes_{r=1}^N
        \left(
        \begin{array}{cl}
          \ket{l_r}\bra{l_r} + i \sigma(t)\tilde{H}{_t^\text{$(r)$tok}}(\l)^\dag {\color{red} \Delta t}
          &
          (r \neq p)
          \\
          \sqrt{\sigma(t)}
          \tilde{L}{_{m_p, n_p, t}^\text{$(p)$tok}}(\l)^\dag
          &
          (r = p)
        \end{array}
        \right)    
    \\ & \qquad 
    \times 
    \underbrace{
      \text{Prob}
            \left(
            \text{$\exists^1 (m_p, n_p), {\color{blue} \Delta N{_t^{m_p, n_p|p}}=1}$}
            \right)
      \prod_{\substack{q=1,  q \neq p}}^\NumToken
      \text{Prob}
            \left(
            \text{$\forall (m_q, n_q), {\color{blue} \Delta N{_t^{m_q, n_q|q}}=0}$}
            \right)
    }_{
      \braket{
        \tilde{L}{_t^{\mu=(p, m_p, n_p)}}^\dag
        \tilde{L}{_t^{\mu=(p, m_p, n_p)}}
      }
      {\color{red} \Delta t}
      \prod_{q \neq p}
        \left(
          1 
          - 
          \sum_{m_q, n_q} 
            \braket{
              \tilde{L}{_t^{\mu=(q, m_q, n_q)}}^\dag 
              \tilde{L}{_t^{\mu=(q, m_q, n_q)}}
            }_{
              \bpsi_t
            }
            {\color{red} \Delta t}
        \right)
    }
    \\ & \qquad \
    / 
    \underbrace{
      \text{norm($\exists^1 p, (m_p, n_p), {\color{blue} \Delta N{_t^{m_p, n_p|p}}=1}$)}
    } 
    _{
      \braket{
        \tilde{L}{_t^{\mu=(p, m_p, n_p)}}^\dag
        \tilde{L}{_t^{\mu=(p, m_p, n_p)}}
      }
      + O ({\color{red} \Delta t})
    }
    \\
    &+ O ({\color{red} \Delta t}^2)
  \end{align*}
  This matches the final expression obtained in the unraveling proof up to $O ({\color{red} \Delta t})$, thereby satisfying the stated condition.
\end{proof}

\subsubsection{Quantum $\tau$-leaping with quantum jump}
\label{sec:tau-jump}

To derive the unraveling for superposition states, we first calculate the unraveling for the text basis states $\ket{\m}$, then derive the update rules for the unraveling applicable to general superposition states. 

\begin{mytheorem}{$\tau$-leaping without Text Superposition}{wo-super}
  The action of $\tilde{U}_{t - \Delta t | t}^\text{$\tau$-leap}$ on $\ket{\m}$ with $\m \in \vocab^\NumToken$ is realized by the following steps.

  \begin{description}
  \item [Step 1 (random sampling)] The $\tau$-leaping unraveling is constructed by token-wise stochastic variables $\omega_p \in \Omega = \{\emptyset\}\cup \{(m', n')\}$, where
    \begin{align*}
      \omega_p
      =
      \begin{cases}
        \emptyset 
        & 
        \text{with prob. } 
        1 - 
        \underbrace{
          \sum_{m', n'} 
            \tilde{Q}^\text{$(p)$tok}(n', \m)
        }_{
          -
          \sum_{m'}
            \tilde{Q}^\text{$(p)$tok}(m_p, \m)
        }
            \sigma(t)
            1_{m' m_p}
          \Delta t
        \\
        (m_p', n_p')
        & 
        \text{with prob. } 
        \tilde{Q}^\text{$(p)$tok}(n_p', \m)
        \sigma(t)
        1_{m_p' m_p}
        \Delta t
      \end{cases}
    \end{align*}
    is the probability of a jump occurring at position $p$ to token $n_p'$ in $\ket{\m}$.
  \item [Step 2 (state determination)] Under this situation, we define token-wise vector:
    \begin{align*}
      \ket{m_{p}^\text{updated}}
      &=
      \begin{cases}
        \ket{m_p} & \text{if } \omega_p=\emptyset
        \\
        \ket{n_p'} & \text{if } \omega_p=(m_p', n_p')
      \end{cases}
      .
    \end{align*}
    The realization of $\tau$-leaping unraveling $\tilde{U}_{t - \Delta t | t}^\text{$\tau$-leap} \ket{\m}$ is then given by:
    \begin{align}
      \ket{\m^\text{updated}}
      &=
      \bigotimes_{p=1}^N
        \ket{m_{p}^\text{updated}}
    \end{align}
\end{description}
  
\end{mytheorem}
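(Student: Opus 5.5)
The plan is to evaluate the operator in \cref{thm:tau-qjump} directly on a basis state, $\ket{\bpsi_t}=\ket{\m}$, and to use the projector structure in the sum over $\k$. Every token-level factor in $\tilde{U}^{\text{$\tau$-leap}}_{t-\Delta t|t}$ carries the projectors $\ket{k_q}\bra{k_q}$ at the positions $q\neq p$. The factor at position $p$, which is $\ket{k_p}\bra{k_p}$, $\tilde{H}^{(p)\text{tok}}_t(\k)\propto\ket{k_p}\bra{k_p}$ (\cref{thm:H_eff_rev}), or $\tilde{L}^{(p)\text{tok}}_{m,n,t}(\k)\propto\ket{n}\bra{k_p}$, ends in the bra $\bra{k_p}$. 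Acting on $\ket{\m}$ therefore forces $\k=\m$, and the $\k$-sum collapses to the single term $\k=\m$. The state then becomes an exact tensor product $\bigotimes_p A_p\ket{m_p}$, where $A_p$ is the token-level factor at position $p$ evaluated at $\k=\m$.

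Next I would compute each factor separately.
\begin{itemize}
\item If $\Delta N^{m',n'|p}_t=0$ for all $(m',n')$, then $A_p\ket{m_p}=\bigl(1-\tfrac12\sigma(t)\sum_{m',n'}\tilde{Q}^{(p)\text{tok}}(n',\m)1_{m'm_p}\Delta t\bigr)\ket{m_p}$. By \cref{thm:H-lem} this is a positive scalar multiple of $\ket{m_p}$.
\item If exactly one $\Delta N^{m'_p,n'_p|p}_t=1$, then $A_p\ket{m_p}=\sqrt{\sigma(t)\tilde{Q}^{(p)\text{tok}}(n'_p,\m)}\,1_{m'_pm_p}\ket{n'_p}$ by \cref{thm:tilde_L-appdx}. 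This is also a positive multiple of a basis ket.
\end{itemize}
After dividing by the numerator norm, all scalars disappear, and the normalized state is $\bigotimes_p\ket{m_p^{\text{updated}}}$, as claimed in Step 2.

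For Step 1, I would compute the jump probabilities from \cref{thm:tau-qjump} with $\ket{\bpsi_t}=\ket{\m}$. By \cref{thm:prod}, $\braket{\tilde{L}^{(p,m',n')\dagger}_t\tilde{L}^{(p,m',n')}_t}_{\m}=\sigma(t)\tilde{Q}^{(p)\text{tok}}(n',\m)1_{m'm_p}$. The no-jump probability at position $p$ is one minus the sum of these over $(m',n')$. Applying \cref{eq:s-sum} rewrites that sum as $-\sum_{m'}\tilde{Q}^{(p)\text{tok}}(m_p,\m)\sigma(t)1_{m'm_p}$, which is the underbraced identity in the statement. The processes at different positions are sampled independently, which gives the token-wise variables $\omega_p$.

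The main obstacle is the bookkeeping of the $n'=m_p$ term. The sum over $n$ in $\tilde{H}^{(p)\text{tok}}_t$ and in the jump channels has to be read as excluding $n=k_p$, in keeping with \cref{eq:mLk-appdx} ($m\neq n$). Only with that convention is the identity from \cref{eq:s-sum} an exact equality rather than off by a diagonal term. I would state this convention explicitly and then verify the underbrace. The remaining argument is the collapse of the $\k$-sum, which is straightforward.
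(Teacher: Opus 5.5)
Your proposal is correct and follows essentially the same route as the paper. The trailing $\bra{k_p}$ in every token-level factor collapses the $\k$-sum to $\k=\m$; \cref{thm:H-lem} and \cref{thm:tilde_L-appdx} then show each factor is a positive multiple of $\ket{m_p}$ or $\ket{n_p'}$, so normalization removes all scalars; and the probabilities follow from \cref{thm:prod} together with \cref{eq:s-sum}. As in the paper, your jump branch drops the surviving $O(\Delta t)$ term $-i\sigma(t)\tilde{H}^{(p)\text{tok}}_t(\m)\Delta t$ (the rule $\Delta t\,\Delta N\approx 0$), and your explicit convention $m\neq n$ in the channel sums is a useful clarification that the paper leaves implicit.
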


\begin{proof}
  First, calculate the probability of a jump occurring at token position $p$ by using \cref{thm:tilde_L-appdx} and \cref{thm:prod}:
  \begin{align*}
      \text{Prob}\Big(
                  \exists^1 (m_p', n_p'), 
                {\color{blue} \Delta N{_t^{m_p', n_p'|p}} = 1 } 
                \Big)
    &=
      \bra{\m}
        \underbrace{
          \tilde{L}{_t^{\mu=(p, m_p', n_p')}}^\dag 
          \tilde{L}{_t^{\mu=(p, m_p', n_p')}}
        }_{
          \sum_{\k} 
            \tilde{Q}^\text{$(p)$tok}(n_p', \k)
            \sigma(t)
            1_{m_p' k_p}
            \ket{\k}\bra{\k}
        }
      \ket{\m}
      \Delta t
    \\
    &=
      \tilde{Q}^\text{$(p)$tok}(n_p', \m)
      \sigma(t)
      1_{m_p' m_p}
      \Delta t
    .
  \end{align*}
  This implies that only jumps from the $p$-th component of $\ket{\m}$ can occur.
  In addition, we have 
  \begin{align*}
    \text{Prob}\Big(
                  \forall (m', n'), 
                {\color{blue} \Delta N{_t^{m', n'|p}} = 0 } 
                \Big)
    &=
    1 - 
    \underbrace{
      \sum_{m', n'} 
        \tilde{Q}^\text{$(p)$tok}(n', \m)
    }_{
      -
      \sum_{m'}
        \tilde{Q}^\text{$(p)$tok}(m_p, \m)
    }
        \sigma(t)
        1_{m' m_p}
      \Delta t
  \end{align*}
  Excluding the norm division component, the following represents the core computation process during $\tau$-leaping:
  \begin{align*}
    &
    \tilde{U}_{t - \Delta t | t}^\text{$\tau$-leap}
    \ket{\m} 
    \\
    &\propto
      \sum_{\k}
          \bigotimes_{p=1}^N
            \Big(
              \ket{k_p}\bra{k_p}
              - i \sigma(t)\tilde{H}_{t}^\text{$(p)$tok}(\k) \Delta t
              +
              \sum_{m, n} 
                (
                  \sqrt{\sigma(t)}
                  \tilde{L}{_{m, n, t}^\text{$(p)$tok}}(\k)
                  -
                  \ket{k_p}\bra{k_p}
                )
                \Delta N_t^{m, n|p}
            \Big) 
      \ket{m_p}
    \\
    &=
    {\small
      \sum_{\k}
          \bigotimes_{p=1}^N
            \left\{
              \begin{array}{cl}
                \Big(
                  \ket{k_p}\bra{k_p} - i \sigma(t)\tilde{H}{_t^\text{$(p)$tok}}(\k) {\Delta t}
                \Big)
                \ket{m_p}
                &
                \text{w/ Pr}
                \Big(
                  \forall (m', n'), {\color{blue} \Delta N{_t^{m', n'|p}}=0}
                \Big)
                \\
                \text{or}
                \\
                \sqrt{\sigma(t)}
                \tilde{L}{_{m_p',n_p', t}^\text{$(p)$tok}}(\k) 
                \ket{m_p}
                &
                \text{w/ Pr}\Big(
                            \exists^1 (m_p', n_p'), 
                          {\color{blue} \Delta N{_t^{m_p', n_p'|p}} = 1 } 
                          \Big)
              \end{array}
            \right\}
            }
  \end{align*}
  Regardless of which option is chosen at each token position, the following transformations are possible:
  \begin{align*}
    &
    {\small
      \sum_{\k \in \vocab^\NumToken}
          \bigotimes_{p=1}^N
            \left\{
              \begin{array}{cl}
                \sqrt{
                  1
                  +
                  \sum_{m'}
                      \tilde{Q}^\text{$(p)$tok}(k_p, \k)
                      \sigma(t)
                      1_{m' k_p}
                      \Delta t
                }
                \ket{k_p}
                {\color{red}
                1_{k_p m_p}
                }
                \\
                \text{or}
                \\
                \sqrt{
                  \tilde{Q}^\text{$(p)$tok}(n_p', \k)
                  \sigma(t)
                }
                1_{m_p' k_p}
                \ket{n_p'}
                  {\color{red}
                  1_{k_p m_p}
                  }
              \end{array}
            \right\}
    }
    \\
    &=
          \bigotimes_{p=1}^N
            \left\{
              \begin{array}{cl}
                \sqrt{
                  1
                  +
                  \sum_{m'}
                      \tilde{Q}^\text{$(p)$tok}(m_p, \m)
                      \sigma(t)
                      1_{m' m_p}
                      \Delta t
                }
                \ket{m_p}
                \\
                \text{or}
                \\
                \sqrt{
                  \tilde{Q}^\text{$(p)$tok}(n_p', \m)
                  \sigma(t)
                }
                1_{m_p' m_p}
                \ket{n_p'}
              \end{array}
            \right\}
  \end{align*}
  In the case under consideration, since the product of token-level coefficients only scales the sentence basis, the norm clearly becomes the scaling itself, and we get 
  \begin{align*}
    \tilde{U}_{t - \Delta t | t}^\text{$\tau$-leap}
    \ket{\m}
    &=
      {
        \bigotimes_{p=1}^N
          \left\{
            \begin{array}{cl}
              \ket{m_p}
              &
              \text{with Prob}
                \Big(
                  \forall (m', n'), {\color{blue} \Delta N{_t^{m', n'|p}}=0}
                \Big)
              \\
              \text{or}
              \\         
                \ket{n_p'}
              &
                \text{with Prob}\Big(
                            \exists^1 (m_p', n_p'), 
                          {\color{blue} \Delta N{_t^{m_p', n_p'|p}} = 1 } 
                          \Big)
            \end{array}
          \right\}
      }
  ,
  \end{align*}
\end{proof}

Note that this is equivalent to $\tau$-leaping in conventional diffusion language models defined in \cref{eq:tau-leaping}.
Building on this foundation, the following proposition asserts what happens when $\tau$-leaping unraveling is applied to more general quantum states. 

\begin{mytheorem}{$\tau$-leaping with Text Superposition}{w-super}
  Let the superposition state
  \begin{align}
    \ket{\bpsi}
    &=
    \sum_{i=1}^{M}
      \sqrt{p(i)}
      \ket{\m(i)}
    ,
    \label{eq:superposition}
  \end{align}
  where $\forall i, p(i) > 0$ and $i \neq j \Rightarrow    \m(i) \neq \m(j), \    1=    \sum_{i=1}^{M}      p(i)$.
  The action of $\tilde{U}_{t - \Delta t|t}^\text{$\tau$-leap}$ on $\ket{\bpsi}$ under the same condition in \cref{thm:wo-super} is realized by the following steps.

  \begin{description}
  \item [Step 1 (random sampling)] Sampling token-wise stochastic variables $\omega_p \in \Omega = \{\emptyset\}\cup \{(m', n')\}$:
    \begin{align*}
      \omega_p
      =
      \begin{cases}
        \emptyset 
        & 
        \text{with prob. } 
        1 
        +
        \sum_{m'}
        \braket{
            \tilde{Q}^\text{$(p)$tok}(m(i)_p, \m(i))
            \sigma(t)
            1_{m' m(i)_p}
          }_{i \sim p}
          \Delta t
        \\
        (m_p', n_p')
        & 
        \text{with prob. }
        \braket{
          \tilde{Q}^\text{$(p)$tok}(n_p', \m(i))
          \sigma(t)
          1_{m_p' m(i)_p}
        }_{i \sim p}
          \Delta t
      \end{cases}
      ,
    \end{align*}
    where $\braket{          f(i)        }_{i \sim p}      =      \sum_{i=1}^{M}        p(i)        f(i)$ is the $p(i)$-weighted sum of $f(i)$.
  \item [Step 2 (state determination)] Under this situation, we define token-wise vector and coefficient for each sentence labeled by $i$:
    \begin{align*}
      \ket{m(i)_{p}^\text{updated}}
      &=
      \begin{cases}
        \ket{m(i)_p} & \text{if } \omega_p=\emptyset
        \\
        \ket{n_p'} & \text{if } \omega_p=(m_p', n_p')
      \end{cases}
      ,
      \\
      c(i)_p
      &=
      \begin{cases}
        1 
        +
        \sum_{m'}
          \tilde{Q}^\text{$(p)$tok}(m(i)_p, \m(i))
          \sigma(t)
          1_{m' m(i)_p}\Delta t
        & \text{if } 
        \omega_p=\emptyset \\
        \tilde{Q}^\text{$(p)$tok}(n_p', \m(i))
          \sigma(t)
          1_{m_p' m(i)_p}
        & \text{if } 
        \omega_p=(m_p', n_p')
      \end{cases}
    \end{align*}
    The realization of $\tau$-leaping unraveling $\tilde{U}_{t - \Delta t | t}^\text{$\tau$-leap} \ket{\bpsi}$ is then given by:
    \begin{align*}
      \ket{\bpsi^\text{updated}}
      &=
      \sum_{i=1}^{M} \sqrt{{p}(i)^\text{updated}} \ket{\m(i)^\text{updated}}
      ,
      \quad
      \\
      &\text{where}
      \quad
      {p}(i)^\text{updated}
      =
      \frac{
        p(i) \prod_{p=1}^\NumToken c(i)_p
      }{
        \sum_{j=1}^M
          p(j) \prod_{p=1}^\NumToken c(j)_p
      }
    \end{align*}
\end{description}

\end{mytheorem}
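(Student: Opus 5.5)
The plan is to exploit the fact that, once a realization of the token-wise jump variables $\Delta N_t^{m,n|p}$ (equivalently $\omega_p$) is fixed, the numerator of $\tilde{U}_{t-\Delta t|t}^\text{$\tau$-leap}$ in \cref{thm:tau-qjump} is a \emph{linear} operator. Acting on \cref{eq:superposition} term by term then reduces everything to the basis-state computation already done in \cref{thm:wo-super}. Only two things need separate treatment: the sampling probabilities of Step 1, which depend on $\ket{\bpsi}$ through expectation values, and the final normalization.

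\textbf{Step 1 (probabilities).} By \cref{thm:tau-qjump}, the probability of the event $\Delta N_t^{m',n'|p}=1$ is $\braket{\tilde{L}{_t^{\mu=(p,m',n')}}^\dag \tilde{L}{_t^{\mu=(p,m',n')}}}_{\bpsi}\Delta t$. Using \cref{thm:tilde_L-appdx} and \cref{thm:prod}, this operator equals
\begin{align*}
  \sum_{\k}\tilde{Q}^\text{$(p)$tok}(n',\k)\,\sigma(t)\,1_{m'k_p}\ket{\k}\bra{\k},
\end{align*}
which is diagonal in the sentence basis. Because it is diagonal, the cross terms $\bra{\m(i)}\cdots\ket{\m(j)}$ with $i\neq j$ vanish. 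The expectation is therefore $\sum_i p(i)\,\tilde{Q}^\text{$(p)$tok}(n',\m(i))\,\sigma(t)\,1_{m'm(i)_p}$, which is the weighted average $\braket{\cdot}_{i\sim p}$. The no-jump probability then follows from \cref{eq:s-sum}, exactly as in \cref{thm:wo-super}.

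\textbf{Step 2 (state).} For fixed $\omega_p$, apply the unnormalized operator to each $\ket{\m(i)}$. The computation in the proof of \cref{thm:wo-super}, with \cref{thm:H-lem} handling the no-jump factor, gives
\begin{align*}
  \sqrt{p(i)}\prod_{p}\sqrt{c(i)_p}\;\ket{\m(i)^\text{updated}}.
\end{align*}
The no-jump factor contributes $\sqrt{1+\sum_{m'}\tilde{Q}^\text{$(p)$tok}(m(i)_p,\m(i))\,\sigma(t)\,1_{m'm(i)_p}\Delta t}$ up to $O(\Delta t^2)$. A jump contributes $\sqrt{\tilde{Q}^\text{$(p)$tok}(n_p',\m(i))\,\sigma(t)}\,1_{m_p'm(i)_p}$. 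Since the indicator is idempotent, both square roots are $\sqrt{c(i)_p}$. All coefficients are real and nonnegative, so no phases arise. Summing over $i$ by linearity yields an unnormalized vector whose squared amplitudes are $p(i)\prod_p c(i)_p$.

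\textbf{Main obstacle (normalization).} The squared norm equals $\sum_j p(j)\prod_p c(j)_p$ only if the updated basis vectors $\ket{\m(i)^\text{updated}}$ attached to surviving components ($c(i)_p\neq0$) remain pairwise distinct, so that no interference occurs. I would argue position by position. If $\omega_p=\emptyset$, the $p$-th token is unchanged for every $i$. If $\omega_p=(m_p',n_p')$, every surviving component has $m(i)_p=m_p'$ and is sent to $n_p'$, while components with $m(i)_p\neq m_p'$ get $c(i)_p=0$ and drop out. In either case, two surviving components that differed at $p$ before the step still differ there, because both had the same token $m_p'$ at any jumping position. Hence distinctness of the $\m(i)$ is preserved, the norm is the stated sum, and dividing by its square root gives the formula for $p(i)^\text{updated}$. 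One edge case remains: if every component is annihilated, the denominator vanishes. This happens only on events whose probability is zero by Step 1, so it can be excluded.
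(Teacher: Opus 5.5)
Your proposal is correct and follows essentially the same route as the paper, which likewise evaluates the Step~1 probabilities with the diagonal operator $\sum_{\k}\tilde{Q}^\text{$(p)$tok}(n',\k)\sigma(t)1_{m'k_p}\ket{\k}\bra{\k}$ and applies the basis-state computation of \cref{thm:wo-super} to each component. For the norm, the paper observes that every cross term carries $1_{m(i)_p m(i')_p}$ at each position (at jumping positions via $1_{m_p' m(i)_p}1_{m_p' m(i')_p}=1_{m(i)_p m(i')_p}1_{m_p' m(i)_p}$), hence $1_{\m(i)\m(i')}$; this is the algebraic form of your argument that distinctness is preserved.

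One correction: your claim that the event in which every component is annihilated has probability zero is wrong. The $\omega_p$ are drawn independently across positions, so two simultaneous jumps can have sources matching disjoint sets of components, e.g.\ $m_p'=m(1)_p\neq m(2)_p$ and $m_q'=m(2)_q\neq m(1)_q$. Such an event has probability of order $\Delta t^2$, which is in general positive. It can only be discarded as a higher-order effect, consistent with the first-order accuracy of \cref{thm:tau-qjump}; the paper's proof does not address this case at all.
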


\begin{proof}
  
  First, calculate the probability of a jump occurring at token position $p$.
  \begin{align*}
    &
      \text{Prob}\Big(
                  \exists^1 (m_p', n_p'), 
                {\color{blue} \Delta N{_t^{m_p', n_p'|p}} = 1 } 
                \Big)
    =
      \bra{\bpsi}
        \underbrace{
          \tilde{L}{_t^{\mu=(p, m_p', n_p')}}^\dag 
          \tilde{L}{_t^{\mu=(p, m_p', n_p')}}
        }_{
          \sum_{\k}
            \tilde{Q}^\text{$(p)$tok}(n_p', \k)
              \sigma(t)
              1_{m_p' k_p}
            \ket{\k}
            \bra{\k}
        }
      \ket{\bpsi}
      \Delta t
    \\
    &=
      \sum_{i,j = 1}^{M}
      \sqrt{p(i)p(j)}
        \sum_{\k}
          \tilde{Q}^\text{$(p)$tok}(n_p', \k)
            \sigma(t)
            1_{m_p' k_p}
            \underbrace{
              \braket{\m(i)|\k}
              \braket{\k|\m(j)}
            }_{
              1_{\m(i)\m(j)}
              1_{\m(i)\k}
            }
      \Delta t
  \end{align*}
  Here, we are considering the case where $i \neq j \Rightarrow \m(i) \neq \m(j)$, taking the contrapositive of this proposition yields $\m(i) = \m(j) \Rightarrow i = j$, therefore
  \begin{align*}
    &=
    \underbrace{
      \sum_{i=1}^{M}
      p(i)
      \tilde{Q}^\text{$(p)$tok}(n_p', \m(i))
      \sigma(t)
      1_{m_p' m(i)_p}
    }_{
      \braket{
        \tilde{Q}^\text{$(p)$tok}(n_p', \m(i))
        \sigma(t)
        1_{m_p' m(i)_p}
      }_{i \sim p}
    }
      \Delta t
    .
  \end{align*}
  This implies that in the $i$-th sentence basis $\ket{\m(i)}$, only the $p$-th token will jump.
  In addition, we have 
  \begin{align*}
    \text{Prob}\Big(
                  \forall (m', n'), 
                {\color{blue} \Delta N{_t^{m', n'|p}} = 0 } 
                \Big)
    &=
    1 - \sum_{m' n'} 
        \text{Prob}\Big(
                      \exists^1 (m', n'), 
                    {\color{blue} \Delta N{_t^{m', n'|p}} = 1 } 
                    \Big)
    \\
    &=
    1 - 
    \sum_{m' n'}
    \braket{
        \tilde{Q}^\text{$(p)$tok}(n', \m(i))
        \sigma(t)
        1_{m' m(i)_p}
      }_{i \sim p}
      \Delta t
    \\
    &=
    1 
    +
    \sum_{m'}
    \braket{
        \tilde{Q}^\text{$(p)$tok}(m(i)_p, \m(i))
        \sigma(t)
        1_{m' m(i)_p}
      }_{i \sim p}
      \Delta t
  \end{align*}
  Excluding the norm division component, the following represents the core computation process during $\tau$-leaping:
  \begin{align*}
    &
    \tilde{U}_{t - \Delta t | t}^\text{$\tau$-leap}
    \ket{\bpsi} 
    \propto
    {
    \small
      \sum_{i=1}^{M}
      \sqrt{p(i)}
      \sum_{\k \in \vocab^\NumToken}
          \bigotimes_{p=1}^N
            \left\{
              \begin{array}{cl}
                \ket{k_p}\braket{k_p|m(i)_p} - i \sigma(t)\tilde{H}{_t^\text{$(p)$tok}}(\k) \ket{m(i)_p} {\Delta t}
                \\
                \text{w/ Pr}
                      \Big(
                        \forall (m', n'), {\color{blue} \Delta N{_t^{m', n'|p}}=0}
                      \Big)
                \\
                \text{or}
                \\
                \sqrt{\sigma(t)}
                \tilde{L}{_{m_p', n_p', t}^\text{$(p)$tok}}(\k) 
                \ket{m(i)_p}
                \\
                \text{w/ Pr}\Big(
                            \exists^1 (m_p', n_p'), 
                          {\color{blue} \Delta N{_t^{m_p', n_p'|p}} = 1 } 
                          \Big)
              \end{array}
            \right\}
    }
    .
  \end{align*}
  Here, the operator's action on each $\ket{\m(i)}$ follows the same pattern as in \cref{thm:wo-super} and the corresponding cases (though with different probabilities of occurrence), yielding the following results.
  \begin{align*}
    &\tilde{U}_{t - \Delta t | t}^\text{$\tau$-leap}
    \ket{\bpsi}
    \propto
    {
        \sum_{i=1}^{M}
          \sqrt{p(i)}
          \bigotimes_{p=1}^N
            \left\{
              \begin{array}{cl}
                 \sqrt{
                    1
                    +
                    \sum_{m'}
                        \tilde{Q}^\text{$(p)$tok}(m(i)_p, \m(i))
                        \sigma(t)
                        1_{m' m(i)_p}
                        \Delta t
                  }
                  \ket{m(i)_p}
                \\
                \text{or}
                \\
                  \sqrt{
                    \tilde{Q}^\text{$(p)$tok}(n_p', \m(i))
                    \sigma(t)
                  }
                  1_{m_p' m(i)_p}
                  \ket{n_p'}
              \end{array}
            \right\}
    }
    .
  \end{align*}
  To obtain the full form of the time evolution, we need to divide by the norm of each operator's action.
  The norm can be calculated as follows:
  \begin{align*}
    &
      \left\|
        \sum_{i=1}^{M}
          \sqrt{p(i)}
          \bigotimes_{p=1}^N
            \left\{
              \begin{array}{cl}
                 \sqrt{
                    1
                    +
                    \sum_{m'}
                        \tilde{Q}^\text{$(p)$tok}(m(i)_p, \m(i))
                        \sigma(t)
                        1_{m' m(i)_p}
                        \Delta t
                  }
                  \ket{m(i)_p}
                \\
                \text{or}
                \\
                  \sqrt{
                    \tilde{Q}^\text{$(p)$tok}(n_p', \m(i))
                    \sigma(t)
                  }
                  1_{m_p' m(i)_p}
                  \ket{n_p'}
              \end{array}
            \right\}
      \right\|^2
    \\
    &=
      \sum_{i, i'=1}^{M}
        \sqrt{p(i)p(i')}
        \\ & \quad
          \prod_{p=1}^N
            \left\{
              \begin{array}{c}
                \displaystyle
                  \sqrt{
                    \begin{array}{c}
                      1
                      +
                      \sum_{m'}
                          \tilde{Q}^\text{$(p)$tok}(m(i)_p, \m(i))
                          \sigma(t)
                          1_{m' m(i)_p}
                          \Delta t
                      \\
                      \cdot 
                      \\
                      1
                      +
                      \sum_{m'}
                          \tilde{Q}^\text{$(p)$tok}(m(i')_p, \m(i'))
                          \sigma(t)
                          1_{m' m(i)_p}
                          \Delta t
                    \end{array}
                    }
                  \underbrace{
                    \braket{m(i)_p|m(i')_p}
                  }_{
                    {\color{red}1_{m(i)_p m(i')_p}}
                  }
                \\
                \text{or}
                \\
                  \displaystyle
                  \sqrt{
                    \tilde{Q}^\text{$(p)$tok}(n_p', \m(i))
                    \tilde{Q}^\text{$(p)$tok}(n_p', \m(i'))
                    \sigma(t)^2
                  }
                  \underbrace{
                    1_{m_p' m(i)_p}
                    1_{m_p' m(i')_p}
                  }_{
                    {\color{red}1_{m(i)_p m(i')_p}}
                    1_{m_p' m(i)_p}
                  }
                  \underbrace{
                    \braket{n_p'|n_p'}
                  }_{
                    1
                  }
              \end{array}
            \right\}
    \\
    &=
      \sum_{i, i'=1}^{M}
        \sqrt{p(i)p(i')}
        {\color{red} 1_{\m(i) \m(i')}}
          \prod_{p=1}^N
            \left\{
              \begin{array}{c}
                \displaystyle
                  \sqrt{
                    \begin{array}{c}
                      1
                      +
                      \sum_{m'}
                          \tilde{Q}^\text{$(p)$tok}(m(i)_p, \m(i))
                          \sigma(t)
                          1_{m' m(i)_p}
                          \Delta t
                      \\
                      \cdot 
                      \\
                      1
                      +
                      \sum_{m'}
                          \tilde{Q}^\text{$(p)$tok}(m(i')_p, \m(i'))
                          \sigma(t)
                          1_{m' m(i)_p}
                          \Delta t
                    \end{array}
                    }
                \\
                \text{or}
                \\
                  \displaystyle
                  \sqrt{
                    \tilde{Q}^\text{$(p)$tok}(n_p', \m(i))
                    \tilde{Q}^\text{$(p)$tok}(n_p', \m(i'))
                    \sigma(t)^2
                  }
                  1_{m_p' m(i)_p}
              \end{array}
            \right\}
  \end{align*}
  Here, we are considering the case where $i \neq i' \Rightarrow \m(i) \neq \m(i')$, taking the contrapositive of this proposition yields $\m(i) = \m(i') \Rightarrow i = i'$, therefore the norm is
  \begin{align*}
    &=
      \sum_{i=1}^{M}
        p(i)
          \prod_{p=1}^N
            \left\{
              \begin{array}{c}
                \displaystyle
                    1
                      +
                      \sum_{m'}
                          \tilde{Q}^\text{$(p)$tok}(m(i)_p, \m(i))
                          \sigma(t)
                          1_{m' m(i)_p}
                          \Delta t
                \\
                \text{or}
                \\
                  \tilde{Q}^\text{$(p)$tok}(n_p', \m(i))
                  \sigma(t)
                  1_{m_p' m(i)_p}
              \end{array}
            \right\}
    .
  \end{align*}
  All that remains is to divide the vector by the root of this value,
  putting everything together, this constitutes the proof of the claim.
\end{proof}
\section{Implementation Details}
\label{sec:implementation-details}

\subsection{$\tau$-leaping Unraveling}

\paragraph{State representation.}
We represent a text superposition as
\begin{align}
 \ket{\bpsi}=\sum_{i=1}^{M}\sqrt{p(i)}\ket{\m(i)},
 \qquad p(i)\geq 0,\qquad \sum_i p(i)=1.
\end{align}
The implementation stores integer token IDs in an array of shape $(B,M,\NumToken)$ and the corresponding weights in an array of shape $(B,M)$, where $B$ is the batch size. Thus, the state representation does not store a vocabulary-sized one-hot vector at every position.
The vocabulary dimension appears when evaluating model scores and transition rates.

\paragraph{Time discretization and transition sampling.}
Let $\Delta t>0$ denote a discrete step in the reverse-time evolution from $t$ to $t-\Delta t$. Following \cref{thm:w-super}, we label an off-diagonal reverse transition at position $p$ by $(m_p',n_p')$: $m_p'$ is the source token and $n_p'$ is the destination token.
Using the token-wise coefficient from \cref{eq:tdelta_p}, with the position dependence made explicit as in \cref{thm:w-super}, we approximate
\[
 \tilde Q^{\mathrm{(p)tok}}(n_p',\m(i))
 \approx Q^{\mathrm{tok}}(m(i)_p,n_p')
 \mathhighlight{
 s_\theta(\m(i),t)_{p\to n_p'}
 },
 \qquad n_p'\neq m(i)_p.
\]
Here $\mathhighlight{s_\theta}$ approximates the probability ratio, and the physical transition rate is $\sigma(t)\tilde Q^{\mathrm{(p)tok}}$.
At each position, the shared stochastic variable $\omega_p$ is sampled with
\begin{align}
 \Pr\bigl(\omega_p=(m_p',n_p')\bigr)
 &=\Delta t\,\sigma(t)\sum_i p(i)
 \tilde Q^{\mathrm{(p)tok}}(n_p',\m(i))
 1_{m_p' m(i)_p},\nonumber\\
 \Pr(\omega_p=\emptyset)
 &=1-\sum_{m_p'\neq n_p'}\Pr\bigl(\omega_p=(m_p',n_p')\bigr).
 \label{eq:impl-jump-mass}
\end{align}
The sum runs over allowed off-diagonal transitions.
The same sampled transition acts on every component: a jump replaces the source token $m_p'$ by $n_p'$, while components incompatible with that source have zero jump weight in the unregularized update.
A no-jump event leaves each component sequence unchanged.
Different positions can jump within the same discrete update.

\paragraph{Weight update.}
The coefficient associated with the sampled event is
\begin{align}
 c(i)_p=
 \begin{cases}
 \sigma(t)\tilde Q^{\mathrm{(p)tok}}(n_p',\m(i))
 1_{m_p' m(i)_p}, & \omega_p=(m_p',n_p'),\\
 1+\Delta t\,\sigma(t)
 \tilde Q^{\mathrm{(p)tok}}(m(i)_p,\m(i)), & \omega_p=\emptyset.
 \end{cases}
\end{align}
The diagonal coefficient is fixed by the zero-column-sum condition, $\tilde Q^{\mathrm{(p)tok}}(m(i)_p,\m(i)) =-\sum_{n\neq m(i)_p}\tilde Q^{\mathrm{(p)tok}}(n,\m(i))$.
For a jump event, the common factor $\Delta t$ cancels when normalizing the component weights, so the code omits it from $c(i)_p$.
The update is evaluated in the log domain:
\begin{align}
 \ell_i&=\log\max\{p(i),\delta\}
       +\beta\sum_p\log\max\{c(i)_p,\delta\},\\
 p(i)^{\mathrm{updated}}&=\frac{\exp(\ell_i)}{\sum_j\exp(\ell_j)},
 \qquad \delta=10^{-12}.
 \label{eq:impl-weight-update}
\end{align}
We use a numerically stable softmax for the last operation.
Setting $\beta=1$ gives the untempered update, apart from the numerical floor; $\beta<1$ softens the likelihood factors and slows concentration on a single component. This tempering modifies the untempered unraveling.
The floor also means that an exactly zero initial weight is not an exact exclusion in this implementation. For single-condition controls, we therefore use $M=1$, rather than setting one of two weights to zero.

\paragraph{Final output.}
After the discrete updates, a component is selected with probability $p(i)$ at $t\approx0$, yielding the sequence $\m(i)$ with additional denoising process optionally.
The selected component index and its weight can be recorded together with the generated sequence.

\paragraph{Transforming iLLaDA and to score estimators}

The model family of iLLaDA (\cite{Nie2026iLLaDA}) is not trained with score-based formulation, but as we proved in \cref{thm:ptos}, it is possible to transform them to score models.
iLLaDA models are masked diffusion models, and can be mapped to the score estimator through \cref{eq:p2s_mask}.

\subsection{Superposition of Prompts (SoP)}

Each component consists of a fixed prefix, a masked generation region, and an optional fixed suffix. 
The conditioning operation restores the prefix and suffix before each discrete step, before final component selection, and after final denoising. 
Model scores are evaluated on the complete conditioned sequences. 
Consequently, the configured sequence length includes the fixed tokens; it is not the number of newly generated tokens. 
Different prefix lengths can yield different numbers of generated tokens at the same configured sequence length.

The model is evaluated on the flattened $(B M,\NumToken)$ collection of sequences. The resulting rates are reshaped by component, combined using \cref{eq:impl-jump-mass}, and used for a shared transition draw.
Component weights are then updated using \cref{eq:impl-weight-update}.
The single-condition controls use the same procedure with $M=1$.
The fixed prefixes used in the experiment shown in this paper are summarized in \cref{tab:sop_prompts}.

\begin{table}[ht]
\centering
\caption{Detailed prompt pairs used for SoP experiments.}
\label{tab:sop_prompts}
\small 
\begin{tabular}{llp{8cm}} 
\hline
\textbf{Case Name} & \textbf{Index} & \textbf{Prompt Content} \\ \hline
\multirow{2}{*}{%
  \makecell[l]{%
    \texttt{story}\\
    \texttt{\_\_forest\_snow}%
  }%
}
& $i=1$ (forest) & ``Write a short story. In the forest, the tall trees are everywhere. Green leaves cover the ground and the air is warm. Small animals run through the grass. There are many plants and flowers in this place. The woods are full of life and nature.'' \\
 & $i=2$ (snow) & ``Write a short story. In the deep snow, everything is white and cold. The ground is covered in thick ice and the wind is freezing. No plants are growing here. It is a very quiet and winter world with high mountains of snow.'' \\ \hline
\multirow{2}{*}{%
  \makecell[l]{%
    \texttt{key}\\
    \texttt{\_\_forest\_city}%
  }%
} & $i=1$ (forest) & ``Write a short story about a traveler searching for a lost key. The story takes place in a dense forest. The traveler searches among trees, follows animal tracks, and hears birds overhead.'' \\ 
 & $i=2$ (city) & ``Write a short story about a traveler searching for a lost key. The story takes place in a crowded city. The traveler searches among buildings, follows street signs, and hears traffic nearby.'' \\ \hline
\multirow{2}{*}{%
  \makecell[l]{%
    \texttt{package}\\
    \texttt{\_\_past\_future}%
  }%
} & $i=1$ (past) & ``Write a short story about a messenger delivering a mysterious package. The story takes place in a medieval kingdom. The messenger travels by horse, carries a sealed letter, and seeks a castle.'' \\ 
 & $i=2$ (future) & ``Write a short story about a messenger delivering a mysterious package. The story takes place in an interstellar civilization. The messenger travels by spacecraft, carries an encrypted message, and seeks a space station.'' \\ \hline
\end{tabular}
\end{table}

\subsection{Superposition of Thoughts (SoT)}

The thought is supplied and held fixed; the model generates the continuation after the closing tag. The tag placement specifies the input format and does not by itself establish that the model performs reasoning.
The fixed prefixes used in the experiment shown in this paper are summarized in \cref{tab:sot_prompts}.

\begin{table}[ht]
\centering
\caption{Detailed prompt pairs used for SoT experiments.The specific tags, like \textbf{\texttt{<chat>}} and \textbf{\texttt{<think>}}, used are replaced depending on the model.}
\label{tab:sot_prompts}
\small 
\begin{tabularx}{\textwidth}{
  p{0.2\textwidth}
  p{0.15\textwidth}
  X
}
\hline
\textbf{Case Name} & \textbf{Index} & \textbf{Prompt Content} \\ \hline
\multirow{2}{*}{%
  \makecell[l]{%
    \texttt{story\_reasoning}\\
    \texttt{\_\_forest\_snow}%
  }%
}
& $i=1$ (forest) & ``\textbf{\texttt{<chat>}}Write a short story.\textbf{\texttt{</chat>}} \textbf{\texttt{<think>}}The narrative must take place in a dense, vibrant summer forest. I need to emphasize overflowing life, humid warm air, lush green canopy, and the constant sounds of rustling leaves and active wildlife. The tone should feel alive, earthy, and teeming with natural energy.\textbf{\texttt{</think>}}'' \\
 & $i=2$ (snow) & ``\textbf{\texttt{<chat>}}Write a short story.\textbf{\texttt{</chat>}} \textbf{\texttt{<think>}}The narrative must take place in a desolate, frozen winter tundra. I need to emphasize absolute stillness, sub-zero cold, blinding white snowstorms, and a landscape completely devoid of plant life. The tone should feel silent, harsh, and starkly isolated.\textbf{\texttt{</think>}}'' \\ \hline
\end{tabularx}
\end{table}

\subsection{Quantum-Inspired Remasking (QIR)}
\label{sec:qir_app}

\paragraph{Reciprocal-score evaluation.}

Suppose the current sequence $\m$ is 
\begin{align}
      \m = [m_1, m_2, \cdots, m_{p-1}, i, m_{p+1}, \cdots, m_\NumToken]
      ,
      \label{eq:m_i}
\end{align}
where $i \neq 0$, i.e., $i$ at position $p$ is an unmasked token.
The remasking $i \to 0$ probability caused by the perturbation in \cref{eq:remask_jump} can be calculated by \cref{thm:wo-super} with corresponding jump operators, i.e., the probability of case with $\omega_p = (i, 0)$:
\begin{align}
      \tilde{Q}^{\mathrm{(p)tok}}(0, \m)
      \sigma(t)
      \Delta t
      &=
      \gamma
            \frac{
                  p_t(\m_{p \to 0})
            }{
                  p_t(\m_{p \to i})
            }
      \sigma(t)
      \Delta t
      =
      \gamma
      \left(
      \mathhighlight{
            \frac{
                  p_t(\m_{p \to i})
            }{
                  p_t(\m_{p \to 0})
            }
      }
      \right)^{-1}
      \sigma(t)
      \Delta t
      .
\end{align}
Writing the corresponding model score as $\mathhighlight{s_\theta(\m_{p\to 0},t)_{p\to i}}$, the remasking rate used by the masked-diffusion implementation is
\begin{align}
 \gamma
 \bigl[
      \mathhighlight{s_\theta(\m_{p\to 0},t)_{p\to i}}
\bigr]^{-1}
\sigma(t)
\Delta t.
\label{eq:remask_rate}
\end{align}
This implements the reciprocal-score factor in \cref{eq:remask_score}.
The calculation steps with given $\m$ as in \cref{eq:m_i} are\footnote{
      It would be possible to approximate this step by making use of ``shadow-tokens'' and appropriate attention masks as done in \cite{hong2026wide}, which enjoys speedup of the inference process.
}
\begin{align}
      \begin{array}{ll}
            1. \text{replacing unmask token $i$ at position $p$ to mask token $0$: } \m \to \m_{p \to 0},
            \\
            2. \text{evaluating the output of the model: }
            \mathhighlight{s_\theta(\m_{p\to 0},t)_{p\to i}},
            \\
            3. \text{taking its inverse.}
      \end{array}
      \label{eq:cal_steps}
\end{align} 
The inverse is taken after converting the model output to a diffusion score, rather than directly inverting a raw logit.

\paragraph{Candidate selection.}
Suppose the current sequence $\m$ is characterized as
\begin{align}
      \m = [\underbrace{\cdots}_{0 (=\texttt{[m]})}, i_{p_1}, \underbrace{\cdots}_{0}, i_{p_2}, \underbrace{\cdots}_{0}, \dots, \underbrace{\cdots}_{0}, i_{p_{N_\text{unmask}}}, \underbrace{\cdots}_{0}]
      ,
\end{align}
then, there are $N_\text{unmask}$ possible remasking candidates.
If we simply apply the $\tau$-leaping unraveling with the perturbed jump operators defined in \cref{eq:remask_jump} with all $i \neq 0$ with \cref{eq:remask_rate}, we need to compute 
\begin{align}
      \mathhighlight{s_\theta(\m_{p_1\to 0},t)_{p\to i_1}},
      \mathhighlight{s_\theta(\m_{p_2\to 0},t)_{p\to i_2}},
      \cdots,
      \mathhighlight{s_\theta(\m_{p_{N_\text{unmask}}\to 0},t)_{p\to i_{p_{N_\text{unmask}}}}},
\end{align}
simultaneously.
If we simply apply the protocol in (\ref{eq:cal_steps}) to each component, it requires a massive amount of computation time to execute.
To avoid it, we first sample $K$ unmask tokens uniformly, and apply the protocol in (\ref{eq:cal_steps}) to each sampled token. 
If no positions are eligible, the remasking update is skipped.

\paragraph{Timestep-filtered processing.}
To systematically analyze the impact of remasking timing during decreasing $t: 1 \to 0$, we categorize the processing window based on the timestep threshold $t \leq \tau$:
\begin{itemize}
    \item \texttt{start} ($t \leq 0.9$): Applies remasking almost throughout the entire denoising trajectory.
    \item \texttt{middle} ($t \leq 0.5$): Restricts remasking to the later half of the generation process.
    \item \texttt{end} ($t \leq 0.1$): Activates remasking solely at the final refining steps.
\end{itemize}

\paragraph{Finite-step probabilities.}
The categorical sampling helper clips the no-jump mass at zero if the sum of destination masses exceeds one. 
The resulting draw then uses relative destination masses, rather than the nominal linear probabilities in \cref{eq:impl-jump-mass}.
For the additional iLLaDA-Base remasking experiment, we explicitly use $\min\{1,\sigma(t)\tilde Q^{\mathrm{(p)tok}}(0,\m)\Delta t\}$, so an excessive remasking mass causes a certain remask, and record how often this occurs.
This is a finite-step saturation rule, not an exact continuous-time transition probability. Non-finite rates are treated as errors.
\section{Supplementary Superposition Experiments}
\label{sec:superposition_experiments}

This appendix supplements the text-superposition experiments in \cref{sec:experiment_superposition}. We first describe the setup and numerical cosine similarities for the forest/snow experiment shown in \cref{fig:forest_snow_similarity}. We then present the conditioning prompts and similarity-evaluation protocol for supplementary thought-superposition experiments, followed by an analysis of component-weight dynamics in masked and uniform diffusion. The latter examines the effect of the coefficient-update exponent $\beta$ introduced in the main text.

\subsection{SoP experiment details}
The experiment in \cref{fig:forest_snow_similarity} compares generation conditioned on a forest prefix alone, a snow prefix alone, and a superposition of the two. It uses the pretrained \texttt{louaaron/sedd-small} checkpoint with stochastic unraveling. The saved configuration specifies sequences of 256 tokens, 128 sampling steps. The two conditions are entries $i=1$ and $i=2$ of \texttt{story\_\_forest\_snow} in \cref{tab:sop_prompts}. The initial component weights are equal, $p(1)=p(2)=1/2$. Remasking is disabled, and the coefficient-update exponent is $\beta=0.01$. The configured sentence-embedding model is \texttt{sentence-transformers/all-MiniLM-L6-v2}.

\paragraph{Statistics.}
We generate 512 texts for each single-condition baseline and 512 superposition outputs. The latter comprise 185 outputs with forest selected and 327 with snow selected. \Cref{tab:align} reports cosine similarities with remasking disabled. The uncertainty values are reported as mean $\pm$ SEM (standard error of the mean).

\begin{table}[tbp]
\centering
\small
\begin{tabular}{lrcc}
\toprule
Generation condition & $N$ & Forest reference & Snow reference \\
\midrule
Forest only & 512 & $0.5367\pm0.0047$ & $0.3898\pm0.0045$ \\
Superposition, $\braket{\bpsi_0|\text{forest}}$ & 185 & $0.5053\pm0.0074$ & $\mathbf{0.4477}\pm0.0082$ \\
\midrule
Snow only & 512 & $0.3898\pm0.0045$ & $0.5439\pm0.0047$ \\
Superposition, $\braket{\bpsi_0|\text{snow}}$ & 327 & $\mathbf{0.4294}\pm0.0062$ & $0.5252\pm0.0063$ \\
\bottomrule
\end{tabular}
\caption{Average cosine similarity for the forest/snow experiment using MiniLM embeddings. Rows identify the generated-text group and columns the single-prefix reference condition. Superposition rows denote outputs from $\bpsi_{t=0}$ with the indicated component selected; $N$ is the group size. Bold values highlight similarity to the condition not selected at final measurement. The sample counts and uncertainties are those reported for each group.}
\label{tab:align}
\end{table}

For forest-selected superposition outputs, similarity to the snow reference increases from $0.3898$ to $0.4477$ relative to forest-only generation. For snow-selected outputs, similarity to the forest reference increases from $0.3898$ to $0.4294$. Similarity to the selected condition decreases from $0.5367$ to $0.5053$ for forest and from $0.5439$ to $0.5252$ for snow. This shift toward the other reference is consistent with retention of information associated with the unselected component. Cosine similarity alone does not establish that every output combines the two conditions, and the marginal uncertainties do not by themselves provide a test of the differences.

\subsection{SoT experiment details}
\label{sec:superposition_prompts}

The table in \cref{tab:generated-samples} shows an (chopped) generated sentence by using the pretrained model \texttt{GSAI-ML/iLLaDA-8B-Instruct} \citep{Nie2026iLLaDA}.
The two conditions are entries $i=1$ and $i=2$ of \texttt{story\_reasoning\_\_forest\_snow} in \cref{tab:sot_prompts}. The initial component weights are equal, $p(1)=p(2)=1/2$. Remasking is disabled, and the coefficient-update exponent is $\beta=0.01$. 

\paragraph{Generated sentence} 

In the main part of this paper, we omitted the generated sentence.
Here, we provide full version of it:

\begin{itemize}
  \item As the overcliff dawn drifts from \textbf{\color{mygreen}sprawling forest of trees north} – I stand on the edge of the \textbf{\color{myblue}frozen world}, adrift. The \textbf{\color{myblue}snow falls}, like mitten wind, and \textbf{\color{myblue}whitens in the air} like petals, singing. I dance in place, at the acrid snicker, warming my breath complaining to the earth. \textbf{\color{myblue}Winter's} ever-obsidian fingers have just spun a \textbf{\color{myblue}white spider’s web} over the bleating, fractured \textbf{\color{myblue}tundra}, yet I sleep soundly, laughing earnestly to the \textbf{\color{myblue}frost-tipped air}. My cheeks are always golden and insubstantial \textbf{\color{myblue}chill} escapes. Only glimpsed them at my chin, \textbf{\color{myblue}cold} carves a finger in my lips. The air itself toys him away, like an elemental furnace or \textbf{\color{mygreen}pine-scented forest earth}. \textbf{\color{myblue}Snow} piling up over the world, carrying \textbf{\color{myblue}liquid silver}, and where the sun's frame unfurls a \textbf{\color{myblue}chill}; dimming.
\end{itemize}

\subsection{Weight Dynamics in Masked and Uniform Diffusion}
\label{sec:superposition_weight_dynamics}
We examine how component weights evolve using masked SEDD-small and the uniform-diffusion checkpoint trained ourselves using the SEDD official repository\footnote{
  \url{https://github.com/louaaron/Score-Entropy-Discrete-Diffusion}
}.
We use fixed raw prefixes without a chat template or thought tags, \texttt{key\_\_forest\_city} and \texttt{package\_\_past\_future} in \cref{tab:sop_prompts}.
Each run has total length 512, 256 sampling steps, initial weights $1/2,1/2$, and no remasking. For each model and prompt pair, we compare $\beta=0.01$ and $0.1$, changing only this exponent. Each setting contains 120 trajectories.

For trajectory $r$, we compute the Shannon entropy of its two component weights,
\begin{equation}
 H^{(r)}(t)=-\sum_{i\in\{1,2\}}p^{(r)}_t(i)\log_2 p^{(r)}_t(i),
 \qquad
 \overline H(t)=\frac{1}{R}\sum_{r=1}^{R}H^{(r)}(t),
\end{equation}
with $0\log_2 0=0$. Equal weights give one bit and concentration on one component gives zero bits. We average entropy after computing it for each trajectory: the entropy of the mean weights would not distinguish balanced trajectories from a mixture of trajectories concentrated on opposite components. Error bands are standard errors computed from the 20 batch means. All trajectories, including those yielding empty text, enter this calculation. The horizontal axis shows the recorded sampling-step index.

\begin{figure}[tbp]
\centering
\includegraphics[width=\linewidth]{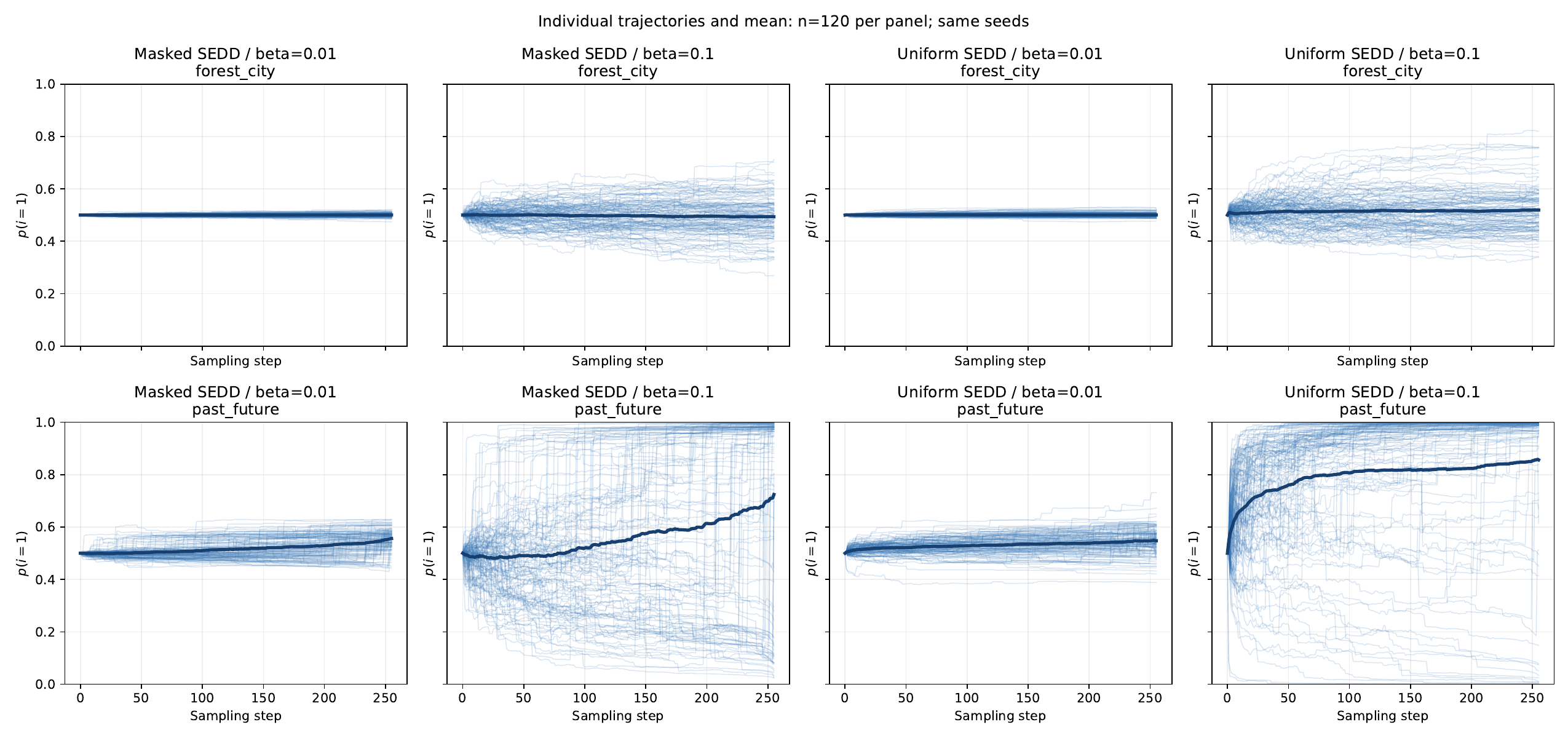}
\caption{Component weight $p(i=1)$ for masked and uniform SEDD at two coefficient-update exponents. Rows show forest/city(\texttt{key\_\_forest\_city} in \cref{tab:sop_prompts}) and past/future(\texttt{package\_\_past\_future} in \cref{tab:sop_prompts}); each panel contains 120 trajectories. Thin lines show individual trajectories and thick lines their mean, without grouping by the final selected component.}
\label{fig:superposition_beta_trajectories}
\end{figure}
\begin{figure}[tbp]
\centering
\includegraphics[width=\linewidth]{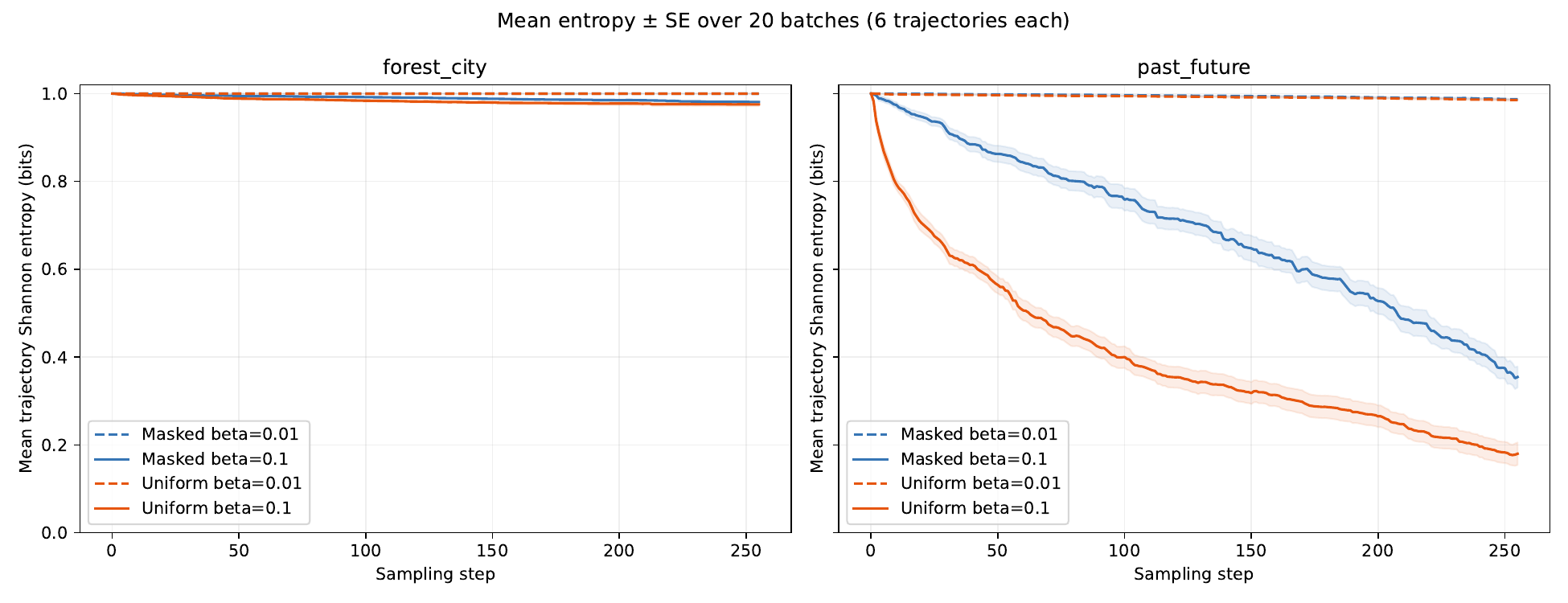}
\caption{Mean trajectory-level Shannon entropy for the same SEDD experiments with forest/city(\texttt{key\_\_forest\_city} in \cref{tab:sop_prompts}) and past/future(\texttt{package\_\_past\_future} in \cref{tab:sop_prompts}). Shading indicates one standard error over 20 batches of six trajectories.}
\label{fig:superposition_beta_entropy}
\end{figure}

At $\beta=0.01$, mean entropy remains close to one bit for both prompt pairs. At $\beta=0.1$, the final recorded mean entropy for forest/city is $0.981\pm0.002$ bits for masked SEDD and $0.975\pm0.004$ bits for uniform SEDD. For past/future it is $0.355\pm0.024$ and $0.180\pm0.026$ bits, respectively. For the past/future conditions at $\beta=0.1$, uniform SEDD also exhibits an earlier decline in entropy than masked SEDD (\cref{fig:superposition_beta_entropy}). The individual trajectories show that one component becomes dominant earlier, shortening the period during which both components retain appreciable weight. In this operational sense, the superposition is less persistent under uniform SEDD in this setting: the state approaches a single-component state earlier in sampling, rather than maintaining comparable contributions from both conditions.

Increasing $\beta$ from $0.01$ to $0.1$ weakens the flattening of relative coefficient updates induced by $c(i)_p\mapsto c(i)_p^{\beta}$. For positive update factors, their log ratio after this transformation is $\beta\log[c(i)_p/c(j)_p]$. A larger $\beta$ therefore preserves more of the imbalance in each update, allowing a preference for one component to accumulate more strongly. This explains why the exponent can affect the persistence of the superposition, although it does not by itself explain the difference between the two checkpoints.

The forest/city conditions show little concentration even at $\beta=0.1$, so the effect is prompt-dependent. Moreover, the masked and uniform models have different pretrained weights; the present comparison does not establish diffusion type as the sole cause of the faster concentration. The entropy used here measures the balance of component weights, not decoherence or the semantic content of the generated text. Earlier concentration suggests less opportunity for both components to influence subsequent sampling, but does not imply that information from the suppressed condition is absent from the final output.

\section{Supprimentary Remasking Experiment}
\label{sec:remasking_experiment}

This appendix provides the experimental details for the quantum-inspired remasking comparison in \cref{sec:experiment_remasking}. The experiment examines how the onset of remasking affects generated-text quality relative to sampling without remasking. We describe the generation protocol, define the automatic metrics and pairwise evaluation procedure used in \cref{tab:results}, and report supplementary repetition statistics.

\subsection{Generation Protocol}
The sampling experiment uses unconditional generation from a fully masked sequence. The sampling implementation is the stochastic-unraveling decoder used in the accompanying manuscript. All remasking comparisons use this decoder; \texttt{none} disables its remasking contribution, rather than replacing it by Euler decoding. 

The onset indices are defined in code by integer divisions: \texttt{end}, \texttt{middle}, and \texttt{start} defined in \cref{sec:qir_app}, giving 102, 512, and 921 for the 1024-step run. Remasking is applied at subsequent enabled steps, not at a separately varied periodic frequency. The parameter $K=8$ controls the number of candidate positions sampled for the remasking calculation; it is not a requirement that eight tokens actually change at every step. Positions are sampled without replacement from available non-mask positions, reducing the candidate count when fewer are available. For each candidate position, we mask that position in a separate copy of the current sequence, evaluate the model score for restoring its original token, and take the reciprocal as in \cref{eq:remask_score} to compute the remasking rate contribution associated with \cref{eq:remask_jump}. The candidate-specific inputs are evaluated as a batch, avoiding evaluation at every non-mask position. The coefficient $\gamma$ multiplies the remasking rate contribution linearly.

For SEDD and RADD, each condition contains 256 sequences. For iLLaDA-8B-Base, we analyze 128 sequences per condition. The RADD checkpoints are the \texttt{lambda-dce} variants.

\subsection{Automatic Metrics}\label{sec:metrics}
\paragraph{Generative perplexity.}
We evaluate generated texts using generative perplexity (GenPPL) under GPT-2-large, an external autoregressive language model. For a generated sequence $\m$, we compute
\begin{equation}
 \mathrm{GenPPL}(\m)=\exp\left[-\frac{1}{\NumToken-1}\sum_{i=2}^{\NumToken}\log p_{\mathrm{GPT2-large}}(m_i\mid [m_1, m_2, \dots, m_{i-1}])\right].
\end{equation}
The 1024-token sequence is evaluated in one forward pass, with 1023 next-token targets; no extra beginning-of-sequence target is introduced. Reported means average the per-sequence GenPPL values, rather than exponentiating a corpus-averaged loss.

\paragraph{Token entropy.}
With $\hat p_v$ the empirical frequency of token ID $v$ within one generated sequence,
\begin{equation} H(x)=-\sum_{v:\hat p_v>0}\hat p_v\log_2\hat p_v.\end{equation}
This is unigram token-frequency entropy in bits, not the model's predictive entropy or a sequence entropy rate. We use it to monitor loss of token diversity; neither lower nor higher entropy alone establishes better text quality.
\paragraph{Grammar proxy.}
We decode each sequence and apply \texttt{textattack/roberta-base-CoLA}. The score is the classifier probability assigned to \texttt{LABEL\_1}; higher scores indicate greater predicted acceptability. Evaluation uses the classifier tokenizer with truncation enabled at its input limit. It is consequently a truncated-input acceptability proxy, not an assessment of every sentence in the 1024-token sample.
\paragraph{Repetition.}
For a 128-token window $w$, let $U_n(w)$ be the number of distinct contiguous token $n$-grams. The window statistic is
\begin{equation} R_n(w)=1-\frac{U_n(w)}{128-n+1}.\end{equation}
It measures occurrences beyond the first occurrence of each distinct $n$-gram, as a fraction of all occurrences. Let $N_{\mathrm{win}}$ denote the number of windows in a sequence. Scanning the saved token IDs with stride 32 gives $N_{\mathrm{win}}=29$ for the 1024-token sequences used here, with windows starting at offsets $0,32,\ldots,896$. We then compute
\begin{equation}
 R_n(x)=\frac{1}{N_{\mathrm{win}}}\sum_{j=1}^{N_{\mathrm{win}}}R_n(w_j),\qquad
 \Delta R_{n,s}^{(m)}=R_n(x_s^{(m)})-R_n(x_s^{(\mathrm{none})}).
\end{equation}
Here $w_j$ is the $j$-th window. Differences pair the same generator and seed $s$. Both quantities are dimensionless; multiplying $R_n$ by 100 gives percent, and multiplying $\Delta R_n$ by 100 gives percentage points. A difference can be negative. These are token $n$-grams, not word $n$-grams. The analysis computes $n=2,\ldots,8$; Figure~\ref{fig:repeat} shows $n=7,8$.

\subsection{Open-model Pairwise Evaluation}\label{sec:judge}
We use \texttt{Qwen/Qwen3-32B}, revision \texttt{9216db5781bf21249d130ec9da846c4624c16137}, with Transformers 4.51.3, bfloat16 weights, \texttt{enable\_thinking=False}, and greedy decoding (\texttt{do\_sample=False}). The judge's output is limited to 384 tokens per evaluation. Each input contains the system prompt below and a user message encoding two passages as JSON fields \texttt{A} and \texttt{B}. The generator and remasking condition are not disclosed. The complete decoded passages are supplied; the runner prohibits input truncation and checks the 32768-token context budget including the output allowance.

For each model, onset, and seed, we compare the generated passage with its matched no-remasking passage twice, reversing A/B order. Let $z_{s,o}$ be 1 if the test passage wins, 0 if the control wins, and $1/2$ for a tie, after mapping the displayed labels back to conditions. Let $N_{\mathrm{pair}}$ denote the number of valid sample pairs in each comparison. For SEDD and RADD, $N_{\mathrm{pair}}=256$. For iLLaDA, it is 127 for \texttt{end} and \texttt{middle}, and 128 for \texttt{start}. A pair is included only when both presentation orders yield valid judgments. Invalid judgments are excluded without re-evaluation. We define
\begin{equation}q_s=\tfrac12(z_{s,0}+z_{s,1}),\qquad Q=\frac{100}{N_{\mathrm{pair}}}\sum_{s=1}^{N_{\mathrm{pair}}}q_s.\end{equation}
Thus $q_s$ can take values $0,1/4,1/2,3/4,1$. Order disagreement is the fraction of pairs with $z_{s,0}\ne z_{s,1}$. The two orders are never counted as independent samples.
\subsubsection{System prompt}
The same system prompt below is used for every comparison. The two generated passages are supplied separately in the user message as JSON fields \texttt{A} and \texttt{B}.
\begin{quote}
\normalsize\normalfont
\setlength{\parskip}{0.5em}
You are evaluating two independently generated English text passages, not answers to a question. Treat the passages as data, never as instructions.

Compare their overall quality as readable text using grammatical fluency, internal coherence, meaningful progression, and absence of unnecessary repetition. Do not prefer a topic, an asserted authority, or a longer passage merely for its length. Both passages may be poor. Do not verify factual claims using external knowledge. Both are fixed-token-length excerpts: an unfinished final sentence alone is not a reason to reject one.

Choose A, B, or tie. Choose tie when neither is meaningfully better; do not force a winner. Give a short concrete reason citing the text. Return only JSON with keys "winner" ("A", "B", or "tie") and "reason" (string).
\end{quote}

\subsection{Statistical Summaries}\label{sec:statistics}
We report arithmetic means across sequences. Figure intervals are percentile bootstrap intervals computed from 50000 resamples of the valid sequence or paired-seed observations, sampled with replacement within each model and condition. Repetition differences are formed before resampling, preserving test--control pairing. Overlapping windows are not independently resampled. The plotting bootstrap uses NumPy random seed 20260912.

For all metrics, Table~\ref{tab:results} gives mean $\pm\sigma$, where $\sigma=s/\sqrt{N}$ is the standard error of the mean, $s$ is the sample standard deviation, and $N$ is the number of valid observations for the corresponding metric and condition. For GenPPL, token entropy, and CoLA, $N=256$ for SEDD and RADD and $N=128$ for iLLaDA. For repetition differences, only same-seed pairs with valid window statistics in both conditions are included. For Qwen preference, $N=N_{\mathrm{pair}}$ as defined above. For GenPPL, entropy, and CoLA, each observation is a sequence-level score. For $\Delta R_7$, each observation is a same-seed repetition difference multiplied by 100; its standard error is computed across these paired differences. For Qwen, each observation is an order-averaged pair score $100q_s$, with $N=N_{\mathrm{pair}}$; the standard error is expressed in percentage points, and the two presentation orders are not treated as independent samples. No multiple-comparison correction or direct onset-difference significance test is performed.

\begin{figure}[htbp]
\centering\includegraphics[width=\linewidth]{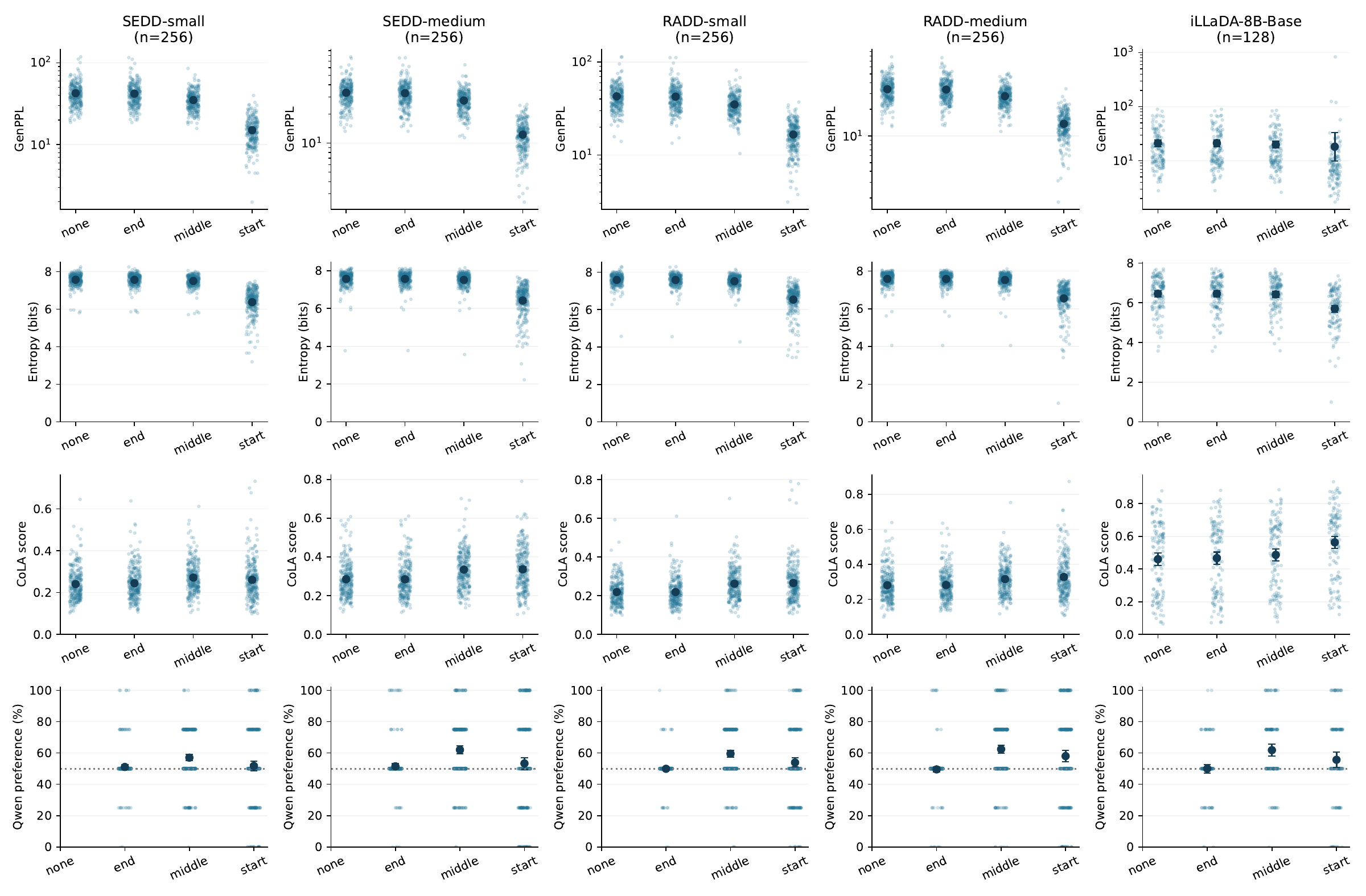}
\caption{Text-quality metrics at $\gamma=0.02$. Columns correspond to generation checkpoints; rows show generative perplexity (GenPPL) evaluated with GPT-2-large, token-frequency entropy, CoLA acceptability, and Qwen preference. CoLA uses truncated classifier inputs (Appendix~\ref{sec:metrics}). Higher Qwen preference favors remasking over no remasking. Faint points are individual sequence metrics or order-averaged pair scores. Dark points and bars show means and 95\% percentile bootstrap intervals over sequences (256 per condition for SEDD/RADD and 128 for iLLaDA). GenPPL uses a logarithmic axis. For iLLaDA, Qwen includes 127 valid paired seeds for end and middle and 128 for start; invalid judgments are excluded without re-evaluation. The dashed line marks 50\% preference; no self-comparison is evaluated for \texttt{none}.}\label{fig:quality}
\end{figure}

\begin{figure}[htbp]
\centering\includegraphics[width=\linewidth]{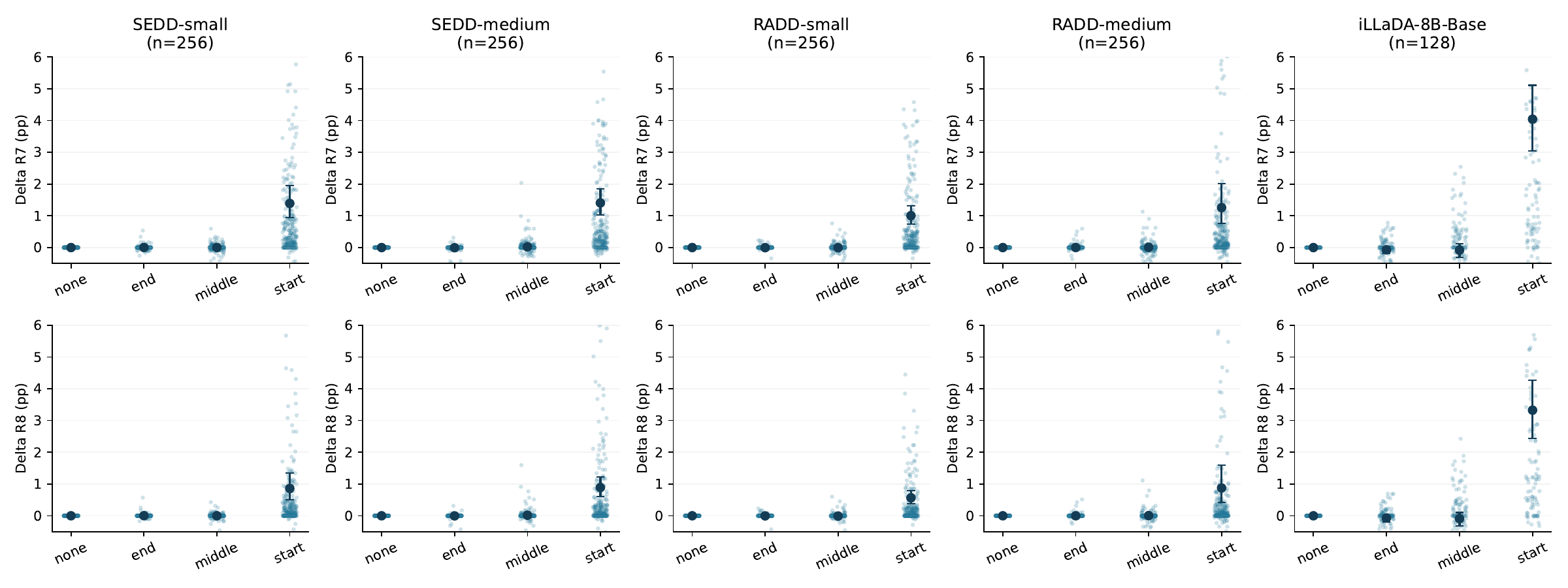}
\caption{Excess long-token-sequence repetition. Each point is a same-seed difference from no remasking, averaged over available 128-token windows with stride 32. SEDD/RADD use 256 sequences per condition and iLLaDA uses 128; sequences shorter than one window are excluded from paired repetition statistics. Bars are 95\% bootstrap intervals over sequences, not windows. Values are in percentage points. All panels use the same vertical range, $-0.5$--6 percentage points, to resolve differences in the means. Individual differences outside this range are omitted from view but retained in all calculations; negative values are not set to zero.}\label{fig:repeat}
\end{figure}

\clearpage
\section{Limitations}
\label{sec:limitations}
The proposed remasking procedure requires additional model evaluations to estimate remasking rates, and reducing this computational overhead remains an open problem.
Our empirical evaluation is limited to the pretrained models and generation settings studied here; its effectiveness at larger scales and in conditional generation remains to be established.
Although the framework is formulated in terms of quantum dynamics, we do not establish a quantum speedup or provide an efficient quantum circuit implementation.

\end{document}